%% file: main.tex
\documentclass[letterpaper]{article} 
\usepackage{aaai2027}  
\nocopyright
\usepackage[hyphens]{url}  
\usepackage{graphicx} 
\usepackage{natbib}  
\usepackage{caption} 
\usepackage{algorithm}
\usepackage{algorithmic}
\usepackage{amsthm}
\usepackage{amsmath}
\usepackage{amssymb}
\usepackage{complexity}
\usepackage{multirow,booktabs}
\usepackage[inline]{enumitem}

\usepackage{subcaption}
\usepackage{xcolor}
\usepackage{tikz}
\usetikzlibrary{shapes, backgrounds, calc, positioning, arrows.meta, decorations.pathmorphing, bending}

\theoremstyle{definition}
\newtheorem{problem}{Problem}
\newtheorem{definition}{Definition}
\newtheorem{theorem}{Theorem}

\newtheorem{lemma}{Lemma}
\newtheorem{proposition}{Proposition}
\theoremstyle{remark}

\newtheorem{example}{Example}
\newtheorem*{claim}{Claim}

\newfloat{algorithmSM}{htb!}{loa}
\floatname{algorithmSM}{Algorithm}

\definecolor{PathBlue}{RGB}{0, 76, 153}
\definecolor{PathRed}{RGB}{192, 0, 0}
\definecolor{PathOrange}{RGB}{255, 128, 0}
\definecolor{PathGreen}{RGB}{0, 153, 0}

\newcommand{\mc}[1]{\mathcal{#1}}
\renewcommand{\G}{\mc{G}} 
\newcommand{\V}{\mc{V}} 
\renewcommand{\E}{\mc{E}} 
\newcommand{\graph}{\G=(\V, \E)}
\renewcommand{\A}{\mc{A}} 
\renewcommand{\o}{\mathbf{o}}
\renewcommand{\c}{\mathbf{c}}
\renewcommand{\S}{\mc{S}}
\renewcommand{\P}{\mc{P}}
\newcommand{\I}{\mc{I}}

\newcommand{\xp}[2]{x_{#1,#2}^{\P}}
\newcommand{\xs}[2]{x_{#1,#2}^{\S}}
\newcommand{\yp}[3]{y_{#1,#2,#3}^{\P}}
\newcommand{\ys}[3]{y_{#1,#2,#3}^{\S}}

\newcommand{\tmin}[1]{t_{#1}}
\newcommand{\dmin}[1]{\delta_{#1}}
\newcommand{\dminPar}[1]{\delta(#1)}
\newcommand{\lb}[1]{l_{#1}}

\newcommand{\phiT}{\Phi(T)}

\usepackage{newfloat}
\usepackage{listings}
\DeclareCaptionStyle{ruled}{labelfont=normalfont,labelsep=colon,strut=off} 
\floatstyle{ruled}
\newfloat{listing}{tb}{lst}{}
\floatname{listing}{Listing}

\usepackage{booktabs}

\title{Pivot-and-Station Multi-Agent Path Finding:\\ Solvability, Complexity, and Algorithms}
\author {
    Andrea Di Nezza\textsuperscript{\rm 1}\equalcontrib,
    Mihir Patel\textsuperscript{\rm 2}\equalcontrib,
    Fabio Fagnani\textsuperscript{\rm 1},
    Sara Bernardini\textsuperscript{\rm 2}
}
\affiliations {

    \textsuperscript{\rm 1}Department of Mathematical Sciences, Politecnico di Torino, Turin, Italy\\
    \textsuperscript{\rm 2}Department of Computer Science, University of Oxford, Oxford, United Kingdom
}

\begin{document}

\maketitle

\begin{abstract}
    Automated high-density storage systems (warehouses, robotic parking, plant logistics, etc.) require fleets of agents to move through scarce task-critical resources and then park without obstructing future operations. We introduce \emph{Pivot-and-Station Multi-Agent Path Finding (PS-MAPF)}, a MAPF variant in which a subset of tasked agents must each visit one of a set of interchangeable pivots (e.g., workstations) before the entire fleet terminates at anonymous stations, one agent per station. We characterize solvability completely: every instance on a 2-edge-connected graph is solvable, and, on arbitrary connected graphs, a structural effective-distance measure relative to the number of unoccupied vertices gives a necessary and sufficient condition. We prove that minimizing station-makespan or station-flowtime is NP-hard already with a single pivot. We present three algorithms, a complete baseline, a SAT-based optimal solver, and Pivot-Prioritized Planning (PPP), the last solving 74-89\% of benchmark instances with makespan and flowtime orders of magnitude below the baseline.
    \end{abstract} 

\input{Sections/introduction.tex}
\input{Sections/prob_statement.tex}
\input{Sections/existence.tex}
\input{Sections/complexity.tex}
\input{Sections/algorithms.tex}
\input{Sections/experimental.tex}

\input{Sections/conclusions.tex}

\bibliography{aaai2027}


\end{document}

%% file: Sections/introduction.tex
\section{Introduction and Related Work}
Infrastructure-based high-density automated storage systems (warehouses, robotic parking, plant logistics; e.g., Ocado, Amazon Sequoia, Versatile, Robotic Parking, WPS) move fleets of \emph{agents} (robots, shuttles, or motorized cells) through a fixed mechanical topology of rails, lifts, conveyors, or grid cells. At any moment, some agents are moving toward a \emph{pivot}: a task-critical resource that must be visited, such as a workstation, a pickup or drop-off point, or a charging interface; in a given system, pivots form a single class of functionally interchangeable resources (systems with several classes give rise to typed pivots; we treat one). The other agents, having visited a workstation or not needing one, must park at a \emph{station}: an admissible terminal position (storage cell, staging area) from which they remain available for future tasks without obstructing traffic; any agent may terminate at any station, one agent per station. Poor terminal placement propagates congestion, and poorly coordinated pivot access causes active agents to queue; the two should therefore be optimized \emph{jointly}: a locally good route to a pivot may leave agents obstructing others' paths or blocking terminal placements, while convenient station assignments may ignore upcoming access to scarce resources.

We call the combined problem \emph{Pivot-and-Station Multi-Agent Path Finding (PS-MAPF)}, a new variant of Multi-Agent Path Finding (MAPF) \cite{stern2019mapf}: on a graph whose vertices include pivots and stations, a subset of agents is \emph{tasked} and must visit at least one pivot (any pivot serves any tasked agent) before reaching its terminal position; the remaining \emph{untasked} agents have no pivot requirement but still occupy space and interact with tasked agents throughout. Every agent must terminate at a distinct station, and a solution is a set of collision-free paths satisfying both the pivot-access and terminal-station requirements.

In classical MAPF \cite{stern2019mapf}, feasibility is polynomial-time decidable via pebble motion \cite{kornhauser1984pebble}, whereas optimizing makespan or flowtime is \NP-hard \cite{yu2013structure}. Optimal solvers are search-based, e.g., CBS \cite{sharon2015cbs}, or reduction-based, encoding MAPF as SAT \cite{surynek2016efficient}, which often dominates in dense settings \cite{acha2022multi, surynek2019unifying}; scalable suboptimal methods include rule-based algorithms \cite{luna2011push, surynek2009novel}, prioritized planning \cite{erdmann1987multiple}, whose modern variants tune agent orderings \cite{silver2005cooperative, ma2019searching}, the near-optimal LaCAM* \cite{okumura2024engineering}, and MAPF-HD \cite{makino2025mapfhd} for high-density settings. Our optimal solver builds on the SAT line, and our fast method is a prioritized planner.

PS-MAPF captures a coordination structure not fully addressed by existing variants \cite{ma2017overview}. In Anonymous MAPF (AMAPF), destinations are unassigned and makespan optimization is polynomial \cite{yu2013multiagent}; in Target Assignment and Path Finding (TAPF), destinations are interchangeable only within subgroups, which is \NP-hard \cite{ma2016tapf}; neither models mandatory access to scarce intermediate resources. Generalized TAPF \cite{nguyen2019generalised} and Multi-Goal MAPF \cite{surynek2021multi} assign each agent or group a specific sequence of targets, whereas PS-MAPF treats pivots as unassigned, interchangeable resources. Multi-Agent Combinatorial Path Finding (MCPF) \cite{ren2021ms, ren2022cbss} is the nearest, combining intermediate targets with anonymous terminal assignment, but the requirement points the opposite way (in MCPF every target must be visited by some agent; in PS-MAPF every tasked agent must visit some pivot, and a pivot no agent needs is never visited). MCPF treats agents symmetrically, while PS-MAPF distinguishes tasked/untasked agents and places the entire fleet. MAPF with Unassigned Agents \cite{felner2026mapfua} distinguishes tasked and untasked agents but does not require anonymous terminal reconfiguration of the whole fleet. Multi-Agent Pickup and Delivery \cite{ma2017lifelong} shares the intermediate requirement but is lifelong with assigned pickup and delivery locations, whereas PS-MAPF is one-shot with anonymous pivots and stations.

This paper makes the following contributions.
\begin{enumerate*}[label=(\roman*)]
\item \emph{A new MAPF variant}, PS-MAPF, motivated by high-density automated storage systems (Sec.~\ref{sec:ps-mapf}).
\item \emph{A complete characterization of solvability} checkable in polynomial time: sufficiency via 2-edge connectivity and, on arbitrary connected graphs, a necessary and sufficient condition comparing an effective-distance measure with the number of unoccupied vertices (Sec.~\ref{sec:existence}).
\item \emph{A sharp complexity threshold}: minimizing station-makespan and station-flowtime is \NP-hard already for a single pivot (Sec.~\ref{sec:complexity}).
\item \emph{Algorithms and experiments}: a complete baseline derived from our constructive solvability proof, a reduction-based optimal solver, and Pivot-Prioritized Planning (PPP), incomplete but fast (prioritized pivot routing plus flow-based station assignment), solving 74--89\% of benchmark instances at orders-of-magnitude lower cost than the baseline (Sec.~\ref{sec:algorithm} and~\ref{sec:experimental_study}).
\end{enumerate*}

%% file: Sections/prob_statement.tex
\section{Problem Statement}
\label{sec:ps-mapf}
We consider a connected, undirected graph $\graph$ without self-loops, with a set of \emph{pivots} $\P \subseteq \V$, a set of \emph{stations} $\S \subseteq \V$, and a set $\A$ of agents moving on the graph. A \emph{configuration} is an injective function $\c: \A \to \V$ assigning to each agent a distinct vertex (hence $|\V| \geq |\A|$). Agents' initial positions are specified by a configuration $\o: \A \to \V$, called the \emph{origin configuration}; we refer to $\o(i)$ as the \emph{origin} of agent~$i$. Agents are partitioned into \emph{tasked agents} $\A^t$, which must pass through one of the pivots in $\P$ before reaching a station, and \emph{untasked agents} $\A^u$, which can go directly to a station. Any pivot serves any tasked agent, and any agent may terminate at any station.

	\begin{definition}
		\label{def:matrix}
		Given a graph $\graph$, a set of pivots $\P \subseteq \V$, a set of stations $\S \subseteq \V$, a set of tasked agents $\A^t$, a set of untasked agents $\A^u$, and an origin function $\o:\A\to \V$, where $\A = \A^t \cup \A^u$, $\A^t \cap \A^u = \emptyset$, $\o$ is injective and $|\S| \geq |\A|$,
		we define a \emph{PS-MAPF Instance} as the tuple
		$\I = \left(\G, \P, \S, \A^t, \A^u, \o \right)$.
	\end{definition}
	
	\begin{figure}[htb!]
		\centering
		\begin{subfigure}{\dimexpr 48\columnwidth/100 \relax}
			\centering
			\begin{tikzpicture}[scale=55/100]
				\fill[yellow!30] (3,0) rectangle (6,2); 
				\fill[orange!30] (2,3) rectangle (3,4); 
				\fill[orange!30] (1,1) rectangle (2,2); 
				\fill[black] (2,1) rectangle (3,3); 
				\draw[step=1cm, black, thin] (0,0) grid (6,4); 
				
				\node[scale=0.8, font=\sffamily\normalsize] at (2.5, 3.5) {$p_1$};
				\node[scale=0.8, font=\sffamily\normalsize] at (1.5, 1.5) {$p_2$};
				\node[scale=0.8, font=\sffamily\normalsize] at (3.5, 1.5) {$s_1$};
				\node[scale=0.8, font=\sffamily\normalsize] at (4.5, 1.5) {$s_2$};
				\node[scale=0.8, font=\sffamily\normalsize] at (5.5, 1.5) {$s_3$};
				\node[scale=0.8, font=\sffamily\normalsize] at (3.5, 0.5) {$s_4$};
				\node[scale=0.8, font=\sffamily\normalsize] at (4.5, 0.5) {$s_5$};
				\node[scale=0.8, font=\sffamily\normalsize] at (5.5, 0.5) {$s_6$};
				
				\node[draw=red, circle, thick, fill=white, inner sep=1pt, font=\sffamily\bfseries] at (3/2, 7/2) {1};
				\node[draw=red, circle, thick, fill=white, inner sep=1pt, font=\sffamily\bfseries] at (1/2, 5/2) {2};
				\node[draw=red, circle, thick, fill=white, inner sep=1pt, font=\sffamily\bfseries] at (1/2, 1/2) {3};
				
				\node[draw=blue, circle, thick, fill=white, inner sep=1pt, font=\sffamily\bfseries] at (7/2, 7/2) {4};
			\end{tikzpicture}
			\vspace{-1mm}
			\caption{}
			\label{fig:matrix_instance_example}
		\end{subfigure}
		\hfill
		\begin{subfigure}{\dimexpr 48\columnwidth/100 \relax}
			\centering
			\begin{tikzpicture}[scale=55/100]
				\fill[yellow!30] (3,0) rectangle (6,2); 
				\fill[orange!30] (2,3) rectangle (3,4); 
				\fill[orange!30] (1,1) rectangle (2,2); 
				\fill[black] (2,1) rectangle (3,3); 
				\draw[step=1cm, black, thin] (0,0) grid (6,4); 
				
				\tikzset{
					pathstyle/.style={-, >=stealth, line width=1pt, rounded corners=1pt, opacity=80/100}, 
					timemarker/.style={circle, fill=#1, text=white, inner sep=0pt, minimum size=7pt, font=\sffamily\bfseries\fontsize{7}{7}\selectfont, thin}
				}
				
				\def\sfas{1/4}
				
				\draw[pathstyle, PathRed] 
				([shift={(-\sfas,-\sfas)}]3/2, 7/2) -- 
				([shift={(-\sfas,-\sfas)}]5/2, 7/2) -- 
				([shift={(-\sfas,-\sfas)}]7/2, 7/2) -- 
				([shift={(-\sfas,-\sfas)}]7/2, 5/2) -- 
				([shift={(-\sfas,-\sfas)}]7/2, 3/2) -- 
				([shift={(-\sfas,-\sfas)}]9/2, 3/2);
				\node[timemarker=PathRed] at ([shift={(-\sfas,-\sfas)}]3/2, 7/2) {0};
				\node[timemarker=PathRed] at ([shift={(-\sfas,-\sfas)}]5/2, 7/2) {1};
				\node[timemarker=PathRed] at ([shift={(-\sfas,-\sfas)}]7/2, 7/2) {2};
				\node[timemarker=PathRed] at ([shift={(-\sfas,-\sfas)}]7/2, 5/2) {3};
				\node[timemarker=PathRed] at ([shift={(-\sfas,-\sfas)}]7/2, 3/2) {4};
				\node[timemarker=PathRed] at ([shift={(-\sfas,-\sfas)}]9/2, 3/2) {5};
				
				\draw[pathstyle, PathOrange] 
				([shift={(\sfas,\sfas)}]1/2, 5/2) -- 
				([shift={(\sfas,\sfas)}]1/2, 7/2) --       
				([shift={(\sfas,\sfas)}]3/2, 7/2) --       
				([shift={(\sfas,\sfas)}]5/2, 7/2) --       
				([shift={(\sfas,\sfas)}]7/2, 7/2) -- 
				([shift={(\sfas,\sfas)}]7/2, 5/2) --      
				([shift={(\sfas,\sfas)}]7/2, 3/2);        
				\node[timemarker=PathOrange] at ([shift={(\sfas,\sfas)}]1/2, 5/2) {0};
				\node[timemarker=PathOrange] at ([shift={(\sfas,\sfas)}]1/2, 7/2) {1};
				\node[timemarker=PathOrange] at ([shift={(\sfas,\sfas)}]3/2, 7/2) {2};
				\node[timemarker=PathOrange] at ([shift={(\sfas,\sfas)}]5/2, 7/2) {3};
				\node[timemarker=PathOrange] at ([shift={(\sfas,\sfas)}]7/2, 7/2) {4}; 
				\node[timemarker=PathOrange] at ([shift={(\sfas,\sfas)}]7/2, 5/2) {5};
				\node[timemarker=PathOrange] at ([shift={(\sfas,\sfas)}]7/2, 3/2) {6};
				
				\draw[pathstyle, PathBlue] 
				([shift={(\sfas,-\sfas)}]7/2, 7/2) -- 
				([shift={(\sfas,-\sfas)}]9/2, 7/2) --       
				([shift={(\sfas,-\sfas)}]9/2, 5/2) --       
				([shift={(\sfas,-\sfas)}]9/2, 3/2) -- 
				([shift={(\sfas,-\sfas)}]9/2, 1/2);                              
				\node[timemarker=PathBlue] at ([shift={(\sfas,-\sfas)}]7/2, 7/2) {0}; 
				\node[timemarker=PathBlue] at ([shift={(\sfas,-\sfas)}]9/2, 7/2) {1};
				\node[timemarker=PathBlue] at ([shift={(\sfas,-\sfas)}]9/2, 5/2) {2};
				\node[timemarker=PathBlue] at ([shift={(\sfas,-\sfas)}]9/2, 3/2) {3};
				\node[timemarker=PathBlue] at ([shift={(\sfas,-\sfas)}]9/2, 1/2) {4};
				
				\draw[pathstyle, PathGreen] 
				([shift={(-\sfas,\sfas)}]1/2, 1/2) -- 
				([shift={(-\sfas,\sfas)}]1/2, 3/2) -- 
				([shift={(-\sfas,\sfas)}]3/2, 3/2) -- 
				([shift={(-\sfas,\sfas)}]3/2, 1/2) -- 
				([shift={(-\sfas,\sfas)}]5/2, 1/2) -- 
				([shift={(-\sfas,\sfas)}]7/2, 1/2);
				\node[timemarker=PathGreen] at ([shift={(-\sfas,\sfas)}]1/2, 1/2) {0};
				\node[timemarker=PathGreen] at ([shift={(-\sfas,\sfas)}]1/2, 3/2) {1};
				\node[timemarker=PathGreen] at ([shift={(-\sfas,\sfas)}]3/2, 3/2) {2};
				\node[timemarker=PathGreen] at ([shift={(-\sfas,\sfas)}]3/2, 1/2) {3};
				\node[timemarker=PathGreen] at ([shift={(-\sfas,\sfas)}]5/2, 1/2) {4};
				\node[timemarker=PathGreen] at ([shift={(-\sfas,\sfas)}]7/2, 1/2) {5};
			\end{tikzpicture}
			\vspace{-1mm}
			\caption{}
			\label{fig:matrix_solution_example}
		\end{subfigure}
		
		\caption{Example of a PS-MAPF instance $\I$ and its solution $\Pi$ on a grid graph with black obstacles. 
			Pivots $\P=\{p_1, p_2\}$ and storage destinations $\S=\{s_1,..,s_6\}$ are highlighted in orange and yellow, respectively.
			In \ref{fig:matrix_instance_example}, the problem instance, showing the labeled positions of pivots and destinations, along with the starting positions of tasked agents (red circles 1, 2, 3) and the untasked agent (blue circle 4).
			In \ref{fig:matrix_solution_example}, the corresponding solution, showing the non-conflicting paths $\Pi=(\pi_1,\pi_2,\pi_3,\pi_4)$ toward their targets, with time markers indicating the agent positions at each time step.}
		\label{fig:full_example}
	\end{figure}
	
	\begin{example}
		\textit{Figure~\ref{fig:matrix_instance_example} illustrates instance $\I = \left(\G, \P, \S, \A^t, \A^u, \o \right)$, where $\G$ is a grid graph with black cells representing obstacles; pivots $\P = \{p_1, p_2\}$ (orange), stations $\S = \{s_1, .., s_6\}$ (yellow), tasked agents $\A^t = \{1, 2, 3\}$ (red circles) and untasked agents $\A^u = \{4\}$ (blue circle), with each agent starting at the indicated position.}
	\end{example}

	A \emph{path} in $\graph$ from $x\in\V$ to $y\in\V$ is a function $\pi:[0..T]\to \V$, $T \in \mathbb{N}$, with $\pi(0)=x$, $\pi(T)=y$, and, for every $t\in [0..T-1]$, $(\pi(t),\pi(t+1))\in\E$ (moving) or $\pi(t+1)=\pi(t)$ (waiting); $T$ is the path's \emph{length}.
	The \emph{distance} $d(x,y)$ between two vertices $x$ and $y$ is the minimum length among all paths between them. Given $\mc W\subseteq \mc V$, the \emph{subgraph induced by} $\mc W$ is $\mc G[\mc W]=(\mc W, \mc E\cap (\mc W\times\mc W))$.

	Movements must avoid two types of \emph{conflicts} between paths $\pi_i,\pi_j:[0..T]\to \V$. A \emph{vertex conflict} occurs at time $t\in[0..T]$ if $\pi_i(t)=\pi_j(t)$; a \emph{swapping conflict} occurs at time $t\in[0..T-1]$ if $\pi_i(t)=\pi_j(t+1)$ and $\pi_i(t+1)=\pi_j(t)$, with $\pi_i(t)\neq \pi_i(t+1)$ and $\pi_j(t)\neq \pi_j(t+1)$ (the two agents traverse the same edge in opposite directions).
	Paths $\pi_i$ and $\pi_j$ are \emph{conflict-free} if neither conflict occurs at any time step; the definitions extend to paths of different lengths by padding the shorter one with wait actions.

	For an agent's path $\pi:[0..T]\to \V$, the \emph{time-at-pivot} is the first time at which the agent reaches a pivot, $t_\P(\pi) = \min \{t \in [0..T] \mid \pi(t) \in \P\}$, and the \emph{time-at-station} is the first time at which the agent reaches a station vertex and never leaves it, $t_\S(\pi)=\min\{t\in [0..T]\mid \exists q\in\S\ \forall \tau\in[t.. T],\ \pi(\tau)=q\}$; for both, $\min\emptyset=\infty$.
	An untasked agent $i\in \A^u$'s path $\pi$ is \emph{successful} if $\pi(0)=\o(i)$ and $t_\S(\pi)<\infty$; a tasked agent $i\in \A^t$'s path $\pi$ is \emph{successful} if $\pi(0)=\o(i)$, $t_\P(\pi)<\infty$ and $t_\S(\pi)<\infty$.

\begin{definition}
\label{def:solution}
Given a PS-MAPF instance $\I = \left(\G, \P, \S, \A^t, \A^u, \o \right)$, a tuple of paths $\Pi = \{\pi_i\}_{i\in \A}$, all of the same length $T$, is called a \emph{feasible motion} of length $T$ if all paths are pairwise conflict-free, and a \emph{successful motion} if, moreover, all paths are successful.
A successful motion $\Pi$ is also called a \emph{solution} for $\I$.
We denote the \emph{set of all solutions} for $\I$ by $Sol(\I)$.
\end{definition}

Given a feasible motion $\Pi = \{\pi_i\}_{i \in \A}$ of length $T$, $\Pi(t)$ is the configuration $i \mapsto \pi_i(t)$ (well-defined since $\Pi$ is conflict-free); a feasible motion need not start at $\o$. 

	\begin{example}
	\textit{Figure~\ref{fig:matrix_solution_example} depicts a solution $\Pi = (\pi_1,\pi_2,$ $\pi_3,\pi_4)$ for the instance $\I$ introduced in Figure~\ref{fig:matrix_instance_example}. The non-conflicting paths are illustrated by the colored trajectories, with time markers indicating the time steps. Tasked agents 1, 2, and 3 successfully reach a pivot with $t_\P$ values of 1, 3, and 2, respectively, before reaching their storage destinations with $t_\D$ values of 5, 6, and 5, respectively. The untasked agent 4 reaches its storage destination with a $t_\S$ value of 4.}
	\end{example}

Below, we define two cost functions based on the time all agents reach their stations.

	\begin{definition}
		\label{def:solution-cost-dest}
		Given a PS-MAPF instance $\I$ and a solution $\Pi = \{\pi_i\}_{i\in \A}$, the \emph{Station-Makespan} and the \emph{Station-Flowtime} of $\Pi$ are
		$M_\S^{\I}(\Pi)=\max_{i\in A} t_\S(\pi_i)$ and
		$\Sigma_\S^{\I}(\Pi)=\sum_{i\in A} t_\S(\pi_i)$.
	\end{definition}
	
	\begin{example}
\textit{The example solution in Figure~\ref{fig:matrix_solution_example} has $M_\S^{\I}(\Pi)=6$ and $\Sigma_\S^{\I}(\Pi)=20$, which are all minimal for this instance.}
	\end{example}

We aim to find optimal solutions w.r.t.\ these functions.

%% file: Sections/existence.tex
\section{Solvability}
\label{sec:existence}
We analyze when a PS-MAPF instance admits a solution. When no agent is tasked ($\A^t=\emptyset$ and $\A=\A^u$), PS-MAPF coincides with \emph{anonymous} MAPF~\cite{yu2013multiagent}, and we call the instance \emph{pivot-free}. The following proposition follows from classical results on the \emph{unlabeled pebble motion} problem~\cite{kornhauser1984pebble}.

\begin{proposition}
	\label{prop:reachability}
	A pivot-free PS-MAPF instance $\I = \left(\G, \P, \S, \emptyset, \A^u, \o \right)$ always admits a solution ($Sol(\I)\neq\emptyset$).
\end{proposition}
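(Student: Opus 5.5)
The goal is a conflict-free motion that moves the unlabeled set of occupied vertices $\o(\A)$ onto some subset of $\S$ of size $|\A|$. Since pivots play no role here, this is exactly unlabeled pebble motion on the connected graph $\G$. Its feasibility is classical, following \cite{kornhauser1984pebble} or \cite{yu2013multiagent}. I will nevertheless sketch a self-contained constructive argument, because the conflict model here (vertex and swapping conflicts, one move per agent per step) must be matched. The intended invariant is that every agent that reaches a station is \emph{frozen} and never moves again. This guarantees $t_\S(\pi_i)<\infty$ once all agents are frozen.

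Fix a spanning tree $T$ of $\G$ and pick a target set $S'\subseteq\S$ with $|S'|=|\A|$. I proceed by induction on the number of agents not yet frozen at a vertex of $S'$. I choose the next target $s\in S'$ to be a vertex that is a leaf of the subtree $T'$ obtained from $T$ by deleting frozen vertices. To make this work, I will instead freeze targets in an order that keeps the remaining graph connected. Concretely, I order the vertices of $T$ so that each target $s$ is removed only when it is a leaf of the current tree, by processing the vertices of $S'$ in a leaf-peeling order of the minimal subtree spanning them. I also allow non-target leaves containing no agents to be pruned freely.

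To fill a target leaf $s$ of the current tree $T'$, I take the closest unfrozen agent $a$ in $T'$, measured along the unique tree path $u_0=\o\text{-position of }a,u_1,\dots,u_k=s$. Any vertex $u_j$ with $j>0$ on this path is unoccupied by minimality of $a$, except that vertices closer to $s$ may hold agents farther in hop count. I handle this by the standard \emph{shift} trick: let $a'$ be the agent on the path nearest to $s$. It moves forward into free vertices until it reaches $s$, one step at a time. This move is legal since the vertices ahead are empty, so there are no vertex or swapping conflicts. In effect the occupied set changes by moving one pebble along an empty segment. Relabeling is harmless because agents are anonymous. After $s$ is filled, I freeze it and delete it from $T'$. $T'$ stays connected because $s$ was a leaf, and the number of unfrozen agents equals the number of remaining targets, so the induction continues.

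The main obstacle is the ordering and connectivity bookkeeping. First, I must ensure that a remaining unfrozen agent is always reachable within $T'$; this holds because $T'$ stays a tree containing all unfrozen agents, since only leaves holding frozen agents are removed. Second, I must ensure that each step moves an agent only along empty vertices. Choosing the occupied vertex nearest to $s$ makes the segment to $s$ empty, which resolves this. Concatenating the steps and padding with waits gives equal-length paths that are conflict-free and successful, so $Sol(\I)\neq\emptyset$.
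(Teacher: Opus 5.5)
Your appeal to the classical unlabeled pebble-motion result is all the paper does, and it already suffices. A pebble move, one agent stepping into an empty neighbour per time step, can create neither a vertex nor a swapping conflict, so there is no mismatch of conflict models that needs a separate argument.

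The self-contained construction you add has a genuine gap. It needs, at every stage, a target $s\in S'$ that is a leaf of the current tree $T'$. After pruning empty non-target leaves, no such $s$ need exist, because every leaf of $T'$ can be an \emph{occupied} non-target vertex. Take $\G$ to be the path with vertices $u_1,u_2,s_1,s_2,s_3,w$ in this order, agents at $u_1,u_2,w$, and $S'=\{s_1,s_2,s_3\}$. The only leaves are $u_1$ and $w$, so your rule cannot fire. The fallback of using a leaf of the minimal subtree spanning $S'$ does not repair this. Here $s_1$ is such a leaf and its nearest agent is at $u_2$. Once $s_1$ is filled, frozen and deleted, the agent at $u_1$ is separated from the remaining targets $s_2,s_3$. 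The sentence ``$T'$ stays connected because $s$ was a leaf'' is exactly the step that fails.

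The missing ingredient is a rule for an occupied non-target leaf $v$. The unfrozen agents and the unfilled targets are equal in number and all lie in $T'$, and $v\notin S'$, so $T'$ has an empty vertex. Let $w$ be the empty vertex nearest to $v$. Shift the agents on the tree path from $v$ to $w$ forward one at a time, starting with the one adjacent to $w$. Every move enters an empty vertex, so it is conflict-free, and afterwards $v$ is empty and can be pruned.

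With this case added, you may process an arbitrary leaf of $T'$ at each step. If it is a target, fill it from the nearest unfrozen agent and freeze it; otherwise evacuate it if necessary and prune it. Deleting only leaves keeps $T'$ a tree containing every unfrozen agent and every unfilled target, and induction on the number of vertices of $T'$ completes the proof.

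Separately, your shift-trick paragraph is muddled. If $a$ is the unfrozen agent nearest to $s$, the vertices after $a$ on the path to $s$ are automatically empty, so the agent $a'$ you introduce is just $a$.
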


Prop.~\ref{prop:reachability} implies that solvability never depends on where the stations lie: once the tasked agents have completed their pivot visits, routing the whole fleet onto $\S$ is a pivot-free instance. Only the pivot requirement can obstruct solvability, e.g., on a path graph, no two agents can ever exchange their relative order, so 
if a single pivot is located at an endpoint, and there are two tasked agents, the farther one can never reach the pivot, no matter how many vertices are unoccupied (see Figure \ref{fig:existence}). 

\begin{figure}[htbp]
\centering
\includegraphics[height=0.4cm]{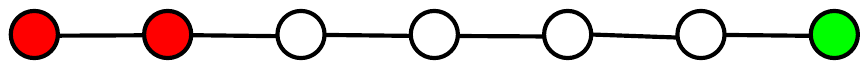}
\caption{An unsolvable instance: red vertices are origins of tasked agents, the green vertex is the only pivot, every vertex is a station.}
\label{fig:existence}
\end{figure}

To present the results, we first introduce some preliminary notation. 
A path $\gamma:[0..T]\to \V$ is called \emph{simple} if $\gamma(t)\neq \gamma(t')$ for every $t, t'\in [0..T]$ with $t\neq t'$ and $\{t, t'\}\neq \{0, T\}$. 
A simple path is called a \emph{cycle} if $T>2$ and $\gamma(0)=\gamma(T)$. Often, a cycle $\gamma$ is identified with the set of edges $\{(\gamma(t), \gamma(t+1))\,|\, t=[0..T-1]\}$.
We can associate every cycle $\gamma:[0..T]\to \V$ with $T\geq 3$ with a feasible length-one motion \emph{rotation}, $\Pi^\gamma=\{\pi^\gamma_i\}_{i\in \A}$, defined by a synchronized rotation of all agents present in the vertices of the cycle. 
Formally, for $i\in\A$ in position $\gamma(t_i)$ for some $t_i\in[0..T-1]$, we put $(\pi^\gamma_i(0), \pi^\gamma_i(1))= (\gamma(t_i), \gamma(t_i+1))$. For the remaining $i\in\A$ whose location is outside the cycle, we instead put $\pi^\gamma_i(0)=\pi^\gamma_i(1)=\o(i)$. Finally, we introduce two operations on motions. If $\Pi= \{\pi_i\}_{i\in \A}$ is a motion of length $T$, the \emph{inversion} of $\Pi$ is the motion $\Pi^-= \{\pi^-_i\}_{i\in \A}$ of length $T$  where $\pi^-_i(t)=\pi_i(T-t)$ for $t=0..T$. We also have a natural concept of motion concatenation.
Two tuples of paths $\Pi^1= \{\pi^1_i\}_{i\in \A}$ and $\Pi^2= \{\pi^2_i\}_{i\in \A}$, with $\pi_i^r:[0..T_r]\to\V$ for $i\in\A$ and $r=1,2$, are called \emph{concatenable} if $\pi^1_i(T_1)=\pi_i^2(0)$ for every $i\in\A$. 
We define their concatenation $\Pi^2\circ\Pi^1=\{\pi^2_i\circ\pi_i^1\}_{i\in \A}$ by putting $(\pi^2_i\circ\pi_i^1)(t)=\pi_i^1(t)$ for $t\leq T_1$ and $(\pi^2_i\circ\pi_i^1)(t)=\pi_i^2(t-T_1)$ for $T_1<t\leq T_1+T_2$. 
The inversion of a feasible motion or the concatenation of concatenable feasible motions is always feasible. 

We also recall a classical concept from graph theory. Given a graph $\graph$, we say that two vertices $u,v\in\V$ are \emph{2-edge connected} if the removal of any single edge from $\G$ does not disconnect $u$ from $v$. By Menger's theorem \cite{diestel2017graph}, this condition is equivalent to saying that there are two paths with no common edge linking $u$ to $v$ (such paths are called \emph{edge-independent}). Moreover, $\G$ is \emph{2-edge connected} if any pair of vertices is 2-edge connected. 

\subsection{Existence on 2-Edge-Connected Graphs}
Grids, warehouses, and similar layouts are $2$-edge connected, the case dominating in practice; there, solvability is always guaranteed: Thm.~\ref{cor:general-solutions}'s per-agent hypothesis holds for every instance on a $2$-edge-connected graph with $\P\neq\emptyset$.
\begin{theorem}
	\label{cor:general-solutions}
	Let $\I = \left(\G, \P, \S, \A^t, \A^u, \o \right)$ be a PS-MAPF instance in which, for every tasked agent $i$, there exists a pivot $p\in\P$ such that $\o(i)$ and $p$ are $2$-edge connected.
	Then, the instance admits a solution ($Sol(\I)\neq\emptyset$).
\end{theorem}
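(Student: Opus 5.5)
The plan is to treat the tasked agents one at a time, always starting from and returning to the origin configuration $\o$. For each tasked agent $i$ I would build a feasible motion that starts and ends at $\o$, along which $i$ visits a pivot at some intermediate time. Concatenating these motions makes every tasked agent visit a pivot while the fleet ends up back at $\o$. The remaining task of placing the fleet onto $\S$ is then the pivot-free instance $(\G,\P,\S,\emptyset,\A,\o)$. This is solvable by Prop.~\ref{prop:reachability}, and its solution is appended at the end. Since concatenations of concatenable feasible motions are feasible, and $t_\P$ is a first-hitting time, the final concatenation is a solution.

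The key tool is the rotation motion $\Pi^\gamma$ along a cycle $\gamma$, applied to the current configuration; I read the paper's $\o(i)$ in that definition as the current position. I would first check that $\Pi^\gamma$ is feasible regardless of how many cycle vertices are occupied.
\begin{itemize}
\item There are no vertex conflicts, because agents on the cycle shift injectively along it.
\item Agents off the cycle stay put.
\item There are no swapping conflicts, because the cycle has length at least $3$, so no two agents traverse the same edge in opposite directions.
\end{itemize}
Traversing $\gamma$ in the reverse direction is also a cycle, so the agent at any vertex $\gamma(t)$ can be pushed to either neighbour of $\gamma(t)$ on the cycle in one step.

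Now fix a tasked agent $i$ and a pivot $p$ that is $2$-edge connected to $\o(i)$. Take a simple path $\o(i)=v_0,v_1,\dots,v_k=p$ in $\G$. No edge $(v_j,v_{j+1})$ is a bridge: removing a bridge on a simple path from $\o(i)$ to $p$ would disconnect $\o(i)$ from $p$, contradicting $2$-edge connectivity. Hence each such edge lies on some cycle $\gamma_j$, which has length at least $3$ because $\G$ has no multi-edges or self-loops. Suppose agent $i$ sits at $v_j$. Applying the rotation along $\gamma_j$, oriented so that it carries $v_j$ to $v_{j+1}$, moves $i$ to $v_{j+1}$, possibly displacing other agents along $\gamma_j$. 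Composing these $k$ rotations gives a feasible motion $R_i$ from $\o$ that ends with $i$ on $p$. Then $R_i^-\circ R_i$ is feasible, returns exactly to $\o$, and has $i$ at $p$ at time $k$. Concatenating these motions over all $i\in\A^t$ gives a motion in which every tasked agent has finite $t_\P$, and a final pivot-free solution from $\o$ then gives finite $t_\S$ for every agent.

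The step I expect to need the most care is the rotation argument. Its cost bound is uninteresting, but two points must be nailed down: rotations are feasible even on fully occupied cycles, and each path edge indeed lies on a cycle. Both reduce to the non-bridge observation and the no-swap property of cycles of length at least $3$. Returning to $\o$ after each agent is what prevents later tasked agents from losing their $2$-edge connectivity to a pivot.
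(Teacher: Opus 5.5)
Your proof is correct, and it follows a somewhat different route from the paper in two places.

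\textbf{Choice of cycles.} The paper invokes Menger's theorem to obtain two edge-independent paths $\gamma',\gamma''$ from $\o(i)$ to $p$. It cuts their union at successive common vertices into cycles and rotates $i$ along these cycles, via a sketched induction over the intersection points. You instead take one cycle per edge of a single simple path, using the elementary fact that such an edge cannot be a bridge. This avoids Menger's theorem and the induction over intersection points.

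\textbf{Handling the other tasked agents.} The paper chains the motions for successive tasked agents directly, without rewinding. This forces it to argue, only in outline, that every other tasked agent remains $2$-edge connected to a pivot after the rotations. You sidestep that invariant by rewinding with $R_i^-$, the same device the paper itself uses later in Theorem~\ref{theorem:solution-agents} and Proposition~\ref{prop:well-formed}. The cost is a motion roughly twice as long per agent, which is irrelevant for an existence statement.

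In your version the rewinding is not actually needed. Any two vertices of a cycle are joined by its two edge-disjoint arcs, so every cycle lies inside a single $2$-edge connected component. Each rotation therefore keeps every agent inside its own component. Since $2$-edge connectivity is transitive, the hypothesis automatically persists for the remaining tasked agents. This is precisely the fact underlying the paper's invariant argument.
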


\begin{proof}
	Consider any tasked agent $i\in\A^t$ and assume that $\o(i)$ and $p\in\P$ are $2$-edge connected. By assumption, there exist two edge independent simple paths $\gamma'=(\gamma'_0 .. \gamma'_{l'})$ and $\gamma''=(\gamma''_0 .. \gamma''_{l''})$ in $\G$ such that $\gamma'_0=\gamma''_0=\o(i)$ and $\gamma'_{l'}=\gamma''_{l''}=p$. If the two paths have no intermediate vertex in common, then the concatenation of $\gamma'$ with the reverse of $\gamma''$ yields a cycle. A concatenation of $l'$ rotations, moving $i$ along $\gamma'$, yields a motion bringing $i$ into $p$. If instead the two paths have common intermediate vertices, we consider the one of them closest to $\o(i)$ along $\gamma'$, say $v=\gamma'_{h'}=\gamma''_{h''}$ and arguing as before, we construct a motion composed of rotations bringing $i$ into $v$. A direct inductive argument based on the common intermediate points between $\gamma'$ and $\gamma''$ now permits us to construct a motion bringing $i$ to $p$ (see Figure~\ref{fig:2-edge}). Finally, notice that all agents whose starting points are within the vertices belonging to the two paths $\gamma'$ and $\gamma''$ remain $2$-edge connected to $p$ under such rotations. The remaining agents have not moved and thus remain $2$-edge connected to some pivot vertex. We can iterate this construction for each tasked agent and, by concatenating all such feasible motions, obtain a motion $\Pi$ of length $T$ by which all tasked agents pass through a pivot.
	
	We now consider the pivot-free instance $\I'=(\G, \P, \S, \emptyset, \A, \Pi(T))$. 
	By Prop.~\ref{prop:reachability}, a solution exists for $\I'$, call it $\Pi'$. 
	The concatenation $\Pi'\circ\Pi$ is finally, by construction, a successful feasible motion for $\I$.
\end{proof}

\begin{figure}
\centering
\includegraphics[height=1.4cm]{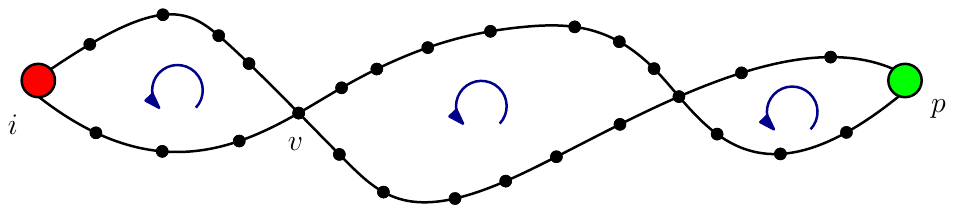}
\caption{Two edge-independent paths linking the origin of an agent (in red) to a pivot (in green). Concatenation of rotation along the obtained cycles eventually leads $i$ in $p$.}
\label{fig:2-edge}
\end{figure}

\subsection{Existence on General Graphs}
For arbitrary connected graphs, solvability admits a \emph{complete}, polynomial-time-checkable characterization based on the number and distribution of tasked and untasked agents, as well as on the topology of $\mc G$.

Given any subset of vertices $\mc W\subseteq \V$ and a configuration $\c$, we define $H(\mc W, \c) = \mc W\setminus \c(\A)$, i.e. the set of $\mc W$'s unoccupied vertices when the agents are in configuration $\c$. We also define $h(\mc W, \c)=|H(\mc W, \c)|$.

The following lemma shows that, on any connected subgraph $\G'$, feasible motions can arbitrarily move the set of unoccupied vertices in $\V'$, while keeping at rest all agents outside of it.
\begin{lemma}
	\label{lem:hole-reposition}
	Let $\I=(\G,\P,\S,\A^t,\A^u,\o)$ be a PS-MAPF instance, and let $\G'=(\V',\E')$ be a connected induced subgraph of $\G$. Let $\mc D\subseteq \V'$ be such that $|\mc D|=h(\V', \o)$. Then, there exists a feasible motion $\Pi$ of length $T$ such that $\Pi(0)=\o$, $H(\V', \Pi(T))=\mc D$, and $\pi_i(t)=\o(i)$ for every $t$ if $\o(i)\not\in \V'$.
\end{lemma}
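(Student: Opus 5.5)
The plan is to reduce the claim to an unlabeled pebble-motion problem on $\G'$ and then invoke Prop.~\ref{prop:reachability}. An alternative is a direct hole-sliding argument. Let $\A'=\{i\in\A \mid \o(i)\in\V'\}$ be the agents inside $\V'$, so that $|\A'| = |\V'|-h(\V',\o) = |\V'\setminus\mc D|$. Set $\mc S' = \V'\setminus \mc D$. Consider the pivot-free instance $\I'=(\G',\emptyset,\mc S',\emptyset,\A',\o|_{\A'})$. It is a legitimate PS-MAPF instance: $\G'$ is connected, $\o|_{\A'}$ is injective into $\V'$, and $|\mc S'|=|\A'|$. By Prop.~\ref{prop:reachability}, it has a solution $\Pi'$ of some length $T$. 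At time $T$ every agent of $\A'$ sits on a distinct station of $\mc S'$. Since $|\mc S'|=|\A'|$, the agents fill $\mc S'$ exactly, so $H(\V',\Pi'(T))=\mc D$.

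Next I extend $\Pi'$ to all of $\A$ by letting each agent with $\o(i)\notin\V'$ wait at $\o(i)$ for all $t\in[0..T]$. Paths of $\A'$ stay inside $\V'$ and use only edges of $\G'$; these are edges of $\G$ because $\G'$ is an induced subgraph. Resting agents stay outside $\V'$. So there is no vertex conflict between the two groups. There is also no swapping conflict, because resting agents never move. Conflicts within $\A'$ are excluded by feasibility of $\Pi'$, so the extended motion $\Pi$ is feasible, with $\Pi(0)=\o$ and the required properties.

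The main obstacle is the reliance on Prop.~\ref{prop:reachability}, which the paper only attributes to the unlabeled pebble motion literature. If a self-contained argument is wanted, I would proceed by induction on the number of agents occupying $\mc D$, i.e., on $|\mc D\cap \Pi(t)(\A)|$. Pick a hole $u\in H(\V',\c)\setminus\mc D$ and a vertex $w\in\mc D$ that is occupied, and take a shortest path in $\G'$ from $w$ to $u$. Choose the occupied vertex $w'$ on this path closest to $u$, and note that the subpath from $w'$ to $u$ has free interior. Slide the agent at $w'$ step by step into $u$. Then iterate the hole backward along the path: each time, the next occupied vertex toward $w$ moves one step forward into the current hole, as in the classical "shift along a path" move. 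This makes $w$ empty, and the occupancy of vertices outside the path is unchanged. Each move is a single agent entering an empty vertex, so no conflicts arise. The number of agents occupying $\mc D$ strictly decreases, which gives termination with $H(\V',\Pi(T))=\mc D$.
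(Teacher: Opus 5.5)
Your main argument is the paper's proof: restrict to the agents $\A'$ lying in $\V'$, make $\V'\setminus\mc D$ the stations of a pivot-free instance on the connected graph $\G'$, invoke Prop.~\ref{prop:reachability}, and pad the agents outside $\V'$ with wait actions. Your explicit checks (that $\A'$ fills $\V'\setminus\mc D$ exactly at time $T$, that no cross-group conflicts arise, and using $\emptyset$ rather than $\P$ as pivot set) only make precise what the paper leaves implicit. Your optional hole-sliding argument is also sound with one wording fix: when free vertices separate consecutive occupied vertices on the path, each agent must slide along the freed subpath all the way into the vertex just vacated, not just one step, since only then is the occupancy of the interior path vertices preserved and the number of occupied vertices of $\mc D$ guaranteed to drop by one.
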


\begin{proof}
	Consider the pivot-free sub-instance restricted to the set of agents $\A'$  currently in $\G'$, defining $\V'\setminus \mc D$ as the set of their stations: $\I'=(\G',\P,\V'\setminus \mc D,\emptyset,\A',\o|_{\A'})$. 
	Since $\G'$ is connected and, by construction, $|\A'|=|\V'|- h(\V', \o)=|\V'\setminus \mc D|$, Prop.~\ref{prop:reachability} guarantees a solution to this anonymous MAPF problem. 
	By extending the paths of all agents outside $\G'$ with wait actions, this yields the desired feasible motion on the entire graph $\G$.
\end{proof}

We now investigate motion constraints in arbitrary topologies. We notice that $2$-edge connectivity is an equivalence relation that decomposes the set of vertices $\mc V$ into disjoint subsets $C_1..C_s$, inducing subgraphs that are maximally $2$-edge connected. We call such subsets the \emph{$2$-edge connected components of $\G$}. We indicate with $\mathbb T=(\mc N, \mc F)$ the \emph{$2$-edge condensation tree of $\G$}: the tree with vertices $\mc N=\{C_1..C_s\}$ and with an edge $(C_i, C_j)$ if in $\mc G$ there exists an edge between a node in $C_i$ and a node in $C_j$ (we refer to such edges as \emph{bridges}).

In our analysis, we need to distinguish among the various $2$-edge connected components.
To this aim, we decompose the set $\mc N$ as $\mc N=\mc N_1^0\cup\mc N_1^1\cup\mc N_{>1}$. $\mc N_1^0$ and $\mc N_1^1$ indicate the set of singleton components of degree not larger than $2$ and above $2$, respectively, while $\mc N_{>1}$ indicates the set of non-singleton components (thus of cardinality at least $3$). We refer to the components in $\mc N_1^1$ as \emph{singleton branching}.
It will be useful to consider configurations in which all unoccupied vertices form a single connected region containing p; intuitively, the free space is gathered at the pivot rather than dispersed through the graph.
\begin{definition} 
	The configuration $\o$ is called $p$-\emph{adapted} if the set of unoccupied vertices $H(\V,\o)$ is either empty or it contains $p$ and the induced subgraph $\G[H(\V, \o)]$ is connected.
\end{definition}

The following result shows that the agents can always be brought to a p-adapted configuration, for any pivot $p$.
\begin{proposition}\label{lem:adapted} 
	Let $\I = \left(\G, \P, \S, \A^t, \A^u, \o \right)$ be a PS-MAPF instance and let $p\in\P$. There exists a feasible motion $\Pi$ of length $T$ such that $\Pi(T)$ is a $p$-adapted configuration.
\end{proposition}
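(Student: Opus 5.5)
This is a direct application of Lemma~\ref{lem:hole-reposition} with $\G'=\G$. Set $k=h(\V,\o)=|\V|-|\A|$. If $k=0$, the configuration $\o$ is already $p$-adapted, and the trivial motion of length $0$ suffices. So assume $k\geq 1$.

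\textbf{Step 1: choose a target hole set.} Pick a set $\mc D\subseteq\V$ with $|\mc D|=k$, $p\in\mc D$, and $\G[\mc D]$ connected. Since $\G$ is connected, run breadth-first search from $p$ and let $\mc D$ be the first $k$ vertices it discovers. Each discovered vertex other than $p$ is adjacent to an earlier discovered vertex, namely its BFS parent. Hence every prefix of the BFS order induces a connected subgraph containing $p$. Because $k\leq|\V|$, such a prefix of length $k$ exists.

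\textbf{Step 2: move the holes onto $\mc D$.} Apply Lemma~\ref{lem:hole-reposition} with $\G'=\G$, which is a connected induced subgraph of itself, and with this $\mc D$; the cardinality condition $|\mc D|=h(\V,\o)$ holds by construction. The lemma yields a feasible motion $\Pi$ of some length $T$ with $\Pi(0)=\o$ and $H(\V,\Pi(T))=\mc D$. Since $\mc D$ contains $p$ and induces a connected subgraph, $\Pi(T)$ is $p$-adapted by definition.

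The only point needing care is the existence of a connected $k$-subset containing $p$, which the BFS prefix argument settles. Everything else, in particular the anonymous re-routing of the agents, is delegated to Lemma~\ref{lem:hole-reposition}, and through it to Prop.~\ref{prop:reachability}. So I do not expect a real obstacle.
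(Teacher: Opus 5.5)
Your proof is correct and matches the paper's argument: choose a vertex set of size $h(\V,\o)$ that contains $p$ and induces a connected subgraph, then apply Lemma~\ref{lem:hole-reposition} with $\G'=\G$. Your BFS-prefix construction of that set and your separate handling of the case $h(\V,\o)=0$ spell out two details the paper leaves implicit.
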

\begin{proof}
	Consider any set of vertices $\mc W$ of cardinality $h(\V, \o)$  that contains $p$ and such that the induced subgraph is connected. Lemma \ref{lem:hole-reposition} yields a motion leading to a configuration $\o'$ such that $H(\V,\o')=\mc W$. This proves the result.
\end{proof}

We now consider a fixed (tasked) agent $i$ and a pivot vertex $p$ in $\P$, and investigate the existence of a feasible motion that brings agent $i$ from $\o(i)$ to $p$. We will give an explicit characterization of the existence, assuming the starting point is a $p$-adapted configuration. A clear sufficient condition for the existence of such a feasible motion is that $d(\o(i), p)\leq h(\V, o)$. 
 Indeed, in this case, as the original configuration is $p$-adapted, one can distribute the unoccupied vertices to follow exactly a minimal path from $\o(i)$ (excluded) to $p$ without moving $i$. Then, a direct motion, in which only agent $i$ moves along the empty path, leads it to $p$. However, this is not at all a necessary condition, as shown in the instance presented in Figure \ref{fig:deff}.

\begin{figure}
\centering
\includegraphics[height=1.2cm]{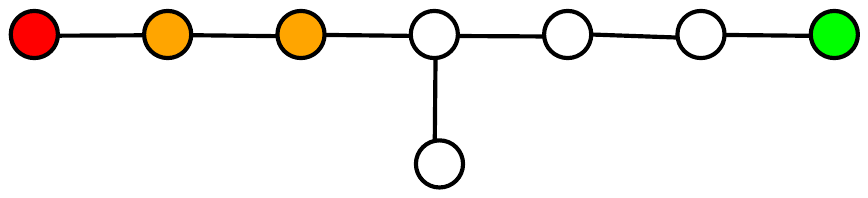}  

\vspace{7pt}\includegraphics[height=1.2cm]{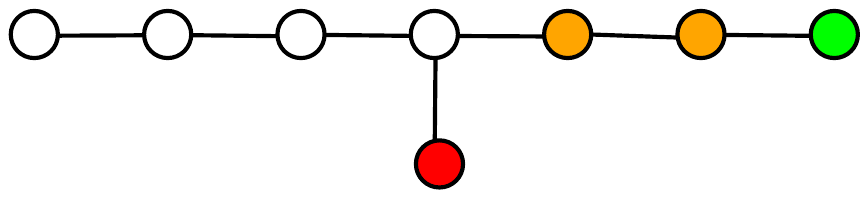}  

\vspace{7pt} \includegraphics[height=1.2cm]{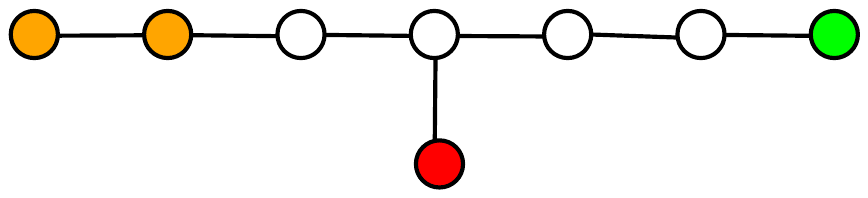}  
\caption{In the top figure, a tasked agent $i$ is positioned at the red vertex, at distance $6$ from the pivot $p$ in green. Despite there being only $4$ unoccupied vertices (in white), a feasible motion that brings $i$ to $p$ exists and can be easily constructed by starting from the initial condition and then passing through the two intermediate configurations (central and bottom figures).}
\label{fig:deff}
\end{figure}

Below, we describe a computable method for determining whether feasible motions exist that lead a tasked agent to a pivot $p$. It applies only to initial configurations that are $p$-adapted, so our characterization of the existence of solutions is not fully explicit. However, since the reduction to $p$-adapted configurations is achieved via polynomial algorithms, it is indeed fully computable.

We indicate with $C'$ and $C''$ the $2$-edge connected components to which $\o(i)$ and $p$ respectively belong. We then consider the unique path in the $2$-edge condensation tree $\mathbb T$ from $C'$ to $C''$, say $C'=C_0, C_1,\dots , C_l=C''$. Furthermore, we let $0=m_0<m_1<m_2<\cdots <m_r=l$ be such that $C_{m_h}\in \mc N_1^1\cup\mc N_{>1}$ for every $h=1,\dots , r-1$, while $C_{k}\in \mc N_1^0$ for all $k\in\{1,\dots , l\}\setminus\{m_1,\dots , m_r\}$. For $h=0, \dots, r$, the binary variable $\rho_h$ is set to $1$ exactly when $0<h<r$ and $C_{m_h}\in \mc N_1^1$, namely when we are not considering the initial or final component and the component is a singleton of degree greater than two. 
We finally define the \emph{effective distance} of agent $i$ to $p$ as the quantity
\begin{equation}\label{deff}
d^{\,\text{eff}}_p(\o(i)):=\max_{0\leq h \leq r-1} (m_{h+1}-m_h+\rho_h+\rho_{h+1})
\end{equation}
assuming, by convention, the maximum is $0$ on empty sets.
In the definition of $d_p^{\,\text{eff}}(\o(i))$, we are considering the maximum distance between two subsequent components that are not degree two singletons, augmented by one or two, depending on whether one or both components are singletons. Notice that, by construction, $d^{\,\text{eff}}_p(\o(i))\leq d(\o(i), p)$.

Figure \ref{fig:deff} (top) depicts a $p$-adapted configuration with a path of $l=6$ $2$-edge connected components from $\o(i)$ to $p$. Notice that in this case $m_1=3, m_2=6$ and $\rho_1=1$ so that 
$d^{\,\text{eff}}_p(\o(i))=\max\{3+1, 3+1\}=4$.

As the next result shows, this distance is of fundamental importance, as it determines the minimum number of unoccupied vertices required for a feasible motion that brings agent $i$ to $p$ to exist. 

\begin{lemma}
\label{lemma:solution-one-agent}
	Let $\I = \left(\G, \P, \S, \A^t, \A^u, \o \right)$ be a PS-MAPF instance and assume that $\o$ is $p$-adapted with respect to some pivot $p$. Given an agent $i\in\A^t$, there exists a feasible motion that brings $i$ to $p$ if and only if the following condition holds:
	\begin{equation}\label{cond-existence} d_p^{\,\text{eff}}(\o(i))\leq h(\V, \o)=|\V|-|\A|
	\end{equation}	
\end{lemma}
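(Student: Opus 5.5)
The plan is to prove the two implications separately, working segment by segment along the path $C_0,C_1,\dots,C_l$ in $\mathbb T$. Split this path at the indices $m_0<m_1<\dots<m_r$. Between $C_{m_h}$ and $C_{m_{h+1}}$ the intermediate components all lie in $\mc N_1^0$: they are singletons of degree at most two. So, together with the bridges joining them, they form an induced path (a \emph{corridor}) of $m_{h+1}-m_h$ edges, which no two agents can traverse in opposite order. The components $C_{m_h}$ with $0<h<r$ are either $2$-edge connected \emph{rooms} ($\mc N_{>1}$) or branching singletons ($\mc N_1^1$). The quantity $m_{h+1}-m_h+\rho_h+\rho_{h+1}$ should be read as the number of free vertices needed to push $i$ through the $h$-th corridor. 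We count one vertex per corridor edge, plus one extra parking slot at each branching-singleton end.

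\emph{Sufficiency.} Assume \eqref{cond-existence} holds. We move $i$ forward one segment at a time and maintain an invariant. When $i$ sits at the entry vertex of $C_{m_h}$, the current configuration is "$i$-adapted": the free vertices form a connected region that contains the remaining route towards $p$ and is adjacent to $i$. The initial configuration has this form because $\o$ is $p$-adapted and $H(\V,\o)$ is connected and contains $p$; we then apply Lemma~\ref{lem:hole-reposition} to the connected subgraph spanned by $H$ and the route.

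\emph{Inside a room} $C_{m_h}\in\mc N_{>1}$, we use the rotation argument from the proof of Theorem~\ref{cor:general-solutions} to carry $i$ to the bridge endpoint leading into the next corridor. The other agents are permuted only inside the room, so the free vertices are unaffected.

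\emph{Across a corridor}, we use Lemma~\ref{lem:hole-reposition} to place $m_{h+1}-m_h$ free vertices along the corridor ahead of $i$, and then move $i$ alone along it. This is possible because $h(\V,\o)\ge m_{h+1}-m_h+\rho_h+\rho_{h+1}$.

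\emph{At a branching singleton} $v$, $i$ arrives from one branch and must leave by another. We use a third branch as a buffer: one extra free vertex lets the agent blocking the exit step aside, which accounts for $\rho=1$. Since a singleton is shared by two consecutive segments, it contributes to both, matching the $\rho_h+\rho_{h+1}$ terms.

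After each step, Lemma~\ref{lem:hole-reposition} is applied again to re-gather the free vertices ahead of $i$ and restore the invariant. Finally $i$ reaches $p$ using the rotation argument in $C''$.

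\emph{Necessity.} Suppose some segment $h$ has $m_{h+1}-m_h+\rho_h+\rho_{h+1}>h(\V,\o)$. We show that $i$ can never cross that corridor. Fix the first time $t^*$ at which $i$ enters $C_{m_{h+1}}$ coming through the corridor. Because agents cannot overtake one another on the corridor, just before $t^*$ the agent $i$ occupies the last corridor vertex. Consider the last time before $t^*$ at which $i$ was at the corridor's first vertex. Between these two moments, all corridor vertices ahead of $i$ must have been emptied in sequence. Any agent ahead of $i$ must therefore exit into the component-side beyond $C_{m_{h+1}}$, and each such exit consumes a free vertex beyond the corridor. This gives the count of $m_{h+1}-m_h$ free vertices.

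For $\rho=1$ ends, the branching singleton $v$ is a cut vertex with no cycle through it. For $i$ to pass from one branch of $v$ to another, at the moment $i$ stands at $v$ the agent that previously left $v$ on the exit side must be strictly beyond $v$, and the vertex behind $i$ must still be unoccupied-accounted. This yields the extra free vertex. The cleanest formalization is a potential argument: count the free vertices in the two sides of the bridge cut, and use that agents' relative order along the corridor is invariant.

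I expect this necessity direction to be the main obstacle, specifically making the $+\rho$ terms rigorous when both ends are branching singletons. The difficulty is to avoid double counting while still proving that $m_{h+1}-m_h+2$ free vertices are simultaneously required. My first attempt would be a careful case analysis of the configuration at the single time step when $i$ leaves the corridor into the far singleton. At that step I would exhibit $m_{h+1}-m_h+\rho_h+\rho_{h+1}$ distinct vertices that must be free at that instant, or must have been free at a fixed earlier instant, by the order-invariance of agents on the corridor and at the cut vertices.
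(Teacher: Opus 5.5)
Your proposal has two genuine gaps, one in each direction.

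\emph{Sufficiency: the free vertices are never recycled past $i$.} Condition~\eqref{cond-existence} bounds only the \emph{maximum} of the segment costs, not their sum. So the $m_{h+1}-m_h$ free vertices that $i$ leaves behind on one corridor must be moved in front of $i$ before the next corridor. Your invariant puts $i$ at the entry vertex of $C_{m_h}$, and your room step rotates $i$ to the exit vertex before re-gathering. Both vertices are cut vertices separating the corridor just crossed from the route ahead, so Lemma~\ref{lem:hole-reposition} cannot move a single free vertex past $i$ there. The missing step, which the paper uses, is to park $i$ at a room-neighbour $v'$ of the entry vertex whose removal keeps entry and exit connected. Any room-neighbour other than the second vertex of a fixed shortest entry--exit path inside the room works. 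You then re-gather on the component of $\G$ minus $v'$ that contains $p$. At a branching singleton the spare third-branch vertex is likewise for $i$ itself: $i$ steps into it so that the free vertices behind can cross the singleton. While $i$ occupies the singleton, an agent blocking the exit cannot step into the third branch at all. Clearing the exit before $i$ arrives would need the free vertices of both corridors simultaneously.

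\emph{Necessity: the $+\rho$ terms are not proved.} What you have is the corridor count $m_{h+1}-m_h$. That is the paper's invariant $h(t)=h_0-k(t)$, and it does hold for every segment. The extra vertices are forced for concrete reasons:
\begin{itemize}
\item If $i$ reaches a branching end singleton with exactly $m_{h+1}-m_h$ free vertices ahead, its forward and side branches are full, and $i$ can only go back.
\item Entering a branching start singleton from behind always leaves a free vertex in its non-forward branches.
\end{itemize}
A direct per-segment argument must also pick the right crossing. After an excursion into a side branch of the end singleton, $i$ can return to the start singleton with nothing free behind it. So the two extras are forced simultaneously only at the last visit to $v_{m_h}$ before $i$ first leaves $v_{m_{h+1}}$ other than backwards. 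You would also need to show that corridor excursions preserve the behind-count up to that time.

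The paper avoids this bookkeeping for later segments. It proves the bound only for the first segment, plus the special case $m_1=\rho_1=1$. It then uses sufficiency to advance $i$ by a motion $\Pi'$ to a $p$-adapted configuration closer to $p$ in $\mathbb T$. Since $\Pi\circ(\Pi')^-$ is a feasible motion from there to $p$, the induction hypothesis finishes the proof.
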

\begin{proof}
Consider the unique path in the $2$-edge condensation graph $C_0, C_1,\dots , C_l$ connecting $C_0$, the $2$-edge connected component containing $\o(i)$, to $C_l$ the $2$-edge connected component containing $p$. 
We prove both directions by induction on $l$. If $l=0$, $\o(i)$ and $p$ lie in the same $2$-edge connected component and the result follows from Thm.~\ref{cor:general-solutions} and is independent of $h(\V, \o)$. From now on, we assume that $l\geq 1$. 

We assume that condition \eqref{cond-existence} holds. 
If $|C_0|>1$ and $C_0$ contains at least one unoccupied vertex, then agent $i$, by first repositioning itself in the unique vertex $v_0\in C_0$ connecting to $C_1$ and then freely following the pattern of unoccupied vertices, can reach $p$. 
We now assume that all vertices in $C_0$ are occupied. We also assume that $\o(i)=v_0$, as it can always be reached with a motion only involving agents in $C_0$ thanks to Thm.~\ref{cor:general-solutions}, and that $d(v_0, p)>h(\V, \o)$; if not, by setting the unoccupied vertices along a minimal path from $v_0$ to $p$, we again would directly construct the wanted feasible motion.

Consider now the unique path in $\mc G$, $v_0,v_1 .. v_{m_1}$ where $v_k\in C_k$ for $k=0,1..m_1$. Removing $C_0$, consider now the set of all vertices $\mc W$ belonging to the connected component of $\mc G$ containing $p$. By construction, we know that all unoccupied vertices are in $\mc W$: $h=h(\V, \o)=h(\mc W, \o)$. 

Assume first that $\rho_1=0$. Consider any subset $\mc D$ of $\mc W$ of cardinality $h$ and possessing the property that $\{v_1 .. v_{m_1}\}\subseteq \mc D$, while the $h - m_1$ leftover vertices of $\mc D$ form a connected set of vertices containing $p$.
Assumption \eqref{cond-existence} yields $h \geq m_1$, which together with the case assumption $d(\o(i), p)>h$ ensures such a choice of $\mc D$ is possible.
It follows from Lemma \ref{lem:hole-reposition} that there exists a motion bringing the unoccupied vertices into $\mc D$ while maintaining fixed all agents $j$ with $\o(j)\not\in\mc W$, so, in particular, agent $i$. Afterward, agent $i$ can move to vertex $v_{m_1}$ along the path of empty vertices. If $r=1$, $v_{m_1}$ and $p$ are in the same $2$-edge connected component, and we conclude using Thm.~\ref{cor:general-solutions} again. Otherwise, $|C_{m_1}|>1$.  If there are unoccupied vertices within $C_{m_1}$, then there is a path of unoccupied vertices leading to $p$ and, as above, we could find a feasible motion bringing $i$ to $p$. If not, let $v^+_{m_1}\in C_{m_1}$ be the unique point in $C_{m_1}$ that is connected through an edge to some node in $C_{m_1+1}$. $2$-edge connectivity implies that there exists $v'\in C_{m_1}$ that is  connected with an edge to $v_{m_1}$ and whose removal does not disconnect $v_{m_1}$ from $v^+_{m_1}$.  A motion within the $2$-edge connected component $C_{m_1}$ leads agent $i$ to $v'$ and, afterward, unoccupied vertices can be again distributed to form a connected set containing $p$ while leaving agent $i$ still in node $v'$. We are thus now again in a $p$-adapted configuration, and the number of $2$-edge connected components between the position of $i$ and $p$ is equal to $l-m_1<l$. Induction then applies. 

If instead $\rho_1=1$ so that $C_{m_1}$ is a branching component, consider any vertex $v'\not\in\{v_{m_1-1}, v_{m_1+1}\}$ connected to $v_{m_1}$. Consider now any subset  $\mc D$ of $\mc W$ of cardinality $h(\mc W, \o)$ and possessing the property that $\{v_1 .. v_{m_1}, v'\}\subseteq \mc D$, while the remaining vertices form a connected set containing $p$. 
Assumption \eqref{cond-existence} yields $h \geq m_1+1$, which together with the case assumption $d(\o(i), p)>h$ ensures such a choice of $\mc D$ is possible.
t follows from Lemma \ref{lem:hole-reposition} that there exists a motion bringing the unoccupied vertices into $\mc D$ while maintaining fixed all agents $j$ with $\o(j)\not\in\mc W$, so, in particular, agent $i$. Afterward, agent $i$ can move to vertex $v'$ along the path of empty vertices. If we now remove vertex $v'$, we notice that, by construction, all empty vertices are in the component containing $p$ (in the line $\{v_0 .. v_{m_1}\}$ and in a connected subset containing $p$). This implies that there exists a feasible motion leading again to a $p$-adapted configuration without moving agent $i$ currently in $v'$.
Now, the new distance in the condensation tree between the components containing $i$ and $p$ is equal to $l-m_1+1$. If $m_1>1$, this is strictly less than $l$. In the case when $m_1=1$, we cannot directly apply the inductive reasoning. However, notice that, in this case, $d_p^{\,\text{eff}}(\o(i))\geq m_2-m_1+\rho_2+\rho_1=m_2+\rho_2$. This implies that, in this case, we can work directly with the path from $\o(i)$ to $C_{m_2}$, omitting the intermediate step $C_{m_1}$. In both cases $\rho_2=0$ and $\rho_2=1$, previous considerations allow us to apply the inductive argument and thus conclude the proof.

Conversely, suppose that there exists a feasible motion $\Pi=\{\pi_j\}_{j\in\mc A}$ that brings $i$ to $p$. Necessarily, agent $i$ will need to touch all $ 2$-edge-connected components along the path $C_0, C_1,\dots, C_l$ during its motion. Let $v_0\in C_0$ be the only node that is connected to $C_1$ through an edge. Consider then the unique path in $\mc G$, $v_0,v_1 .. v_{m_1}$ where $v_k\in C_k$ for $k=0,1..m_1$. Let $t_1$ be the first time $i$ visits $v_{m_1}$ and let  $t_0$ be the last time below $t_1$ that $i$ visits $v_{0}$. Then, at every instant $t\in [t_0, t_1]$, agent $i$ will be in one of the vertices $v_{k(t)}$ for $k(t)\in 0.. m_1$. 

Indicate with $h_0\leq h(\mc V, \o)$ the number of unoccupied vertices at time $t_0$ present in the connected component containing $p$ when $v_0$ is removed. 
We indicate with $h(t)$ the number of unoccupied vertices at time $t$ present in the connected component containing $p$ when the node $v_{k(t)}$ containing $i$ is removed. A direct inductive reasoning shows that $h(t)=h_0-k(t)$. In particular, $0\leq h(t_1)=h_0-m_1$ that yields $h_0\geq m_1$. In case $\rho_1=1$ and $h_0=m_1$, there is no unoccupied position in the connected component containing $p$ when the node $v_{m_1}$ is removed, and thus, since all edges connecting $v_{m_1}$ to nodes different from $v_{m_1-1}$ are bridges, no further motion is possible. We assume that $\Pi$ is a solution, so eventually $i$ must move further, contradicting the assumption made. This implies that $h_0\geq m_1+1$. In any case, this proves that 
\begin{equation}\label{cond1}h(\mc V, \o)\geq  m_1+\rho_1\end{equation}
As in the direct implication, in the case $m_1=1$ and $\rho_1=1$, we can not directly apply the inductive argument on distance, as when $i$ reaches $v'$, it is at the same distance from $p$ as it was in $v_0$. In this case, we argue that also the condition 
\begin{equation}\label{cond2}h(\mc V, \o)\geq m_2+\rho_2\end{equation} 
holds. This can be seen as follows. Let $s_2$ be the first time $i$ visits $v_{m_2}$ and let  $s_1$ be the last time below $s_2$ that $i$ visits $v_{1}=v_{m_1}$. Consider, moreover, $s_1^-=\max\{t\leq s_1\,|\, \pi(t)\neq v_{m_1}\}$. By construction, we have that, at every instant $t\in [s_1^-+1, s_2]$, agent $i$ will be in one of the vertices $v_{k(t)}$ for $k(t)\in m_1.. m_2$. Indicate now with $h_1$ the number of unoccupied vertices at time $s_1^-+1$ present in the connected component containing $p$ when $v_{m_1}$ is removed. Notice that the motion of $i$ at time $s_1^-$ has left at least one empty node in one of the components not containing $p$. This yields $h(\mc V, \o)\geq h_1+1$. As above, we indicate with $h(t)$ the number of unoccupied vertices at time $t$ present in the connected component containing $p$ when the node $v_{k(t)}$ containing $i$ is removed. Direct inductive reasoning shows that $h(t)=h_1-k(t)+1$. In particular, $0\leq h(s_2)=h_1-m_2+1$ that yields $h_1\geq m_2-1$. With the same argument as above, we can improve the estimation to $h_1\geq m_2-1+\rho_2$ that, together with $h(\mc V, \o)\geq h_1+1$, yields \eqref{cond2}.

In both cases (either \eqref{cond1} or, when $m_1=\rho_1=1$, \eqref{cond2}), using the direct implication (i.e. that condition \eqref{cond-existence} is sufficient for existence), we can bring $i$, through a feasible motion $\Pi'$ of length $T'$, into a new position $\o'(i)=\pi'_i(T)$ whose distance in the condensation graph to $p$ is strictly reduced, and the configuration $\o'=\pi'(T)$ is still $p$-adapted. Notice that from $\o'$ there exists a feasible motion bringing $i$ to $p$: it is sufficient to consider the concatenation of the inversion of $\Pi'$ with $\Pi$, namely the feasible motion $\Pi\circ(\Pi')^{-}$. By induction, $d^{\,\text{eff}}_p(\o'(i))\leq h(\V, \o')=h(\V, \o)$  and since, by the way $\o'$ was defined,
$$d^{\,\text{eff}}_p(\o'(i))=\max_{1\leq h \leq r-1} (m_{h+1}-m_h+\rho_h+\rho_{h+1})$$
combined with $h(\V, \o)\geq h_0\geq m_1+\rho_1$ yields the validity of relation \eqref{cond-existence}.
\end{proof}

The effective distance defined in \eqref{deff} depends on the particular $p$-adapted configuration we choose. However, the statement itself of Lemma \ref{lemma:solution-one-agent} implies that condition \eqref{cond-existence} is a characterization of existence and thus it does not depend on the particular $p$-adapted configuration.

Lemma \ref{lemma:solution-one-agent} yields the following result. 

\begin{theorem}
\label{theorem:solution-agents}
A PS-MAPF instance $\I = \left(\G, \P, \S, \A^t, \A^u, \o \right)$ is solvable if and only if, for every tasked agent $i \in \A^t$, there exist a pivot $p \in \P$ and a $p$-adapted configuration $\c$, reachable from $\o$ by a feasible motion, such that
\begin{equation}\label{theo-existence}d^{\,\text{eff}}_{p}(\c(i)) \leq h(\V, \o) = |\V|-|\A|.\end{equation}
\end{theorem}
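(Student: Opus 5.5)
The plan is to prove the two directions separately, using Lemma~\ref{lemma:solution-one-agent} as the single-agent engine. The global structure follows the proof of Thm.~\ref{cor:general-solutions}: first route the tasked agents through pivots one at a time, then close with the pivot-free routing of Prop.~\ref{prop:reachability}. Two facts are used throughout. Feasible motions preserve the number of unoccupied vertices, so $h(\V,\c)=h(\V,\o)$ for every reachable $\c$. Feasible motions are also invertible (via $\Pi^-$), so reachability is an equivalence relation on configurations.

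\emph{Necessity.} Take a solution $\Pi$ and a tasked agent $i$, and let $p=\pi_i(t_\P(\pi_i))$ be the pivot it first visits. By Prop.~\ref{lem:adapted} there is a motion $\Pi^a$ from $\o$ to a $p$-adapted configuration $\c$. The concatenation $\Pi\circ(\Pi^a)^-$, truncated at the time $i$ first reaches $p$, is a feasible motion starting from $\c$ that brings $i$ to $p$. The "only if" part of Lemma~\ref{lemma:solution-one-agent} then gives $d^{\,\text{eff}}_p(\c(i))\le h(\V,\c)=h(\V,\o)$, and $\c$ is reachable from $\o$ by construction. This direction is routine.

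\emph{Sufficiency.} Order the tasked agents as $i_1,\dots,i_k$ and build motions inductively, maintaining the invariant that the current configuration $\c_{j}$ is reachable from $\o$. For agent $i_{j}$ the hypothesis provides $p$ and a $p$-adapted $\c$ reachable from $\o$ with $d^{\,\text{eff}}_p(\c(i_j))\le h$.
\begin{itemize}
\item Since $\c$ is reachable from $\o$, it is reachable from $\c_{j-1}$ through $\o$ by inverting the earlier motions.
\item Apply the "if" part of Lemma~\ref{lemma:solution-one-agent} from $\c$ to bring $i_j$ to $p$.
\item Append the inverse of that motion to return to $\c$, then the inverse of the path back to $\c_{j-1}$, so that we are again at a configuration reachable from $\o$.
\end{itemize}
Agent $i_j$ has now visited $p$ inside the concatenated motion, and the visit is recorded in the path regardless of where the agent ends up. Repeating for all tasked agents, we can in fact simply return to $\o$ each time, and every $\pi_i$ then contains a pivot visit. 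Finally, apply Prop.~\ref{prop:reachability} to the pivot-free instance from the final configuration to park the fleet on $\S$. The concatenation is a successful motion, since $t_\P<\infty$ was already achieved and $t_\S$ is defined by the final segment.

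\emph{Main obstacle.} The delicate point is making sure the "return" trick is legitimate. Inversions and concatenations are feasible, as the paper notes, and agent identities are preserved along them, so the hypothesis on $\c$, which is stated relative to $\o$, stays usable after earlier agents have moved. This relies on returning exactly to $\o$ rather than to some other configuration, which is why I commit to the back-and-forth construction instead of trying to keep moving forward.

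I will also check the mismatch that Lemma~\ref{lemma:solution-one-agent} is stated for the instance's origin $\o$ while we apply it at $\c$. This is handled by applying the lemma to the modified instance with origin $\c$, which is fine because $h(\V,\c)=h(\V,\o)$.
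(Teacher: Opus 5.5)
Your proposal is correct and follows essentially the same route as the paper. For necessity, both arguments prepend the inverse of a motion to a $p$-adapted configuration (Prop.~\ref{lem:adapted}) to the solution truncated at the pivot visit, then invoke the ``only if'' part of Lemma~\ref{lemma:solution-one-agent}. For sufficiency, both concatenate per-agent forward-and-inverse motions that return to $\o$, and then finish with the pivot-free routing of Prop.~\ref{prop:reachability}. Your explicit remarks that $h$ is invariant under feasible motions, and that the lemma is applied to the instance with origin $\c$, make precise what the paper leaves implicit.
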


\begin{proof}
'If': Proposition \ref{lem:adapted} and Lemma \ref{lemma:solution-one-agent} (using condition \eqref{theo-existence}) guarantee, for every tasked agent $i$, the existence of a feasible motion $\Pi^i$ bringing agent $i$ to touch a pivot. The concatenations of all such motions with their inversions (i.e., for agent $i$, we create a motion that reaches $\c(i)$, visits $p$, and inverts back to $\o(i)$, and then we concatenate these motions across agents) yield a feasible motion $\Pi$ allowing all tasked agents to touch a pivot. We now consider the pivot-free instance $\I'=(\G, \P, \S, \emptyset, \A, \Pi(T))$. 
By Prop.~\ref{prop:reachability}, a solution $\Pi'$ exists for $\I'$.
The concatenation $\Pi'\circ\Pi$ is finally, by construction, a successful feasible motion for $\I$.
	 
'Only if': Suppose $\Pi=\{\pi_i\}_{i\in\mc A}$ is a solution of length $T$. For every agent $i\in\mc A^t$, let $p$ be a pivot such that $\pi_i(t_i)=p$ for some $t_i\leq T$. Consider the restriction $\Pi_{|[0, t_i]}$ of $\Pi$ to the time interval $[0, t_i]$. Let $\tilde{\Pi}^i$ be a motion of length $T^i$  that brings the system, originally in configuration $\o$, into a $p$-adapted configuration $\c$. Then, $\Pi_{|[0, t_i]}\circ\tilde{\Pi}^{i-}$ is a feasible motion from configuration $\c$ that leads $i$ to pivot $p$. Thanks to Lemma \ref{lemma:solution-one-agent}, condition \eqref{theo-existence} holds true.
\end{proof}

Algorithmically, the check of necessary conditions in Thm.~\ref{theorem:solution-agents} can proceed as follows. For every pivot $p\in\mc P$, we bring the system into \emph{any} $p$-adapted configuration, and we let $\mc A^t(p)$ be the set of tasked agents for which condition \eqref{cond-existence} is satisfied. Existence of a solution is then equivalent to $\cup_{p\in\mc P}\mc A^t(p)=\mc A^t$. The complexity is bounded, in the worst case, by $|\mc P|$ polynomial reductions to pivot adapted configurations, and for each of them, the check of $|\mc A^t|$ conditions as \eqref{theo-existence}.

Finally, a simple structural condition guarantees solvability and underlies our experimental instances.

\begin{definition}
	\label{def:well-formed}
	A PS-MAPF instance $\I = (\G, \P, \S, \A^t, \A^u, \o)$ is \emph{well-formed} if, for every tasked agent $i\in\A^t$, there is a path from $\o(i)$ to some pivot $p\in\P$ none of whose vertices, other than the start $\o(i)$, is the origin $\o(j)$ of another agent $j\neq i$.
\end{definition}

\begin{proposition}
\label{prop:well-formed}
Every well-formed instance is solvable.
\end{proposition}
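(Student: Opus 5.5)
The plan is to build the solution agent by agent: each tasked agent makes an out-and-back excursion to a pivot, so that the configuration returns to $\o$ after every excursion. The pivot-free completion is then left to Prop.~\ref{prop:reachability}, exactly as in the proofs of Thm.~\ref{cor:general-solutions} and Thm.~\ref{theorem:solution-agents}. The point of well-formedness is that, in the origin configuration, the whole witnessing path (apart from its start) is already empty.

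\textbf{Step 1: one tasked agent.} Fix $i\in\A^t$. Take a path from $\o(i)$ to some $p\in\P$ as in Definition~\ref{def:well-formed}, and shorten it to a simple path $\gamma=(\gamma_0,\dots,\gamma_L)$ with $\gamma_0=\o(i)$ and $\gamma_L=p$. Shortening keeps the property, since its vertex set shrinks. By hypothesis, no vertex $\gamma_1,\dots,\gamma_L$ is the origin of another agent; in particular the pivot $p$ is itself unoccupied. Define a motion $\Pi^i$ of length $L$ in which $\pi_i(t)=\gamma_t$ and every $j\neq i$ waits at $\o(j)$.

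This motion has no vertex conflicts, because $\gamma_t\notin\o(\A\setminus\{i\})$ for $t\ge1$ and $\gamma_0=\o(i)$ is distinct from the other origins by injectivity. It has no swapping conflicts, because only agent $i$ moves. Hence $\Pi^i$ is feasible, $\Pi^i(0)=\o$, and agent $i$ visits $p$ at time $L$ (if $L=0$, agent $i$ already starts on a pivot). By the remark in the paper, the inversion $(\Pi^i)^-$ is feasible and runs from $\Pi^i(L)$ back to $\o$. So $(\Pi^i)^-\circ\Pi^i$ is a feasible motion that starts and ends at $\o$ and during which $i$ touches a pivot.

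\textbf{Step 2: concatenation.} Enumerate $\A^t=\{i_1,\dots,i_k\}$. Every excursion starts and ends at $\o$, so consecutive excursions are concatenable. Their concatenation $\Pi$, of some length $T$, is therefore feasible, satisfies $\Pi(T)=\o$, and has every tasked agent pass through a pivot at some time, so $t_\P(\pi_i)<\infty$ for all $i\in\A^t$. We never need to re-verify well-formedness at an intermediate configuration: we only ever use the hypothesis at $\o$, and the configuration is $\o$ again whenever a new excursion begins.

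\textbf{Step 3: stations.} Apply Prop.~\ref{prop:reachability} to the pivot-free instance $(\G,\P,\S,\emptyset,\A,\o)$ to get a solution $\Pi'$. The concatenation $\Pi'\circ\Pi$ is feasible and starts at $\o$. Every agent's final station time is finite, and each tasked agent visited a pivot earlier, so the concatenation is a solution of $\I$.

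The only delicate point I expect is Step 1's conflict check. It requires the simple-path reduction, and it requires that the hypothesis excludes the pivot endpoint from being another agent's origin, which Definition~\ref{def:well-formed} does. Returning to $\o$ after each excursion is what removes any interaction between the agents' excursions. Alternatively, one could check that condition~\eqref{theo-existence} holds, but the direct construction avoids effective distances entirely.
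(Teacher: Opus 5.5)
Your proof is correct and follows the paper's argument: each tasked agent makes an out-and-back excursion along the empty witnessing path while everyone else waits, these excursions (each returning to $\o$) are concatenated, and Prop.~\ref{prop:reachability} supplies the pivot-free completion. The paper nominally obtains the single-agent motion by applying Lemma~\ref{lemma:solution-one-agent} to the subgraph induced by the path, but it then certifies feasibility exactly as you do, so your direct construction is a leaner version of the same proof.
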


\begin{proof}
    Let $\I = (\G, \P, \S, \A^t, \A^u, \o)$ be a well-formed instance and consider a tasked agent $i\in\A^t$. Since $\I$ is well-formed, there exists a pivot $p\in\P$ and a path $\gamma$ from $\o(i)$ to $p$ such that all vertices along $\gamma$, with the exception of $\o(i)$, are unoccupied by any other agent. 

    Consider the subgraph $\G'$ induced by the vertices of $\gamma$, $\V'$, and the corresponding restricted sub-instance for agent $i$: $\I = (\G', \{p\}, \V', i, \emptyset, \o(i))$. In this restricted setting, the initial configuration is trivially $p$-adapted and clearly satisfies condition \eqref{cond-existence}. Thus, we can obtain a feasible motion $\Pi^i_i$ that routes agent $i$ from $\o(i)$ to $p$ within $\G'$. 

    We can then construct a feasible motion $\Pi^i$ for the full instance $\I$ by applying the movements of $\Pi^i_i$ to agent $i$ while assigning wait actions to all other agents. This motion is strictly feasible because only agent $i$ moves (precluding swapping conflicts), and it does so exclusively through unoccupied vertices (precluding vertex conflicts).

    By concatenating $\Pi^i$ with its inversion ${\Pi^i}^-$, agent $i$ visits $p$ and the system subsequently returns to the initial configuration $\o$. Iterating this construction sequentially across all tasked agents yields a concatenated feasible motion $\Pi$ of length $T$ that allows every tasked agent to touch a pivot. 

    We now consider the pivot-free instance $\I'=(\G, \P, \S, \emptyset, \A, \Pi(T))$. By Proposition~\ref{prop:reachability}, a solution $\Pi'$ exists for $\I'$. The concatenation $\Pi'\circ\Pi$ is finally, by construction, a successful feasible motion for $\I$.
\end{proof}

%% file: Sections/complexity.tex
\section{Complexity Analysis}
\label{sec:complexity}
We prove the \NP-hardness of minimum station-makespan (MSM) and station-flowtime (MSF) decision problems.

\begin{problem}
    Given a PS-MAPF instance $\I$ and $k\in\mathbb{N}$, is there a solution $\Pi$ with $F_{\S}^{\I}(\Pi)\leq k$, where $F\in\{M,\Sigma\}$? We refer to the \emph{MSM decision problem} when $F=M$, and to the \emph{MSF decision problem} when $F=\Sigma$.
\end{problem}

We reduce from 3SAT \cite{cook1971complexity} in a similar fashion to Yu and LaValle \shortcite{yu2013structure}. Our key additional insight is that strategically positioning the pivot in our constructed instance forces every agent to a predetermined station, thereby making the analysis similar to classical MAPF.

Let $(X,C)$ be a 3SAT instance with $n$ variables and $m$ clauses, in conjunctive normal form \cite{sipser2012introduction} with at most three literals per clause. For each variable $x_i$, create vertices $v_{x_i}$ and $v_{x_i}'$ joined by two internally disjoint paths of length $m+3$, the ``upper'' and the ``lower'' path (Fig.~\ref{fig:station-construction}); for each clause $c_j$, create vertices $v_{c_j}$ and $v_{c_j}'$, connecting the $v_{c_j}'$ as a path from $v_{c_1}'$ to $v_{c_m}'$; create a vertex $p$ adjacent to $v_{c_m}'$ and to all the $v_{x_i}$. Finally, for each $c_j$ and each literal $x_i\in c_j$, connect $v_{c_j}$ to the $j^{th}$ vertex from $v_{x_i}$ on the upper path if $x_i$ is unnegated in $c_j$, and on the lower path if negated. The stations are $\S=\S^c\cup \S^x$ with $\S^c=\{v_{c_j}'\,|\, j\in[1..m]\}$, and $\S^x=\{v_{x_i}'\,|\, i\in[1..n]\}$; the agents are $m$ tasked ``clause agents'' $a_{c_j}$ with $\o(a_{c_j})=v_{c_j}$ and $n$ untasked ``variable agents'' $a_{x_i}$ with $\o(a_{x_i})=v_{x_i}$, yielding the instance $\I=(\G,\{p\},\S, \A^t,\A^u,\o)$.

\begin{figure}[tbp]
	\centering
	\includegraphics[width=0.78\columnwidth]{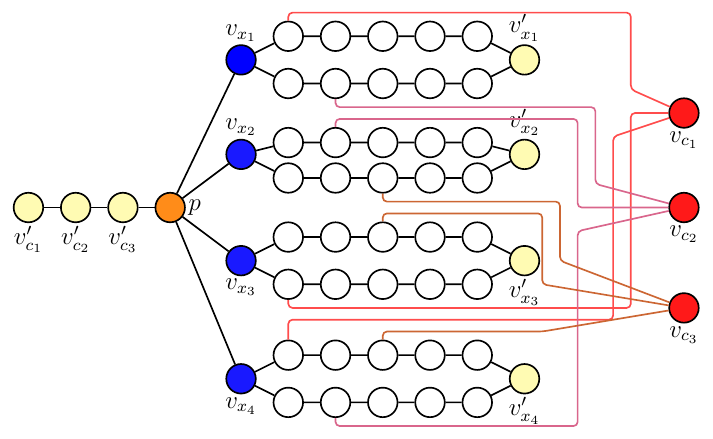}
	\caption{Construction of PS-MAPF instance $\I$ from 3SAT formula $(x_1\vee\overline{x_3}\vee x_4)\wedge(\overline{x_1}\vee x_2\vee\overline{x_4})\wedge(\overline{x_2}\vee x_3\vee x_4)$. Red/blue vertices are tasked/untasked agent origins, respectively. Yellow vertices are stations; orange vertices are pivots.}
	\label{fig:station-construction}
\end{figure}

\begin{proposition}\label{prop:MSMA}
    If $(X,C)\in\text{3SAT}$, then there is a solution $\Pi$ with station-makespan $M_{\S}^{\I}(\Pi)$ at most $m+3$.
\end{proposition}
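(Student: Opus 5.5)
The plan is to turn a satisfying assignment into an explicit, timetabled motion in the style of Yu and LaValle. Every agent follows a fixed route, and we check conflict-freeness directly from the timetable. Fix a satisfying assignment of $(X,C)$. For each clause $c_j$, choose one literal of $c_j$ that is true under the assignment, say on variable $x_{i(j)}$. Call the \emph{true path} of $x_i$ the upper path if $x_i$ is true and the lower path otherwise; the other one is the \emph{false path}. By construction, $v_{c_j}$ is adjacent to the $j$-th vertex $u_j$ of the true path of $x_{i(j)}$.

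\textbf{Routes.} The variable agent $a_{x_i}$ moves at time $1$ onto the false path of $x_i$ and walks along it without waiting. It reaches $v_{x_i}'\in\S^x$ at time $m+3$ and stays there. The clause agent $a_{c_j}$ follows this schedule:
\begin{enumerate*}[label=(\alph*)]
\item it steps to $u_j$ at time $1$;
\item it walks back along the true path towards $v_{x_{i(j)}}$, arriving at time $j+1$;
\item it moves to the pivot $p$ at time $j+2$;
\item it moves to $v_{c_m}'$ at time $j+3$;
\item it walks down the clause chain, reaching $v_{c_j}'\in\S^c$ at time $j+3+(m-j)=m+3$.
\end{enumerate*}
Every clause agent therefore visits $p$, every agent ends on a distinct station, and every $t_\S\le m+3$. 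This gives $M_\S^{\I}(\Pi)\le m+3$ once feasibility is established.

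\textbf{Conflict checks.} I would verify feasibility region by region, writing each agent's position as an affine function of $t$.
\begin{enumerate*}[label=(\roman*)]
\item \emph{Variable paths.} Variable agents use only false paths, which no clause agent ever enters, so they meet no one.
\item \emph{Vertex $v_{x_i}$.} Clause agents reach $v_{x_i}$ only at times $\ge 2$, after $a_{x_i}$ has already left at time $1$.
\item \emph{True paths.} On a given true path, clause agent $a_{c_j}$ occupies position $j-(t-1)$ at time $t$. Distinct $j$ give distinct positions at every time, and all these agents move in the same direction, so there are neither vertex nor swapping conflicts.
\item \emph{Pivot and chain entry.} Agent $a_{c_j}$ occupies $p$ only at time $j+2$ and $v_{c_m}'$ only at time $j+3$. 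These times are distinct across clauses, and two agents moving in the same direction one step apart cannot swap.
\item \emph{Clause chain.} Index the chain so that $v_{c_k}'$ has index $k$. At time $t$, agent $a_{c_j}$ is at index $m-(t-j-3)$ until it stops at index $j$. For $j<k$, agent $a_{c_j}$ is strictly ahead of $a_{c_k}$, i.e.\ at a lower index. So $a_{c_j}$ has already passed index $k$ before $a_{c_k}$ settles there, and $a_{c_k}$ never goes below index $k>j$.
\end{enumerate*}

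The main obstacle is this bookkeeping, especially the clause-chain analysis and the interaction at the shared vertices $p$ and $v_{x_i}$. The key point that makes everything collision-free is that $a_{c_j}$'s schedule depends only on $j$, not on which variable it uses: the travel time along the variable path ($j$ steps) exactly compensates the remaining chain length ($m-j$ steps). I would also double-check one edge case: several clauses choosing the same variable must all lie on that variable's true path. This holds by the choice of $i(j)$ as a true literal and the placement of the edges from $v_{c_j}$.
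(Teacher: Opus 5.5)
Your proposal is correct and follows the paper's construction: variable agents take the path not needed by the satisfied literal (lower if $x_i$ is true, upper if false), and each clause agent runs without waiting through its $j$-th attachment vertex to $v_{x_i}$, then $p$, then down the clause chain, for $m+3$ steps in total. Your explicit timetable and region-by-region conflict check make rigorous what the paper only sketches with the phrase ``staggered times'' (namely, $a_{c_j}$ occupies $p$ exactly at time $j+2$).
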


\begin{proof}
    For each $x_i\in X$, if $x_i$ is set to true in the satisfying assignment, let $a_{x_i}$ use the lower path to $v_{x_i}'$. If set to false, let it use the upper path. Because this is a satisfying assignment, each $c_j\in C$ contains at least one true literal, say corresponding to the variable $x_i$. If $x_i$ is unnegated in $c_j$, then $x_i$ is set to true, and we have routed $a_{x_i}$ on the lower path. By our construction, $v_{c_j}$ connects to the upper path, which is now guaranteed to be unobstructed. Alternatively, if $x_i$ is negated in $c_j$, then $x_i$ is false and thus $a_{x_i}$ uses the upper path. In this case, we would have connected $v_{c_j}$ to the lower path, which is again unobstructed for $a_{c_j}$. In short, each $a_{c_j}$ will have an unobstructed path to a $v_{x_i}$.
            
    Furthermore, because each clause origin connect to upper/lower paths at a unique index, the clause agents arrive to the $v_{x_i}$ vertices at staggered times, preventing vertex conflicts. Specifically, $a_{c_1}$ will arrive first, proceeding to $p\in V$ and then to $v_{c_1}'$, followed sequentially by the remaining clause agents, which will filter into their respective stations. We have now described a solution $\Pi$ where all agents traverse $m+3$ edges without stopping, thus $M_{\S}^{\I}(\Pi)\le m+3$.
\end{proof}
	
To prove the converse statement, we first establish several preliminary results.

\begin{lemma}\label{lemma:dist} 
    The graph $\mc G$ has the following properties:
\begin{enumerate}
    \item[(i)] $d(v_{x_i},v_{x_i}')=m+3$ for every $i$ in $[1..n]$;
    \item[(ii)] $d(v_{x_i},v_{x_\ell}')\geq m+5$ for every $i,\ell $ in $[1..n]$ with $i\neq \ell$;
    \item[(iii)] $d(p,v_{x_i}')=m+4$ for every $i$ in $[1..n]$;
    \item[(iv)] $d(p,v_{c_j}')=m-j+1$ for every $j$ in $[1..m]$;
    \item[(v)] $d(p,v_{c_j})=j+2$ for every $j$ in $[1..m]$;
\end{enumerate}
\end{lemma}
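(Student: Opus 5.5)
Every item pairs an explicit walk, which gives the upper bound, with a $1$-Lipschitz potential, which gives the matching lower bound. The lower-bound principle is elementary: if $\varphi:\V\to\mathbb{Z}$ satisfies $|\varphi(u)-\varphi(w)|\leq 1$ for every edge $(u,w)\in\E$ and $\varphi(z)=0$, then $d(u,z)\geq\varphi(u)$ for every $u$. This holds because $\varphi$ changes by at most one per step along any path. For checking, I index each vertex of the upper or lower path of gadget $i$ by its position $k\in[0..m+3]$, the number of steps from $v_{x_i}$ along that path. Thus $v_{x_i}$ has position $0$ and $v_{x_i}'$ has position $m+3$, and $v_{c_j}$ is adjacent exactly to position-$j$ vertices of the gadgets of its literals. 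The vertices $v_{c_j}'$ form a pendant path $v_{c_1}',\dots,v_{c_m}'$ attached to the rest of the graph only through the edge $(v_{c_m}',p)$.

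For (iii)–(v) I use distances from $p$. The potential is $\psi(p)=0$; $\psi=k+1$ on every gadget vertex at position $k$ (so $\psi(v_{x_i})=1$ and $\psi(v_{x_i}')=m+4$); $\psi(v_{c_j})=j+2$; and $\psi(v_{c_j}')=m-j+1$. The edge checks are routine:
\begin{itemize}
\item consecutive positions along a gadget path differ by $1$;
\item $p$ to $v_{x_i}$ is $0$ versus $1$;
\item $v_{c_j}$ to a position-$j$ vertex is $j+2$ versus $j+1$;
\item consecutive chain vertices differ by $1$, and $v_{c_m}'$ to $p$ is $1$ versus $0$.
\end{itemize}
The walks $p\to v_{x_i}\to$ along a gadget path to $v_{x_i}'$ (length $m+4$), $p\to v_{c_m}'\to\cdots\to v_{c_j}'$ (length $m-j+1$), and $p\to v_{x_i}\to$ position $j\to v_{c_j}$ (length $j+2$, using any variable $x_i$ occurring in $c_j$) match these bounds. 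This gives (iii), (iv) and (v).

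For (i) and (ii), fix a target index $\ell$ and define $\varphi_\ell$ as follows:
\begin{itemize}
\item $\varphi_\ell=m+3-k$ on gadget-$\ell$ vertices at position $k$;
\item $\varphi_\ell=m+5-k$ on position-$k$ vertices of every other gadget (in particular $\varphi_\ell(v_{x_i})=m+5$ and $\varphi_\ell(v_{x_i}')=2$ for $i\neq\ell$);
\item $\varphi_\ell(p)=m+4$, $\varphi_\ell(v_{c_j})=m+4-j$, and $\varphi_\ell(v_{c_j}')=2m+5-j$.
\end{itemize}
I then verify each edge type:
\begin{itemize}
\item gadget path edges differ by $1$;
\item $p$–$v_{x_\ell}$ is $m+4$ versus $m+3$, and $p$–$v_{x_i}$ is $m+4$ versus $m+5$;
\item $v_{c_j}$ to a gadget-$\ell$ position-$j$ vertex is $m+4-j$ versus $m+3-j$, and to another gadget's position-$j$ vertex is $m+4-j$ versus $m+5-j$;
\item chain edges differ by $1$, and $v_{c_m}'$–$p$ is $m+5$ versus $m+4$.
\end{itemize}
Since $\varphi_\ell(v_{x_\ell}')=0$, the principle gives $d(v_{x_\ell},v_{x_\ell}')\geq m+3$, matched by either gadget path, which proves (i). It also gives $d(v_{x_i},v_{x_\ell}')\geq m+5$ for $i\neq\ell$, which is (ii).

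The main obstacle is designing $\varphi_\ell$ so that the clause vertices, which act as shortcuts linking different gadgets at equal positions, never create a jump larger than one. The key observation is that a clause vertex touches gadgets only at a single position $j$. Giving it the value halfway between the "own-gadget" and "other-gadget" values at that position, $m+4-j$, makes all of these edges consistent simultaneously. Beyond this, the remaining work is the bookkeeping of confirming that the listed edge types exhaust $\E$ as constructed; in particular, positions $m+1$ through $m+3$ carry no clause edges, and $v_{c_j}$ has no edge to $v_{c_j}'$.
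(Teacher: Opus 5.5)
Your proof is correct. I checked every edge class against both $\psi$ and $\varphi_\ell$, and each one changes the potential by exactly one.

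The paper's proof is only the sentence ``direct consequences of the way $\G$ is defined''. So your proposal is not really a different route; it is an explicit certificate for what the paper leaves to inspection. The walks give the upper bounds, and the $1$-Lipschitz potentials (dual feasible solutions of the shortest-path LP) give the lower bounds.

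The potentials matter most in (ii), where the choice $\varphi_\ell(v_{c_j})=m+4-j$ does real work. The bound $m+5$ is attained both through $p$ and through a clause vertex: the walk $v_{x_i}\to$ position $j\to v_{c_j}\to$ position $j$ of gadget $\ell\to v_{x_\ell}'$ also has length $m+5$. An informal inspection therefore has no slack to lean on, whereas your local, edge-by-edge check cannot miss a shortcut.

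Your argument rests on two readings of the construction, both of which you use and which are worth stating. First, ``the $j$th vertex from $v_{x_i}$'' must mean the vertex at distance $j$ from $v_{x_i}$. This is the reading forced by (v) and by the $m+3$ timing in Prop.~\ref{prop:MSMA}. Second, every clause must be nonempty. You need this for the walk in (v), and it is also implicit in the connectivity of $\G$.
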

\begin{proof} 
    Direct consequences of the way $\G$ is defined.
\end{proof}
		
A solution $\Pi= \{\pi_i\}_{i\in \A}$ is called \emph{non-crossing} if $\pi_a(T)\in\S^c$ when $a\in\A^t$ and $\pi_a(T)\in\S^x$ when $a\in\A^u$. The following result holds.

\begin{lemma}\label{lemma:noncross} 
    Suppose $\Pi= \{\pi_i\}_{i\in \A}$ is a solution of $\I$ such that $M_{\S}^{\I}(\Pi)\leq m+3$. Then, $\Pi$ is non-crossing.
\end{lemma}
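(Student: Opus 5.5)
Let $\Pi$ be a solution with $M_{\S}^{\I}(\Pi)\le m+3$. Every agent then reaches its final station by time $m+3$, so for each agent $a$ with final station $s$ we have $d(\o(a),s)\le m+3$, and for a tasked agent also $d(\o(a),p)+d(p,s)\le m+3$, since it must pass through the unique pivot $p$ first. I would first rule out each crossing pattern separately using the distances of Lemma~\ref{lemma:dist}, and then close with a counting argument, since there are exactly $m+n$ stations and $m+n$ agents.

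\emph{Tasked agents.} Suppose the clause agent $a_{c_j}$ ends at some $v_{x_\ell}'\in\S^x$. By Lemma~\ref{lemma:dist}(v) and (iii), its travel time is at least $d(p,v_{c_j})+d(p,v_{x_\ell}')=(j+2)+(m+4)>m+3$, which contradicts the makespan bound. Hence every tasked agent ends in $\S^c$.

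\emph{Untasked agents.} Since $|\A^t|=|\S^c|=m$, the tasked agents occupy all of $\S^c$ at the end. Distinct agents end at distinct stations, so the $n$ untasked agents must end in $\S^x$. This already gives non-crossing.

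As a sanity check, I would also confirm the distance route directly. Lemma~\ref{lemma:dist}(ii) forces $a_{x_i}$ to end at $v_{x_i}'$ rather than at any other $v_{x_\ell}'$. Also, $d(v_{x_i},v_{c_j}')\le 1+(m-j+1)$ is small, so distance alone would not exclude variable agents from clause stations. That is why the counting step is essential.

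The main obstacle I expect is making sure the pivot-visit requirement is applied correctly. The bound for tasked agents relies on every path from $v_{c_j}$ to $p$ having length at least $j+2$, and on the definition of time-at-station ($t_\S$ counts from time $0$ to the final arrival, and the pivot visit must precede it). Both hold by Lemma~\ref{lemma:dist}, so the argument is essentially immediate.
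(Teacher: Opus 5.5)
Your proof is correct and follows the paper's argument: a clause agent ending in $\S^x$ would need $t_\S \ge (j+2)+(m+4)=m+j+6>m+3$ by Lemma~\ref{lemma:dist}(iii) and (v), since it must visit $p\notin\S$ before settling at its station. Your explicit counting step ($|\A^t|=|\S^c|=m$, with distinct final vertices, forces every untasked agent into $\S^x$) supplies the second half of the non-crossing condition, which the paper's proof leaves implicit.
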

\begin{proof} 
    Suppose to the contrary that there is a $j\in[1..m]$ such that $\pi_{a_{c_j}}(T)\in \S^x$. Since clause agents must pass through $p$, items (iii) and (v) of Lemma \ref{lemma:dist} imply that 
    \[t_{\S}(\pi_{a_{c_j}})\geq m+6+j>m+3,\]
    contradicting the assumption that $M_{\S}^{\I}(\Pi)\leq m+3$.
\end{proof}
        
\begin{lemma}\label{lemma:noncross2} 
    Suppose $\Pi= \{\pi_i\}_{i\in \A}$ is a solution for $\I$ of length $T$ such that $M_{\S}^{\I}(\Pi)\leq m+3$. Then, 
    \[\pi_{a_{x_i}}(T)=v'_{x_i}\ \forall i\in[1..n],\quad \pi_{a_{c_j}}(T)=v'_{c_j}\ \forall j\in[1..m].\]
\end{lemma}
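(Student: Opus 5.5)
By Lemma~\ref{lemma:noncross}, $\Pi$ is non-crossing. The variable agents therefore end in $\S^x$ and the clause agents end in $\S^c$. Since $|\S^x|=n$ and $|\S^c|=m$ equal the numbers of variable and clause agents, both restrictions are bijections. I would determine the two bijections separately with distance arguments from Lemma~\ref{lemma:dist}.

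\emph{Variable agents.} Suppose $\pi_{a_{x_i}}(T)=v'_{x_\ell}$ with $\ell\neq i$. Item (ii) of Lemma~\ref{lemma:dist} gives $t_\S(\pi_{a_{x_i}})\geq d(v_{x_i},v'_{x_\ell})\geq m+5>m+3$. This contradicts $M_\S^\I(\Pi)\le m+3$, so $\pi_{a_{x_i}}(T)=v'_{x_i}$ for every $i$.

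\emph{Clause agents.} Each $a_{c_j}$ must visit $p$ before settling, because $p$ is the only pivot. Its time-at-station is therefore at least $d(v_{c_j},p)+d(p,\pi_{a_{c_j}}(T))$. If it ends at $v'_{c_k}$, items (iv) and (v) give
\[
t_\S(\pi_{a_{c_j}})\geq (j+2)+(m-k+1)=m+3+(j-k).
\]
The makespan bound then forces $k\geq j$ for every $j$. Let $\sigma(j)=k$ be the station index of $a_{c_j}$. Then $\sigma$ is a permutation of $[1..m]$ with $\sigma(j)\ge j$ for all $j$. Induction downward from $j=m$ shows $\sigma$ is the identity: $\sigma(m)=m$ is forced, then $\sigma(m-1)\in\{m-1,m\}\setminus\{m\}$, and so on. Hence $\pi_{a_{c_j}}(T)=v'_{c_j}$.

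The only delicate point is the lower bound $t_\S\ge d(v_{c_j},p)+d(p,v'_{c_k})$. It holds because $t_\P(\pi)\le t_\S(\pi)$ may be assumed without loss of generality. The reason is that a station on the path $v'_{c_1}\ldots v'_{c_m}$ is not a pivot. If the agent first settled permanently at that station, it could never reach $p$ afterwards, so the pivot visit must precede the final arrival. Item (iv) needs a small check that no shortcut through the clause gadget bypasses the $v'_c$ path. I would take this as given by Lemma~\ref{lemma:dist}.
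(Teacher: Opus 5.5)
Your proposal is correct and matches the paper's proof. Both use Lemma~\ref{lemma:noncross} for non-crossing, the distance bounds of Lemma~\ref{lemma:dist} for the variable agents, and the estimate $t_\S(\pi_{a_{c_j}})\ge m+3+(j-k)$ from items (iv)--(v) for the clause agents; your downward induction on $\sigma$ is the unrolled form of the paper's choice of the largest misplaced index $\bar h=\max\Omega$, and your remark that the pivot visit must precede final settling because $p\notin\S$ supplies a step the paper leaves implicit.
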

\begin{proof} 
    Considering the variable agents, Lemma \ref{lemma:noncross} tells us that $\Pi$ is non-crossing and, because of (i) and (ii) of Lemma \ref{lemma:dist}, we must have that $\pi_{a_{x_i}}(T)=v'_{x_i}$ for every $i\in[1..n]$. Regarding the clause agents, let $\Omega=\{j\,|\, \pi_{a_{c_j}}(T)\neq v'_{c_j}\}$. If $\Omega$ is empty, the proof is complete. Otherwise, for $\bar h=\max \Omega$, we must have $\pi_{a_{c_{\bar h}}}(T)= v'_{c_h}$ with some $h<\bar h$. It then follows from items (iv) and (v) of Lemma \ref{lemma:dist} that
    \[t_{\S}(\pi_{a_{c_{\bar h}}})\geq m+3+\bar{h}-h>m+3,\]
    a contradiction.
 \end{proof}

\begin{proposition}\label{prop:MSMB} 
    If there is a solution $\Pi$ with $M_{\S}^{\I}(\Pi)\le m+3$, then $(X,C)\in\text{3SAT}$.
\end{proposition}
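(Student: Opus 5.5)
By Lemma~\ref{lemma:noncross2}, every agent ends at a predetermined station: $a_{x_i}$ at $v'_{x_i}$ and $a_{c_j}$ at $v'_{c_j}$, all within time $m+3$. The plan is to turn this tight timing into a truth assignment and then show that the assignment satisfies every clause.

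\emph{Step 1: variable agents move greedily along one of their two paths.} By Lemma~\ref{lemma:dist}(i), $d(v_{x_i},v'_{x_i})=m+3$, so $a_{x_i}$ must travel a shortest path without waiting. Any detour through $p$ or a clause vertex is longer, so the shortest paths from $v_{x_i}$ to $v'_{x_i}$ are exactly the upper and the lower path. Hence $a_{x_i}$ is at the $t$-th vertex of one of these two paths at time $t$. We set $x_i$ true if it uses the lower path and false if it uses the upper path, mirroring the construction in Proposition~\ref{prop:MSMA}.

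\emph{Step 2: clause agents are also tight.} Agent $a_{c_j}$ must visit $p$ and then reach $v'_{c_j}$. By Lemma~\ref{lemma:dist}(iv),(v) this takes at least $(j+2)+(m-j+1)=m+3$ steps, so again there is no slack. Thus $a_{c_j}$ reaches $p$ exactly at time $j+2$ along a shortest $v_{c_j}$--$p$ path. Any such path must leave $v_{c_j}$ into the $j$-th vertex of some upper or lower path of a variable $x_i$ occurring in $c_j$, then run backwards along that path to $v_{x_i}$, and then step to $p$. I would check this with the distance structure: the route through the $v'_{c}$ chain is not shorter, since $v_{c_j}$ is adjacent only to variable-path vertices.

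\emph{Step 3: the chosen path gives a true literal.} Suppose $a_{c_j}$ enters the upper path of $x_i$, so $x_i$ occurs unnegated in $c_j$. Assume for contradiction that $a_{x_i}$ also uses the upper path. Then $a_{x_i}$ moves forward along it at unit speed, reaching position $k$ at time $k$. Meanwhile $a_{c_j}$ is at position $j$ at time $1$ and moves backward, reaching position $j-k+1$ at time $k$. Two agents traversing the same path in opposite directions without waiting must meet at a vertex or swap along an edge. Both positions lie in the range $[0..j]$ during the relevant window, so this yields a vertex or swapping conflict, contradicting feasibility. Hence $a_{x_i}$ uses the lower path, so $x_i$ is true and $c_j$ is satisfied. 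The negated case is symmetric. Applying this to every $j$ shows the assignment satisfies $C$.

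The main obstacle is Step 2, namely showing rigorously that every shortest $v_{c_j}$--$p$ path must run backward along a literal path to $v_{x_i}$. This requires confirming that the indexing in the construction (the $j$-th vertex from $v_{x_i}$) matches $d(p,v_{c_j})=j+2$, and that no alternative shortest route exists. The parity and crossing argument in Step 3 is routine once both agents are known to move without waiting.
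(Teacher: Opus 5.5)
Your proof is correct and follows the same line as the paper's. Tight timing from Lemma~\ref{lemma:noncross2} forces no-wait shortest paths, so each variable agent occupies exactly one of its two literal paths and each clause agent must return to $v_{x_i}$ along the other one, which gives a satisfying assignment. Your version is more careful than the paper's in two places: you read the truth values off the variable agents, so the assignment is consistent automatically, whereas the paper reads them off the clause agents; and you spell out the head-on vertex/swapping conflict that the paper only summarizes as the other path being ``unobstructed''.
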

\begin{proof} 
    It follows from Lemmas \ref{lemma:dist} and \ref{lemma:noncross2} that each agent $a$ follows a shortest path $\pi_a$ to its station destination without wait actions. In particular, since upper and lower paths are internally disjoint, a shortest path between $v_{x_i}$ and $v_{x_i}'$ is one of these two paths exclusively. Consequently, the variable agent $a_{x_i}$ occupies only one of these paths to reach $v_{x_i}'$, ensuring the other path is free of variable agents and available for clause agents to move left on unobstructed.
    
    We extract a satisfying assignment for $(X,C)$ based on the paths that the clause agents take. Suppose $a_{c_j}$ uses the upper path of $v_{x_i}$. By construction, $v_{c_j}$ connects to the upper path of $v_{x_i}$ only if $x_i$ is unnegated in $c_j$. So, in this case, we set $x_i$ to true and thus $c_j$ is satisfied. Alternatively, if $a_{c_j}$ uses the lower path, then $x_i$ must be negated in $c_j$, so we set $x_i$ to false. Each clause agent must go through some unobstructed upper/lower path, so we can use this process to satisfy each clause. Thus, $(X,C)$ has a satisfying assignment.
\end{proof}

\begin{theorem}\label{thm:msm-hardness} 
    The MSM decision problem is \NP-hard.
\end{theorem}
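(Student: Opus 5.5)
The proof is a polynomial-time many-one reduction from 3SAT, assembled from the construction of this section and Propositions~\ref{prop:MSMA} and~\ref{prop:MSMB}. Given a 3SAT instance $(X,C)$ with $n$ variables and $m$ clauses, we map it to the pair $(\I, k)$ with $k=m+3$. Here $\I=(\G,\{p\},\S,\A^t,\A^u,\o)$ is the instance built above, with the variable gadgets, the clause vertices, the single pivot $p$, the stations $\S^c\cup\S^x$, and the $m$ clause agents and $n$ variable agents.

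\emph{Polynomial size.} The first step is to check that the reduction runs in polynomial time. Each variable gadget consists of two internally disjoint paths of length $m+3$, which gives $O(m)$ vertices and edges per variable and $O(nm)$ in total. The clause part adds $2m$ vertices plus the path on the $v'_{c_j}$. Every clause contributes at most three edges to the gadgets, and $p$ adds $n+1$ edges. We also verify that $\I$ is a legal PS-MAPF instance in the sense of Definition~\ref{def:matrix}: the map $\o$ is injective, since all origins $v_{c_j}$ and $v_{x_i}$ are distinct, and $|\S|=n+m=|\A|$. Writing down $\G$, $\S$, $\o$ and $k$ is clearly polynomial in $n+m$.

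\emph{Correctness.} Correctness is then immediate. If $(X,C)$ is satisfiable, Proposition~\ref{prop:MSMA} gives a solution $\Pi$ with $M_\S^{\I}(\Pi)\le m+3=k$, so $(\I,k)$ is a yes-instance of the MSM decision problem. Conversely, if some solution has station-makespan at most $k$, Proposition~\ref{prop:MSMB} gives $(X,C)\in$ 3SAT. Since 3SAT is \NP-complete \cite{cook1971complexity}, the MSM decision problem is \NP-hard. Because the constructed instance has $|\P|=1$, hardness already holds with a single pivot.

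\emph{Main obstacle.} The only step needing care is relying on the two propositions as stated, in particular the timing and staggering claims inside Proposition~\ref{prop:MSMA}. For the reduction itself, however, we take them as established. The remaining work is the size bound and the well-formedness of $\I$, which are routine.
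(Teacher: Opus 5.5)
Your proposal is correct and matches the paper's proof: the paper also derives the theorem directly from Propositions~\ref{prop:MSMA} and~\ref{prop:MSMB}, that is, from the 3SAT reduction to $(\I, m+3)$ with a single pivot. Your explicit checks that the construction has polynomial size and that $\I$ is a valid instance (injective $\o$, $|\S|=n+m=|\A|$) are details the paper leaves implicit.
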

\begin{proof} 
    Direct consequence of Prop. \ref{prop:MSMA} and \ref{prop:MSMB}.
\end{proof}

For the MSF decision problem, we show that $\Sigma_\S^\I(\Pi)\leq (m+n)(m+3)$ is equivalent to $M_\S^\I(\Pi)\leq m+3$ on the constructed instance.

\begin{lemma}\label{lemma:cross} 
    Given a solution $\Pi= \{\pi_i\}_{i\in \A}$ of the instance $\I$ of length $T$, the following facts hold.
    \begin{enumerate}
        \item[(i)] if an agent $\bar a$ is such that $\pi_{\bar a}(T)\in\S^c$, then $t_{\S}(\pi_{\bar a})\geq m+1$.
        \item[(ii)] if $\Pi$ is non-crossing, $t_{\S}(\pi_{a})\geq m+3$ for every $a\in\A$.
    \end{enumerate}
\end{lemma}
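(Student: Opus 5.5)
Both items reduce to distance lower bounds on the time at which an agent can first occupy its final station. For any agent $a$ whose final station is $q=\pi_a(T)$, we have $t_\S(\pi_a)\ge t_q$, where $t_q$ is the first time $a$ is at $q$. We then bound $t_q$ from below by the length of the forced route. We use the distances of Lemma~\ref{lemma:dist}, plus one further structural observation: the clause stations $v'_{c_j}$ form a path attached to the rest of $\G$ only through the edge $(v'_{c_m},p)$. Indeed, the $v_{c_j}$ are adjacent only to upper/lower path vertices, and $p$ is the only vertex adjacent to $v'_{c_m}$ outside the clause-station path.

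For item (i), suppose $\pi_{\bar a}(T)=v'_{c_j}$ for some $j$. We split on the type of $\bar a$.
\begin{itemize}
\item If $\bar a$ is a clause agent, it starts at some $v_{c_k}$. Every route from $v_{c_k}$ into the clause-station path enters through $p$, so $t_{\S}(\pi_{\bar a})\ge d(v_{c_k},p)+d(p,v'_{c_j})=(k+2)+(m-j+1)\ge m+1$, since $j\le m$ and $k\ge1$ give $k-j+3\ge 1$. For tasked agents the pivot requirement also forces the passage through $p$.
\item If $\bar a$ is a variable agent starting at $v_{x_i}$, then $t_{\S}(\pi_{\bar a})\ge d(v_{x_i},v'_{c_j})=1+(m-j+1)=m-j+2$.
\end{itemize}
The second bound is $\ge m+1$ only when $j=1$, so the variable-agent case needs a congestion argument, and this is the main obstacle.

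The idea for that case is that the clause-station path $v'_{c_1},\dots,v'_{c_m}$ is a dead end entered only from $p$, so agents cannot overtake inside it. Suppose the $m$ clause stations are all filled at the end; this holds when $|\S|=|\A|$, and here $|\S|=m+n=|\A|$ exactly. Then the agent ending at $v'_{c_j}$ is the $(m-j+1)$-th agent to enter the path, counted from the deepest position. At most one agent enters per time step, and the first entry happens no earlier than time $1$. So the deepest agent, the one ending at $v'_{c_1}$, enters no earlier than time $1$ and must walk $m-1$ more steps. More generally, the agent ending at $v'_{c_j}$ arrives no earlier than $1+(m-j)$, and this must be combined with the fact that the agents entering first are constrained by the ones behind them. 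Concretely, I would argue by induction from $v'_{c_1}$ outward that the agent ending at $v'_{c_j}$ cannot settle before the agent at $v'_{c_1}$ has passed through $v'_{c_j}$, and that the deepest agent settles at time $\ge m$, since it needs at least $1+(m-1)$ steps from a neighbor of $p$.

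This congestion bound only gives $\ge m$, so I would then refine it using initial positions. No agent starts at $p$, so the first entry into the path is at time $\ge 2$. This yields $\ge m+1$ for the deepest agent, and for every agent ending in $\S^c$ the bound is at least $m-j+2$ plus the delay caused by earlier entrants. If this still falls short for $j>1$, the fallback is to apply item (i) only where it is used later, namely as a bound on the sum.

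For item (ii), non-crossing means variable agents end in $\S^x$ and clause agents end in $\S^c$.
\begin{itemize}
\item A variable agent $a_{x_i}$ ends at some $v'_{x_\ell}$, so by items (i) and (ii) of Lemma~\ref{lemma:dist}, $t_\S\ge d(v_{x_i},v'_{x_\ell})\ge m+3$.
\item A clause agent $a_{c_k}$ ending at $v'_{c_j}$ must visit $p$, so $t_\S\ge (k+2)+(m-j+1)=m+3+(k-j)$. Summing over clause agents with a bijective assignment $k\mapsto j$ gives a total of at least $m(m+3)$, which is what MSF needs. To get each term individually $\ge m+3$, I would again use the no-overtaking order in the dead-end path. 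Agents enter through $p$ one per step, and $a_{c_k}$ reaches $p$ no earlier than $k+2$. The $r$-th entrant therefore enters no earlier than the $r$-th smallest value of $k+2$, which is $r+2$, and it settles at depth $m-r+1$, i.e.\ at $v'_{c_r}$ walking from $p$. This gives $t_\S\ge r+2+(m-r+1)=m+3$.
\end{itemize}
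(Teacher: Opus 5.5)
Your item (ii) is essentially the paper's argument, but item (i) has a genuine gap. In the clause-agent subcase, the inequality $(k+2)+(m-j+1)\ge m+1$ is false in general. It is equivalent to $j\le k+2$, which does not follow from $k\ge 1$ and $j\le m$. For example, $a_{c_1}$ ending at $v'_{c_m}$ with $m\ge 4$ gets a distance bound of only $4$. So pure distance bounds fail for clause agents as well, not only for variable agents. In the variable-agent subcase you reach $m+1$ only for $j=1$. You then leave $j>1$ open with a fallback (a bound on the sum only), which does not prove the per-agent statement. Your rank is also off: the agent ending at $v'_{c_j}$ is the $j$-th, not the $(m-j+1)$-th, to make its final entry. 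With the wrong rank the clean bound cannot appear.

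The missing step is the counting you already use in (ii), with the trivial lower bound $1$ in place of $k+2$. For each agent $a$ ending in $\S^c$, let $\tau_a$ be its \emph{last} visit to $p$. This is the right notion of ``entry'', since agents may enter and leave the dead end. Because $|\A|=|\S|$, all of $v'_{c_1},\dots,v'_{c_j}$ are occupied at time $T$. After $\tau_a$ each such agent stays in $\S^c$, and two agents on this line cannot change their relative order without a vertex or swapping conflict. Hence the agent $\bar a$ ending at $v'_{c_j}$ has strictly the largest $\tau$ among these $j$ agents. These $\tau$'s are distinct integers, since only one agent is at $p$ at a time, and all are $\ge 1$, since no agent starts at $p$. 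Therefore $\tau_{\bar a}\ge j$ and $t_\S(\pi_{\bar a})\ge j+d(p,v'_{c_j})=m+1$, whether $\bar a$ is a clause or a variable agent. This is the paper's proof, and it makes your case split unnecessary.
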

\begin{proof} 
    Suppose $\bar a$ is an agent such that $\pi_{\bar a}(T)=v'_{c_j}$ for some $j\in[1..m]$. Let $\A'$ be the set of agents (tasked or untasked) $a'$ such that $\pi_{a'}(T)\in \{v'_{c_1},\dots ,v'_{c_{j-1}}\}$. As every path from origins to any clause-stations passes through $p$, every agent $a\in\A'\cup\{\bar a\}$ must necessarily pass through $p$ in their motion $\pi_a$. Moreover, if we let $\tau_a=\max\{t\in[0..T]\,|\, \pi_a(t)=p\}$ to be the last time that agent $a$ visits the pivot node $p$, we must have 
    \begin{equation}\label{ineq-line}
        \tau_{\bar a}>\tau_{a'}\quad\forall a'\in\A'
    \end{equation}
    This can be seen formally as follows. Consider the natural total order on the line graph induced by the vertices $\S^c\cup\{p\}$, increasing from left to right. Then, $\pi_a(t)\in\S^c$ and $\pi_a(t)<p$ for every $t>\tau_a$ for every $a$ in $\A'\cup\{\bar a\}$. 
    If $\tau_{\bar a}<\tau_{a'}$ for some $ a'\in\A'$, then, both agents $\bar a$ and $a'$ are in $\S^c\cup\{p\}$ from $\tau_{a'}$ on. Since, by construction, $\pi_{\bar a}(\tau_{a'})<\pi_{a'}(\tau_{a'})=p$, the two agents must maintain such order of position at any future times (as we are assuming the absence of vertex or edge conflicts). However, this contradicts the fact that $\pi_{a'}(T)<v'_{c_j}=\pi_{\bar a}(T)$.
    
    As the absence of conflicts implies that $a\mapsto \tau_a$ is injective among the agents passing through $p$, inequality \eqref{ineq-line} yields $\tau_{\bar a}\geq j$. Finally,
    \[t_{\S}(\pi_{\bar a})\geq \tau_{\bar a}+d(p, v'_{c_j})\geq j+m-j+1=m+1,\]
    proving (i). If $\Pi$ is non-crossing, in the above considerations, we have that $\A'\subseteq \A^t$ and the stronger inequality $\tau_{a'}\geq 3$ for every $a'\in\A'$. The estimation above now reads
    \[t_{\S}(\pi_{\bar a})\geq \tau_{\bar a}+d(p, v'_{c_j})\geq j+2+m-j+1=m+3.\]
    Finally if $a\in\A^u$, $t_{\S}(\pi_{a})\geq m+3$ because of estimations (i) and (ii) in Lemma \ref{lemma:dist}. This proves (ii).
\end{proof}
    
\begin{lemma} \label{lem:sf-exact-time}
    Given any solution $\Pi= \{\pi_i\}_{i\in \A}$ of the constructed instance $\I$, if $\Sigma_\S^\I(\Pi)\leq(m+n)(m+3)$, then $\Pi$ is non-crossing.
\end{lemma}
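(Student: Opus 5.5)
We argue by contraposition: assume $\Pi$ is crossing and show $\Sigma_\S^\I(\Pi)>(m+n)(m+3)$. We compare each $t_\S(\pi_a)$ with the benchmark value $m+3$. Call the amount by which an agent beats $m+3$ its \emph{deficit}, and the amount by which it exceeds $m+3$ its \emph{excess}. It then suffices to show that the total excess strictly exceeds the total deficit.

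\textbf{Counting the crossings.} Since $|\S|=|\S^c|+|\S^x|=m+n=|\A|$, every station is occupied at time $T$. Hence a crossing solution has some $k\geq 1$ clause agents ending in $\S^x$ and exactly $k$ variable agents ending in $\S^c$.
\begin{itemize}
\item[(a)] \emph{Clause agents in $\S^x$.} Let $a_{c_{j_1}},\dots,a_{c_{j_k}}$, with $j_1<\dots<j_k$, be the clause agents ending in $\S^x$. Each must visit $p$ and then travel to $\S^x$. As in the proof of Lemma~\ref{lemma:noncross}, items (iii) and (v) of Lemma~\ref{lemma:dist} give $t_\S\geq m+6+j_r$. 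So agent $a_{c_{j_r}}$ has excess at least $3+j_r\geq 3+r$.
\item[(b)] \emph{Variable agents in $\S^x$.} By items (i) and (ii) of Lemma~\ref{lemma:dist}, these have no deficit.
\item[(c)] \emph{Agents in $\S^c$.} These are the only possible source of deficit. Lemma~\ref{lemma:cross}(i) bounds each such deficit by $2$.
\end{itemize}

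\textbf{Bounding the deficit more finely.} I would reuse the argument of Lemma~\ref{lemma:cross}. Let $\tau_h$ be the last time at which the agent ending at $v'_{c_h}$ visits $p$. The ordering argument there shows $\tau_1<\tau_2<\dots<\tau_m$, so $e_h:=\tau_h-h$ is nonnegative and nondecreasing in $h$. That agent satisfies
\[
t_\S\geq \tau_h+m-h+1=m+1+e_h,
\]
so its deficit is at most $\max(0,2-e_h)$. Deficit therefore arises only in an initial block of positions where $e_h\in\{0,1\}$, i.e.\ $\tau_h\leq h+1$.

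A clause agent $a_{c_j}$ cannot reach $p$ before time $j+2$ (Lemma~\ref{lemma:dist}(v)). Hence a clause agent sitting at position $h$ with positive deficit has origin index $j\leq \tau_h-2\leq h-1$. Positions with $\tau_h\leq 2$ can only be filled by variable agents, and there are just $k$ of these in $\S^c$.

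\textbf{Matching deficit against excess (main obstacle).} The key step is to charge the deficit of clause agents in $\S^c$ to the excess of clause agents in $\S^x$. Suppose $D$ positions of $\S^c$ carry positive deficit and are filled by clause agents. Each such agent has a distinct origin index $j\leq h-1$, and each reaches $p$ at a distinct time, because $a\mapsto\tau_a$ is injective.

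The early arrivals at $p$ are therefore consumed by low-index clause agents together with the $k$ variable agents. I then plan to show that the clause agents ``displaced'' into $\S^x$ necessarily have origin indices large enough for $\sum_r(3+j_r)$ to exceed the total deficit. Note that the total deficit is at most $2k$ from the variable agents plus $2D$ from clause agents in $\S^c$.

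My first attempt would be a direct pigeonhole on arrival times at $p$. The times $1,\dots,\tau_H$ in the deficit block can host at most $\tau_H$ distinct agents. The clause agents among them have origin indices at most $\tau_H-2$. Consequently the clause agents not served early, which include all $k$ sent to $\S^x$, must contain indices whose sum grows with $H$, giving excess at least $2H+k$.

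If this counting proves too loose, the fallback is an exchange argument. Modify $\Pi$ by reassigning stations along the crossing pairs and show that each crossing pair strictly increases flowtime by at least $1$. Together with the baseline value $m+3$ per agent from Lemma~\ref{lemma:cross}(ii) for non-crossing solutions, this would again yield $\Sigma_\S^\I(\Pi)>(m+n)(m+3)$.
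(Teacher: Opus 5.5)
Your decomposition in (a)--(c) is correct and matches the paper's. The decisive step, showing that total excess beats total deficit, is only announced (``I plan to show'', plus a fallback), so the proof is not there. Worse, the planned pigeonhole cannot succeed, because capping each $\S^c$-deficit at $2$ and treating deficits as nonnegative per agent discards too much. Here is a valid crossing solution that breaks it:
\begin{itemize}
\item take a formula in which every clause contains $x_1$ unnegated;
\item let $a_{x_1}$ go $v_{x_1}\to p$ and down the line to $v'_{c_1}$ (station time $m+1$);
\item let each $a_{c_j}$ with $j<m$ run along the upper path of $x_1$ to $p$ (time $j+2$) and then down the line to $v'_{c_{j+1}}$ (station time $m+2$);
\item let $a_{c_m}$ pass $p$ at time $m+2$ and return along the lower path of $x_1$ to $v'_{x_1}$ (station time $2m+6$);
\item send every other variable agent straight to its own station.
\end{itemize}
This solution is conflict-free and has $k=1$, $D=m-1$, deficit block $H=m$, and total excess exactly $m+3$. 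So your target ``excess $\geq 2H+k$'' is false, and ``excess $>2k+2D=2m$'' fails for every $m\geq 3$. The exchange fallback is not an argument either: reassigning stations forces you to build new conflict-free paths, and you have no control over how the $t_\S$ values change.

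The missing idea is an index-dependent, \emph{signed} bound for clause agents that end in $\S^c$. Your observation $j\leq h-1$ points the right way, but you need it quantitatively. By Lemma~\ref{lemma:dist}(iv),(v), if $a_{c_j}$ ends at $v'_{c_{\ell_j}}$ then $t_\S(\pi_{a_{c_j}})\geq (j+2)+(m-\ell_j+1)$, that is, $t_\S(\pi_{a_{c_j}})-(m+3)\geq j-\ell_j$. You must keep this bound signed, so that agents with $\ell_j<j$ retain the surplus it gives them. Combining it with Lemma~\ref{lemma:cross}(i) for the $k$ crossing variable agents and with your bound (a) yields
\[
\Sigma_\S^\I(\Pi)-(m+n)(m+3)\;\geq\;-2k+\sum_{r=1}^{k}(3+j_r)+\sum_{a_{c_j}\text{ in }\S^c}(j-\ell_j)\;=\;k+\sum_{j=1}^{m}j-\sum_{a_{c_j}\text{ in }\S^c}\ell_j\;\geq\;k,
\]
since the $\ell_j$ are $m-k$ distinct elements of $[1..m]$. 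This is exactly the paper's computation. The $j_r$ part of the excess of the clause agents sent to $\S^x$ pays for all index shifts $\ell_j-j$ in $\S^c$. The remaining $3$ per crossing clause agent then beats the at most $2$ lost per crossing variable agent, so no $\tau$-ordering refinement is needed.
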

\begin{proof} 
    Let
    $\A^u(\Pi)=\{a\in \A^u\,|\, \pi_a(T)\in \S^c\}$,  
    $\A^t(\Pi)=\{a\in \A^t\,|\, \pi_a(T)\in \S^x\}$, $\A^u(\Pi)^c=\A^u\setminus \A^u(\Pi)$, and  $\A^t(\Pi)^c=\A^t\setminus \A^t(\Pi)$. Since $|\A|=|\S|$, necessarily, $q=|\A^u(\Pi)|=|\A^t(\Pi)|$. For every tasked agent $a_{c_j}\in  A^t(\Pi)^c$, we let $\ell_j\in [1..m]$ to be the index such that $\pi_{a_{c_j}}(T)=v'_{c_{\ell_j}}$. We now estimate the station-flowtime as follows:
    \begin{equation}
    \begin{array}{rcl}
        \Sigma_\S^\I(\Pi)&=&\sum\limits_{a\in\A}t_{\S}(\pi_a)\\[5pt]
        &=& \sum\limits_{a\in\A^u(\Pi)}t_{\S}(\pi_a)+\sum\limits_{a\in\A^u(\Pi)^c}t_{\S}(\pi_a)\\
        &+&\sum\limits_{a\in\A^t(\Pi)}t_{\S}(\pi_a)+\sum\limits_{a\in\A^t(\Pi)^c}t_{\S}(\pi_a)\\[5pt]
        &\geq& q(m+1)+(n-q)(m+3)\\[2pt]
        &+&\sum\limits_{a_{c_j}\in\A^t(\Pi)}(j+2+m+4)\\
        &+&\sum\limits_{a_{c_j}\in\A^t(\Pi)^c}(m+3+j-\ell_j)\\
        &=& q(m+1)+(n-q)(m+3)\\
        &+&(m+6)q+(m+3)(m-q) \\
        &+&\sum\limits_{j=1}^mj-\sum\limits_{a_{c_j}\in\A^t(\Pi)^c}\ell_j\\
        &\geq& (m+n)(m+3)+q,
    \end{array}
    \end{equation}
    where we have used Lemmas \ref{lemma:dist} and \ref{lemma:cross} in the first inequality, and the fact that $j\mapsto \ell_j$ is an injective map in the second inequality. Thus, the assumption implies that $q=0$, telling us  $\Pi$ is non-crossing.
\end{proof}

\begin{proposition}\label{prop:MSF} 
    Given any solution $\Pi= \{\pi_i\}_{i\in \A}$ of the constructed instance $\I$, $\Sigma_\S^\I(\Pi)\leq (m+n)(m+3)$ if and only if  $M_\S^\I(\Pi)\leq m+3$.
\end{proposition}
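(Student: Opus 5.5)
Both directions rest on Lemma~\ref{lemma:cross}(ii): a non-crossing solution has every agent's time-at-station at least $m+3$. Since the instance has exactly $|\A|=m+n=|\S|$ agents, the flowtime is a sum of $m+n$ terms, and the argument is a counting one.

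\emph{If direction.} Assume $M_\S^\I(\Pi)\le m+3$. Every $t_\S(\pi_a)\le M_\S^\I(\Pi)\le m+3$, so summing over the $m+n$ agents immediately gives $\Sigma_\S^\I(\Pi)\le (m+n)(m+3)$. No structural lemma is needed here.

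\emph{Only if direction.} Assume $\Sigma_\S^\I(\Pi)\le (m+n)(m+3)$. I would proceed in three steps.
\begin{enumerate*}[label=(\alph*)]
\item Lemma~\ref{lem:sf-exact-time} gives that $\Pi$ is non-crossing.
\item Lemma~\ref{lemma:cross}(ii) then gives $t_\S(\pi_a)\ge m+3$ for every agent $a$.
\item A sum of $m+n$ terms, each at least $m+3$, whose total is at most $(m+n)(m+3)$ must have every term equal to $m+3$.
\end{enumerate*}
Hence $\max_a t_\S(\pi_a)=m+3$, so $M_\S^\I(\Pi)\le m+3$.

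Essentially all the difficulty has already been absorbed into Lemma~\ref{lem:sf-exact-time}, which excludes crossing solutions via the excess term $+q$. The remaining point that needs care is step (b): Lemma~\ref{lemma:cross}(ii) must apply to every agent, untasked ones included. For untasked agents this follows from items (i)--(ii) of Lemma~\ref{lemma:dist} together with non-crossing. For tasked agents it follows from the ordering argument at $p$, which gives $\tau_{a'}\ge 3$ for the tasked agents that settle to the left. I would simply cite that lemma as stated.
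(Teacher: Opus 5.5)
Your proposal is correct and follows the paper's proof essentially verbatim. The paper likewise treats the `if' direction as immediate (a sum of $m+n$ terms, each at most $m+3$); for the converse it invokes Lemma~\ref{lem:sf-exact-time} to get non-crossing, then Lemma~\ref{lemma:cross}(ii) to get $t_\S(\pi_a)\ge m+3$ for every agent, which forces every term to equal $m+3$, and your remark on why Lemma~\ref{lemma:cross}(ii) covers both tasked and untasked agents matches how that lemma is proved.
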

\begin{proof}
    We only need to show the ``only if'' part.  Assume that $\Sigma_\S^\I(\Pi)\leq (m+n)(m+3)$. We know from Lemma \ref{lem:sf-exact-time} that $\Pi$ is non-crossing. Then, Lemma \ref{lemma:cross} implies that $t_\S(\pi_a)\geq m+3$ for every agent $a$. Consequently, $t_\S(\pi_a)= m+3$ for every $a\in\A$ and thus $M_\S^\I(\Pi)= m+3$.
 \end{proof}
  
\begin{theorem}\label{thm:msf-hardness}
	The MSF decision problem is \NP-hard.
\end{theorem}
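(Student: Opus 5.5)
The plan is to reuse the 3SAT reduction that already gave Theorem~\ref{thm:msm-hardness}, keeping the constructed instance $\I$ unchanged and changing only the threshold. Given a 3SAT instance $(X,C)$ with $n$ variables and $m$ clauses, we build the same graph $\G$, single pivot $p$, stations $\S=\S^c\cup\S^x$, clause agents (tasked) and variable agents (untasked). We then ask the MSF decision problem with bound $k=(m+n)(m+3)$. The construction has $O(nm)$ vertices and edges, since each variable gadget consists of two paths of length $m+3$ and clause vertices add at most three edges each. It is therefore polynomial, and the threshold is polynomially bounded.

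For correctness we argue both directions via Proposition~\ref{prop:MSF}, which states that on this instance $\Sigma_\S^\I(\Pi)\leq (m+n)(m+3)$ holds exactly when $M_\S^\I(\Pi)\le m+3$.

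\emph{Satisfiable implies small flowtime.} If $(X,C)$ is satisfiable, Proposition~\ref{prop:MSMA} yields a solution $\Pi$ with $M_\S^\I(\Pi)\le m+3$. Directly, every one of the $m+n$ agents has $t_\S\le m+3$, so the sum is at most $(m+n)(m+3)$. The equivalence in Proposition~\ref{prop:MSF} gives the same conclusion.

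\emph{Small flowtime implies satisfiable.} Conversely, suppose some solution has $\Sigma_\S^\I(\Pi)\le (m+n)(m+3)$. Proposition~\ref{prop:MSF} gives $M_\S^\I(\Pi)\le m+3$, and Proposition~\ref{prop:MSMB} then shows $(X,C)\in$ 3SAT.

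All the substantive work is already in Lemma~\ref{lem:sf-exact-time} and Lemma~\ref{lemma:cross}. They supply the counting argument that forces a non-crossing assignment and the lower bound $m+3$ on every agent's time-at-station. The only step requiring care is confirming that Proposition~\ref{prop:MSF} is quantified over all solutions of $\I$, so it applies to whatever witness $\Pi$ the decision problem supplies. It is, so the theorem follows, and it holds already with a single pivot.
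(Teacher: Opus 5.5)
Your proposal is correct and matches the paper's proof, which likewise keeps the instance from the MSM reduction, sets the threshold to $(m+n)(m+3)$, and combines Propositions~\ref{prop:MSMA} and~\ref{prop:MSMB} (packaged as Theorem~\ref{thm:msm-hardness}) with the equivalence in Proposition~\ref{prop:MSF}. Your version adds two details the paper's one-line proof leaves implicit: you spell out both directions, and you invoke the specific reduction rather than only the statement that MSM is \NP-hard.
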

\begin{proof} 
    Consequence of Theorem \ref{thm:msm-hardness} and Proposition \ref{prop:MSF}.
\end{proof}

Thms.~\ref{thm:msm-hardness} and~\ref{thm:msf-hardness} highlight a sharp complexity jump: pivot-free PS-MAPF is anonymous MAPF, hence polynomial-time solvable, yet our reduction uses a single pivot, so combining anonymous destinations with even one pivot-visit requirement makes both problems intractable.

%% file: Sections/algorithms.tex
\section{Algorithms}
\label{sec:algorithm}
Building on Secs.~\ref{sec:existence} and~\ref{sec:complexity}, we develop three algorithms: a \emph{complete baseline} (BA, Sec.~\ref{sec:baseline}) derived from the constructive solvability analysis, which solves every solvable instance but with very poor quality; a \emph{SAT-based solver} (Sec.~\ref{subsec:optimal}) computing makespan-optimal solutions at worst-case exponential cost; and, between the two, polynomial-time \emph{Pivot-Prioritized Planning} (PPP, Sec.~\ref{sec:ppp}), of far higher quality than BA at the price of completeness: it may fail on solvable instances, but any solution it returns is correct.

\subsection{Baseline Algorithm (BA)}
\label{sec:baseline}
Our solvability analysis in Section~\ref{sec:existence} is constructive and directly yields a complete algorithm, which we adopt as a baseline and call \emph{BA}. The algorithm processes tasked agents one at a time: for each tasked agent, it first brings the fleet into a $p$-adapted configuration (Proposition~\ref{lem:adapted}) and then routes the agent to a pivot via the motion constructed in the proof of Lemma~\ref{lemma:solution-one-agent}. Once all tasked agents have passed through a pivot, the residual instance is pivot-free and is solved as an anonymous MAPF problem (Proposition~\ref{prop:reachability}). Concatenating these feasible motions yields a solution whenever one exists (Theorem~\ref{theorem:solution-agents}), making the algorithm complete. BA, however, prioritizes completeness over solution quality: its sequential, one-agent-at-a-time construction yields station-makespan and station-flowtime values that leave substantial room for improvement.

\begin{algorithmSM}[htb!]
	\caption{Baseline Algorithm (BA)}
	\label{alg:baseline-algorithm}
	\begin{algorithmic}[1]
		\renewcommand{\algorithmicrequire}{\textbf{Input:}}
		\renewcommand{\algorithmicensure}{\textbf{Output:}}
		\REQUIRE Feasible Instance $\I=(\G,\P,\S,\A^t,\A^u,\o)$
		\ENSURE Global Solution $\Pi$
		
		\STATE $\Pi \leftarrow \emptyset$, $\sigma \leftarrow \o$
		\STATE $\{\mc N, \mc F\} \leftarrow \textsc{2EdgeCondensationTree}(\G)$
		\STATE $\{\mc N_1^0, \mc N_1^1, \mc N_{>1}\} \leftarrow \textsc{Decompose}(\mc N)$
		
		\FORALL{$a \in \A^t$}
		\STATE $\Pi' \leftarrow \emptyset$
		
		\STATE $p, \pi_{setup} \leftarrow \textsc{SelectActivatablePivot}(a, \sigma, \P, \G)$
		\STATE $\textsc{ApplyMotion}(\pi_{setup}, \sigma)$
		\STATE append $\pi_{setup}$ to $\Pi'$
		
		\STATE $P \leftarrow \textsc{ShortestPath}(\sigma(a), p, (\mc N, \mc F))$
		\STATE $\{m_0, \dots, m_r\} \leftarrow \textsc{DecomposePath}(P, \mc N_1^1 \cup \mc N_{>1})$
		
		\FORALL{$h=0$ to $r-1$}
		\STATE $u \leftarrow m_h$, $v \leftarrow m_{h+1}$
		\IF{$u \in \mc{N}_{>1}$}
		\STATE $\pi_{clear} \leftarrow \textsc{Clear2EC}(u, v, \G, \sigma, a)$
		\ELSE 
		\STATE $\pi_{clear} \leftarrow \textsc{ClearBranching}(u, v, \G, \sigma, a)$
		\ENDIF
		\STATE $\textsc{ApplyMotion}(\pi_{clear}, \sigma)$
		\STATE append $\pi_{clear}$ to $\Pi'$
 
		\STATE $\pi_{adv} \leftarrow \textsc{Advance}(u, v, \G, \sigma, a)$
		\STATE $\textsc{ApplyMotion}(\pi_{adv}, \sigma)$
		\STATE append $\pi_{adv}$ to $\Pi'$
		\ENDFOR
		
		\IF{$m_r \in \mc N_{>1}$}
		\STATE $\pi_{adv} \leftarrow \textsc{Advance}(m_r, p, \G, \sigma, a)$
		\STATE $\textsc{ApplyMotion}(\pi_{adv}, \sigma)$
		\STATE append $\pi_{adv}$ to $\Pi'$
		\ENDIF
		
		\STATE $\Pi'^- \leftarrow \textsc{InvertMotion}(\Pi')$
		\STATE $\textsc{ApplyMotion}(\Pi'^-, \sigma)$
		\STATE append $\Pi'$ followed by $\Pi'^-$ to $\Pi$
		
		\ENDFOR
		
		\STATE $\pi_{final} \leftarrow \textsc{ARouting}(\G, \sigma, \S)$
		\STATE append $\pi_{final}$ to $\Pi$
		
		\STATE \textbf{return} $\Pi$
	\end{algorithmic}
\end{algorithmSM}

In \emph{Phase 1}, tasked agents sequentially compute and execute paths to a pivot, subsequently rewinding the system state to ensure independent solvability (lines 4-32). The procedure relies on the graph's 2-edge condensation tree, which is computed (line 2) and partitioned into distinct component types (line 3): non-singletons, branching singletons, and degree-two singletons. For a given tasked agent $a \in \A^t$, the algorithm identifies an activatable pivot $p$ (line 6) and determines the shortest path of components in the condensation tree leading to it (lines 9-10). The core of the pivot-routing logic (lines 11-23) iteratively prepares the graph for the agent's progression: it repositions the unoccupied vertices (holes) to "clear" the way to the next component along the sequence. This is achieved using specific subroutines depending on whether the current component is a non-singleton 2-edge component or a branching singleton (lines 13-17), before advancing the agent into it (lines 19-21). A final advance step is performed if the pivot lies within a non-singleton component (lines 24-28). Crucially, to prevent agents from blocking each other, the algorithm inverts the agent's entire motion sequence $\Pi'$ into $\Pi'^-$ (line 29). Applying this inverse motion (line 30) completely rewinds the global state $\sigma$ back to the initial configuration $\o$. The concatenated forward and reverse motions are appended to the global solution (line 31), guaranteeing that every subsequent agent operates under the exact same initial conditions without interference.

In \emph{Phase 2}, after all tasked agents have visited their designated pivots and the global state has been safely restored to $\sigma = \o$, the problem reduces to a pivot-free instance. The algorithm executes an anonymous MAPF routing (line 33) to direct the entire fleet to the final station vertices in $\S$. Because all pivot requirements are already satisfied in Phase 1, this final step leverages classical flow-based techniques to move the agents to their destinations without tracking their identities. The resulting motion $\pi_{final}$ is appended to complete the global solution $\Pi$ (lines 34-36).

\textbf{Complexity Analysis.} Let $n=|\V|$, $m=|\E|$. Computing the 2-edge condensation tree and its decomposition (lines 2-3) takes $\mc{O}(n + m)$ time. In Phase 1, the outer loop processes $|\A^t| \le n$ tasked agents. For each agent, \textsc{SelectActivatablePivot} (line 6) evaluates at most $|\P| \le n$ pivots. For each candidate, verifying condition \eqref{cond-existence} requires reducing the system to a $p$-adapted configuration. Because this sub-problem is \emph{anonymous}, we only need to route at most $n$ unoccupied vertices (holes). Repositioning these holes along a spanning tree requires at most $\mc{O}(n^2)$ operations \cite{yu2013multiagent}. Crucially, because the global state is rewound to $\o$ after each agent, this evaluation can be computed once offline for all pivots and cached, taking $\mc{O}(|\P| n^2)$ time globally across the entire algorithm. Determining the shortest path in the condensation tree (line 9) takes $\mc{O}(n)$. The inner loop (lines 11-23) advances the agent across at most $n$ components. At each step, repositioning the holes (\textsc{Clear2EC}, \textsc{ClearBranching}) reduces to an anonymous MAPF problem costing $\mc{O}(n^2)$ operations \cite{yu2013multiagent}. Advancing the agent through a 2-edge-connected component (as established in Theorem \ref{cor:general-solutions}) requires finding two edge-independent paths and performing synchronous cycle rotations; this moves at most $n$ agents along a path of length at most $n$, taking $\mc{O}(n^2)$ time. Thus, clearing and advancing costs $\mc{O}(n^2)$ per component, leading to $\mc{O}(n^3)$ for the entire inner loop. Generating and applying the inverse motion (lines 29-30) traces back this sequence, taking another $\mc{O}(n^3)$. Processing a single agent therefore takes $\mc{O}(n^3)$ time, and over all tasked agents, Phase 1 costs $\mc{O}(|\A^t| n^3) \le \mc{O}(n^4)$. In Phase 2, \textsc{ARouting} (line 33) solves a final pivot-free instance; applying flow-based anonymous routing requires time at most $\mc{O}(n^3)$. Therefore, Phase 1 dominates, and the Baseline Algorithm runs in $\mc{O}(|\A^t| n^3) \le \mc{O}(n^4)$ worst-case time, proving it is strictly polynomial in the instance size.

\subsection{Pivot-Prioritized Planning (PPP)}
\label{sec:ppp}

PPP trades completeness for solution quality and speed, working in \emph{two phases}: tasked agents are first routed to a pivot; the residual configuration is then an anonymous MAPF problem in which all agents move to the stations.

\begin{algorithmSM}[htb]
    \caption{Pivot-Prioritized Planning (PPP)}
    \label{alg:prioritized_planning_tasked_oriented}
    \small
    \begin{algorithmic}[1]
        \REQUIRE Instance $\I = (\G, \P, \S, \A^t, \A^u, \o)$, Priority order $\sigma = (\sigma_1, \dots, \sigma_{|\A^t|})$
        \ENSURE Valid solution $\Pi$, or $\emptyset$ if failure
        
        \STATE $\mc{R} \gets \{ (\o(a), t) \mid a \in \A^t, t \in [0, \infty) \}$
        
        \FOR{$j = 1$ \TO $|\A^t|$}
            \STATE $a \gets \sigma_j$
            \STATE $\mc{R} \gets \mc{R} \setminus \{ (\o(a), t) \mid t \in [0, \infty) \}$
            \STATE $\pi_a \gets \text{ST-A}^*\big(\o(a), \P, \G, \mc{R}, \max\big(\{0\} \cup \{\tau_{\sigma_i}\}_{i<j}) + |\V|\big)$
            \IF{$\pi_a$ is $\emptyset$} 
                \RETURN $\emptyset$ 
            \ENDIF
            \STATE $\tau_a \gets \text{arrival\_time}(\pi_a, \P)$
            \STATE $\mc{R} \gets \mc{R} \cup \text{Reserve}(\pi_a)$
        \ENDFOR
        
        \STATE $\tau_{\max} \gets \max\big(\{0\} \cup \{\tau_a \mid a \in \A^t\}\big)$
        
        \FOR{$T = \tau_{\max}$ \TO $\tau_{\max} + 2 |\V|$}
            \STATE $f \gets \text{MaxFlow}\big(\G_{ext,un}^{(T)}(\I, \mc{R}, \{(\pi_a, \tau_a)\}_{a \in \A^t})\big)$
            \IF{$|f| == |\A|$}
                \STATE $\Pi \gets \text{ReconstructPaths}(f, \{\pi_a\})$
                \RETURN $\Pi$
            \ENDIF
        \ENDFOR
        \RETURN $\emptyset$
    \end{algorithmic}
\end{algorithmSM}

In \emph{Phase 1}, tasked agents sequentially compute conflict-free paths to a pivot in priority order $\sigma$ (lines 2-11), planning as if untasked agents were absent: moving them out of the way is deferred entirely to Phase 2. The start locations of lower-priority tasked agents are static obstacles: line 1 places an infinite-horizon reservation on every tasked origin in the table $\mc{R}$, and line 4 lifts only the current agent's own reservation before its Space-Time A* search \cite{silver2005cooperative} (line 5); $\mc{R}$ stores occupied (vertex, time) and traversed (edge, time) pairs, preventing both vertex and swapping conflicts. If a tasked agent's path traverses the initial location of an untasked agent at some $t > 0$, the crossing simply becomes a constraint on Phase 2, which must move that agent away before time $t$ or fail. To guarantee termination even when a pivot is unreachable, the search horizon is bounded by the latest arrival time of any previously routed agent plus $|\V|$ steps; if no path is found within it, the algorithm fails (lines 6-8). Otherwise, $\pi_a$ and its arrival time $\tau_a$ are stored, and $\mc{R}$ is updated (lines 9-10) with all reservations induced by $\pi_a$, each traversed edge recorded in both directions.
	
In \emph{Phase 2}, all agents must reach the stations in $\S$ without collisions; exploiting anonymity, we model this as a maximum flow (line 14) on a customized time-expanded graph for a horizon $T \ge \tau_{\max} = \max_{a \in \A^t} \tau_a$ ($\tau_{\max} = 0$ if $\A^t = \emptyset$, line 12). As in \citet{yu2013multiagent}, our graph $\G_{ext,un}^{(T)}$ is built from the time-expanded graph of $\G$ for horizon $T$, adding a super-source and a super-sink. We prune all reserved nodes and their incident edges except for the final arrival nodes of tasked agents, namely all $(v,t) \in \mc{R} \setminus J$, where $J = \{(\pi_a(\tau_a),\tau_a) \mid a \in \A^t\}$. This prevents conflicts between the two phases. The super-source connects, through unit-capacity arcs, to the start locations of untasked agents at $t=0$ and to the pivot locations of tasked agents at their arrival times $t=\tau_a$ (the \emph{injection nodes}); the super-sink connects to all station vertices at time $T$.	
If a flow of value $|\A|$ is found (line 15), the paths are extracted (line 16): tasked agents concatenate their Phase 1 and flow paths, untasked agents use the flow path; otherwise the horizon is incremented, failing once the window is exhausted (line 20). The window searched in line 13 is strictly sufficient:

\begin{lemma}
\label{lem:horizon}
If a Phase 2 flow of value $|\A|$ exists for some horizon $T' \ge \tau_{\max}$, then one exists for some $T \le \tau_{\max} + 2|\V| - 1$.
\end{lemma}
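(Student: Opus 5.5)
The plan is to split any horizon-$T'$ flow at time $\tau_{\max}$, keep its prefix, and replace its suffix with a short anonymous-MAPF motion whose length is bounded using the known makespan bound of \citet{yu2013multiagent}. If $T'\le \tau_{\max}+2|\V|-1$ there is nothing to prove, so assume $T' > \tau_{\max}+2|\V|-1$.

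\textbf{Step 1 (the pruned graph is time-invariant after $\tau_{\max}$).} I first check which nodes of $\G_{ext,un}^{(T)}$ are pruned. Every infinite-horizon origin reservation placed in line~1 is lifted in line~4 when its agent is planned. After Phase~1, therefore, $\mc{R}$ contains only the (vertex, time) and (edge, time) pairs of the Phase~1 paths $\pi_a$. These paths occupy only times $t\le\tau_a\le\tau_{\max}$ (I assume here that $\textsc{Reserve}$ records each path only up to its arrival time). The consequences are:
\begin{itemize}
\item no node $(v,t)$ with $t>\tau_{\max}$ is pruned;
\item every injection node lies at a time $\le\tau_{\max}$;
\item for $t\ge\tau_{\max}$, each layer of $\G_{ext,un}^{(T)}$ together with its outgoing gadget arcs is an unpruned copy of the standard time-expanded layer of $\G$.
\end{itemize}

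\textbf{Step 2 (cut the given flow at $\tau_{\max}$).} Let $f'$ be an integral flow of value $|\A|$ for horizon $T'\ge\tau_{\max}$. Decomposing $f'$ into unit paths gives a conflict-free motion: the gadget of \citet{yu2013multiagent} excludes vertex and swapping conflicts, and the pruning excludes conflicts with the Phase~1 paths. At time $\tau_{\max}$ every agent has been injected: untasked agents at time $0$, tasked agents at $\tau_a\le\tau_{\max}$. Hence the flow restricted to layers $0..\tau_{\max}$ induces a configuration $\c:\A\to\V$ at time $\tau_{\max}$, and this prefix is valid for every horizon $T\ge\tau_{\max}$, because the pruned nodes and injection nodes are identical up to $\tau_{\max}$.

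\textbf{Step 3 (short completion).} From $\c$, the remaining task is the pivot-free instance $(\G,\P,\S,\emptyset,\A,\c)$ on the connected graph $\G$ with $|\S|\ge|\A|$. By Prop.~\ref{prop:reachability} it is solvable. I will then invoke the makespan bound of \citet{yu2013multiagent} for unlabeled motion on a connected graph: it gives a solution of makespan at most $|\A|+|\V|-1\le 2|\V|-1$. By Step~1 this motion embeds in layers $\tau_{\max}..T$ of the time-expanded graph with no pruned node in the way, as unit flow paths ending at station copies in the last layer and connected to the sink. Concatenating it with the prefix from Step~2 gives a flow of value $|\A|$ for some $T\le\tau_{\max}+2|\V|-1$. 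If the completion finishes earlier, agents wait at stations, which only requires waiting arcs and is allowed since any $T$ up to the bound is acceptable.

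\textbf{Main obstacle.} The delicate point is the quantitative makespan bound $|\A|+|\V|-1$ for anonymous MAPF. If citing it is not acceptable, I would prove it directly along the lines of \citet{yu2013multiagent}:
\begin{itemize}
\item fix a spanning tree of $\G$;
\item match agents to stations by a min-cost (bottleneck) assignment;
\item move agents along tree paths in a pipelined fashion, swapping roles whenever one agent would block another, so that each agent is delayed at most once per other agent, giving at most $|\A|-1$ delays on top of a path of length at most $|\V|-1$.
\end{itemize}
A secondary detail is the bookkeeping that the prefix and suffix glue at layer $\tau_{\max}$ without a swap conflict across the junction. This holds because the suffix starts with the agents at rest in $\c$, and the prefix ends at that same configuration.
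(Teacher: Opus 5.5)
Your proposal is correct and follows essentially the same route as the paper. It shows that once line~4 lifts the origin reservations, $\mc{R}$ has nothing beyond time $\tau_{\max}$, reads off the configuration at $\tau_{\max}$ from the given flow, and completes it as an unconstrained anonymous MAPF instance using the $|\A|+|\V|-1\le 2|\V|-1$ makespan bound of \citet{yu2013multiagent}. Your extra bookkeeping (the prefix being horizon-independent, the stated assumption that Reserve stops at $\tau_a$, gluing at layer $\tau_{\max}$, and waiting at stations) only makes explicit what the paper uses implicitly, and your fallback sketch of the makespan bound is unnecessary because the paper cites that bound as well.
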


\begin{proof}
    We first analyze the temporal extent of the reservations stored in $\mc{R}$ at the conclusion of Phase 1. Initially, line 1 assigns infinite-horizon reservations to the starting locations of all tasked agents. However, during the execution of the loop (lines 3-11), line 4 systematically removes this infinite-horizon reservation for each agent $a \in \A^t$ prior to computing its path. The only reservations subsequently added to $\mc{R}$ (line 10) are those strictly induced by the spatial and temporal traversal of the computed paths $\pi_a$. Because every path $\pi_a$ terminates at time $\tau_a$, and by definition $\tau_{\max} = \max_{a \in \A^t} \tau_a$, it follows that for all $t > \tau_{\max}$, no reservations exist in $\mc{R}$.

    Now, assume that a valid Phase 2 flow of value $|\A|$ exists for some horizon $T' \ge \tau_{\max}$. This implies that the time-expanded graph $\G_{ext,un}^{(T')}$ supports a collision-free routing, meaning that at time $\tau_{\max}$, all $|\A|$ agents (both tasked and untasked) are positioned at distinct vertices in $\G$ without any vertex or swapping conflicts. Let this valid configuration at time $\tau_{\max}$ be denoted as $\c_{\tau_{\max}}$.

    Because $\mc{R}$ imposes no constraints for $t > \tau_{\max}$, the time-expanded graph for $t > \tau_{\max}$ coincides exactly with the unconstrained time-expanded graph of $\G$. Consequently, the routing problem from the intermediate configuration $\c_{\tau_{\max}}$ to the set of stations $\S$ over the interval $[\tau_{\max}, T']$ reduces to a standard, unconstrained anonymous MAPF instance. 

    According to classical bounds on anonymous multi-agent path finding \cite{yu2013multiagent}, if such an instance is solvable, it can be completed in at most $|\A| + |\V| - 1$ steps. Given that the number of agents is strictly bounded by the number of vertices ($|\A| \le |\V|$), the number of steps required from $\tau_{\max}$ to reach the stations cannot exceed $2|\V| - 1$. Therefore, if a flow exists for $T'$, a valid flow of value $|\A|$ is guaranteed to exist for a total horizon $T$ satisfying $T \le \tau_{\max} + 2|\V| - 1$.
\end{proof}

Phase 2 feasibility is also monotone in $T$ 
so the returned horizon is the minimum admitting a conflict-free completion of the Phase 1 motions.

\begin{proposition}
\label{prop:ppp sound}
 Given a PS-MAPF instance $\I$, if PPP (Algorithm~\ref{alg:prioritized_planning_tasked_oriented}) returns a set of paths $\Pi \neq \emptyset$, then $\Pi \in Sol(\I)$.
\end{proposition}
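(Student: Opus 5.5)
The plan is to verify the two requirements of Definition~\ref{def:solution} separately for the returned $\Pi$: the paths are pairwise conflict-free, and every path is successful. Since PPP returns a nonempty $\Pi$ only at line 16, we may assume that Phase 1 completed for every tasked agent $a\in\A^t$. Each such agent then has a path $\pi_a$ with $\pi_a(0)=\o(a)$ and $\pi_a(\tau_a)\in\P$. We may also assume that, for some horizon $T$, Phase 2 found an integral maximum flow of value $|\A|$ on $\G_{ext,un}^{(T)}$. By integrality, and by the unit capacities on the source arcs, this flow decomposes into $|\A|$ unit flow paths from the super-source to the super-sink.

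\textbf{Successfulness.} The source arcs number exactly $|\A|$: one to each untasked origin $(\o(a),0)$ and one to each injection node $(\pi_a(\tau_a),\tau_a)$. A flow of value $|\A|$ therefore saturates all of them. ReconstructPaths gives each untasked agent the flow path that leaves its origin at time $0$. Each tasked agent gets the concatenation of $\pi_a$ on $[0,\tau_a]$ with the flow path that starts at its injection node.

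The arcs of the time-expanded graph join $(u,t)$ to $(v,t+1)$ whenever $v=u$ or $(u,v)\in\E$, so each reconstructed object is a genuine path of length $T$. Every flow path ends at a station vertex $(s,T)$ before the sink. Hence $t_\S<\infty$, because the final position is a station; strictly, $t_\S\le T$. For tasked agents, $t_\P\le\tau_a<\infty$ because $\pi_a(\tau_a)\in\P$. So all paths are successful.

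\textbf{Conflict-freeness.} There are three kinds of pairs to check.
\begin{enumerate*}[label=(\roman*)]
\item \emph{Two tasked agents on $[0,\tau_{\max}]$.} The Phase 1 paths are mutually conflict-free because each Space-Time A* search respects the reservation table $\mc R$. $\mc R$ contains all earlier paths' vertex--time pairs and both directions of their edge--time pairs. It also contains the infinite reservations on the origins of lower-priority tasked agents, which are still waiting there. Agents already at their pivot after $\tau_a$ are handled by the flow, not by Phase 1.
\item \emph{Two flow paths.} Vertex conflicts are excluded by giving each time-expanded vertex unit capacity (the standard node splitting of \citet{yu2013multiagent}). Swaps are excluded by that construction's edge gadget. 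This shows flow paths are pairwise conflict-free as motions on $\G$.
\item \emph{A Phase 1 segment against a flow segment.} Here we use the pruning of $\mc R\setminus J$. A flow segment can never occupy a vertex--time pair reserved by a Phase 1 path. The only exception is the injection node, which is used exactly by that agent's own continuation. Traversed edges are recorded in $\mc R$ in both directions and their incident time-expanded arcs are pruned, so swaps between the two phases are excluded too.
\end{enumerate*}

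The main obstacle I expect is making (iii) airtight. Two points need care.

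First, Phase 1 ignored the untasked agents. The pruning has to be shown to force untasked flow paths to vacate any cell that a tasked path later crosses. A flow path sitting at $(v,t)$ with $(v,t)\in\mc R$ would use a pruned node, so it is impossible. I will argue exactly this way, treating $\mc R$ as a set of forbidden time-expanded nodes and arcs.

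Second, a tasked agent may reach its pivot at time $\tau_a$ while another agent's Phase 1 reservation on that same vertex at a later time was created before it. This cannot happen, because Space-Time A* avoided all reserved pairs. Symmetrically, after $\tau_a$ the agent's own position is determined by the flow, and the flow is conflict-free with the later reservations by the pruning argument above.

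Padding every path to the common length $T$ with waits at its station, as the definitions allow, then completes the argument that $\Pi\in Sol(\I)$.
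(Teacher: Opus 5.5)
Your proposal is correct and follows the paper's proof essentially step for step: successfulness comes from the saturated unit source arcs and from flow paths that terminate at station nodes $(s,T)$. Conflict-freeness is split into Phase~1/Phase~1 (reservations), flow/flow (unit vertex capacities and the edge gadget), and Phase~1/flow (pruning of $\mc R\setminus J$). In that last case, injection nodes are closed to other agents because each node's saturated source arc already uses its unit capacity; you should state this combination explicitly in (iii) rather than just assert it.

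Your claim that a reservation on $\pi_a(\tau_a)$ at a later time ``cannot happen'' is not justified, since Phase~1 only checks pairs up to $\tau_a$. It is also unnecessary: any such pair is either pruned or another agent's injection node, so $a$'s flow continuation cannot occupy it.
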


\begin{proof}
To prove that $\Pi$ is a solution for $\I$, we must show that $\Pi = \{\pi_i\}_{i\in \A}$ is a feasible motion of length $T$ and that all paths are successful.
First, the algorithm terminates successfully only if an integral maximum flow of value $|\A|$ is found at some horizon $T$. Such a flow decomposes into $|\A|$ arc-disjoint unit paths from the super-source to the super-sink; since each super-source arc has unit capacity and targets a distinct injection node, each unit path is unambiguously assigned to one agent. Every path in $\Pi$ has length exactly $T$: for a tasked agent $a$, the Phase 1 path on $[0..\tau_a]$ is concatenated with its flow path on $[\tau_a..T]$, and the two agree at the handover node $(\pi_a(\tau_a),\tau_a)$, so the concatenation is well defined.

We verify conflict-freedom separately for each kind of pair of path segments. 
(i) \emph{Two Phase 1 paths.} Each Space-Time A* search respects all vertex and edge reservations accumulated in $\mc{R}$ by higher-priority agents, while the origins of lower-priority tasked agents remain statically blocked until their turn; hence, no vertex or swapping conflict arises among Phase 1 paths. 
(ii) \emph{Two Phase 2 paths.} The unit vertex capacities of $\G^{(T)}_{ext}$ make the flow paths vertex-disjoint at every time step, and its edge gadget forbids simultaneous opposite traversals of the same edge, excluding swapping conflicts. 
(iii) \emph{A Phase 1 path of agent $a$ and a Phase 2 path of a distinct agent $b$.} 
Every vertex-time pair occupied by a Phase 1 motion belongs to $\mc R$ and, unless it lies in $J$, is deleted from $\G^{(T)}_{ext,un}$; the flow path of $b$ could therefore meet a Phase 1 motion only at an injection node. An integral flow of value $|\A|$ saturates all $|\A|$ unit-capacity super-source arcs, so each injection node already carries the unit injected by the super-source, and its unit vertex capacity admits no further unit; hence $b$'s flow path visits no injection node other than its own. 
Swapping conflicts between the two phases are excluded because reserved edge traversals are pruned in both directions. Consequently, the only vertex-time pair shared between a Phase 1 motion and a flow path is the handover node of one and the same tasked agent, which is not a conflict. Therefore, $\Pi$ is a feasible motion.    

 Second, we verify that every path is successful. By the initialization of the search spaces and super-source, $\pi_i(0) = \o(i)$ holds for all $i \in \A$. For any untasked agent $i \in \A^u$, the flow in Phase 2 connects its origin at $t = 0$ to the super-sink representing the stations $\S$ at time $T$. Thus, there exists a finite time-at-station $t_\S(\pi_i) \le T < \infty$, making the path successful. 
    For any tasked agent $i \in \A^t$, Phase 1 explicitly routes the agent to a pivot in $\P$ at some time $\tau_i \le \tau_{\max}$. Thus, there exists a finite time-at-pivot $t_\P(\pi_i) \le \tau_{\max} < \infty$. Phase 2 then routes this agent from its pivot location to the super-sink representing $\S$ at time $T$, guaranteeing a finite time-at-station $t_\S(\pi_i) \le T < \infty$. Since all conditions are met, every path in $\Pi$ is successful. 
    
Because $\Pi$ is a feasible motion and all its paths are successful, $\Pi$ is a successful motion. Hence, $\Pi \in Sol(\I)$.
\end{proof}

\textbf{Complexity Analysis.} Let $n=|\V|$, $m=|\E|$. Induction on the line-5 horizon bound yields $\tau_a \le |\A^t| n \le n^2$, so Phase 1 costs $\mc{O}(|\A^t| m (\tau_{\max} + n) \log n)$ over its Space-Time A* searches. Phase 2 tries $\mc{O}(n)$ horizons; for each, $\G_{ext,un}^{(T)}$ has $\mc{O}(mT)$ arcs and a maximum flow needs at most $|\A| \le n$ augmenting-path computations of cost $\mc{O}(mT)$, giving $\mc{O}(n^2 m (\tau_{\max} + n))$ overall, which dominates. Since $\tau_{\max} = \mc{O}(n^2)$, PPP runs in $\mc{O}(n^4 m)$ worst-case time, polynomial in the instance size.

\subsection{SAT-Based Optimal Solver}
\label{subsec:optimal}
Reduction-based algorithms often empirically outperform search-based for classical MAPF in densely occupied environments~\cite{surynek2017,stern2019survey}, the very regime motivating PS-MAPF. Consequently, our optimal solver follows the classical reduction-based strategy of sequentially increasing time horizons~\cite{kautz1992satplan,surynek2017}. We first use BA to determine if $\I$ is solvable and, if so, obtain an upper bound on $T^{\ast}$ which prevents an infinite search. Then, we initialize $T$ with the lower bound of Thm.~\ref{thm:station-makespan-lb} (see below) and construct a pseudo-Boolean formula $\phiT$ that is satisfiable if and only if $\I$ admits a solution with station-makespan at most $T$ (Thm.~\ref{thm:formula-equivalence} below). We query an off-the-shelf solver with $\phiT$, incrementing $T$ and repeating if $\phiT$ is unsatisfiable. The first satisfiable $T$ equals $T^{\ast}$, and the satisfying assignment yields an optimal solution.

The standard ``direct'' encoding \cite{surynek2022survey,stern2019survey} introduces a Boolean variable $\chi_{a,v,t}$ for every agent, vertex, and time step. However, we eliminate the dependency on $|\A|$ by observing that tasked agents are identical to untasked agents once they have visited a pivot, and that constraints need not specify agent identities due to the anonymity of pivots and stations. Therefore, we say an agent is \emph{pivot-oriented} ($q = \P$) if it is tasked and has not yet visited a pivot, and \emph{station-oriented} ($q = \S$) otherwise. We call this encoding \emph{anonymized}, with decision variables $x_{v,t}^q=1$ iff any agent in state $q$ occupies vertex $v$ at time $t \in [0..T]$, and $y_{u,v,t}^q=1$ iff an agent in state $q$ moves from $u$ to $v$ between times $t-1$ and $t \in [1..T]$; the latter is defined for all $(u,v) \in \E$ and for $u = v$ (wait actions). Since reaching a pivot immediately transitions an agent to state $\S$, a pivot-oriented agent never occupies a pivot: for $q=\P$, $x^q_{v,t}$ is defined only for $v \notin \P$ and $y^q_{u,v,t}$ only for $u \notin \P$. Any variable outside its defined domain is $0$. With these variables, $\phiT$ is the conjunction of the constraints in Fig.~\ref{fig:formula}, where $N[v] =$ $\{u \in \V \mid (u,v) \in \E\}$ $\cup \{v\}$ is the closed neighborhood of $v$.

\newlength{\widestCleft}
\settowidth{\widestCleft}{$\displaystyle \bigwedge_{v \notin \o(\A^t\cup\A^u)} \neg \xs{v}{0}$}
\begin{figure}[tb]
    \small
    \setlength{\jot}{1pt}
    \begin{align*}
        &\bigwedge_{v \notin\P}\bigwedge_{t=0}^T \neg \xp{v}{t} \vee \neg \xs{v}{t} \tag{C1a}\label{const:vertex-conflict} \\
        &\bigwedge_{t=1}^T \bigwedge_{(u,v) \in \E} (\neg y_{u,v,t}^{\P} \land \neg y_{u,v,t}^{\S}) \lor (\neg y_{v,u,t}^{\P} \land\neg y_{v,u,t}^{\S}) \tag{C1b}\label{const:swapping-conflict} \\
        &\bigwedge_{t=0}^{T-1} \bigwedge_{v \notin\P} (\xp{v}{t} = \sum_{u \in N[v]} \yp{v}{u}{t+1} ) \tag{C2a}\label{const:po-npv-out} \\
        &\bigwedge_{t=1}^{T} \bigwedge_{v \notin\P} (\xp{v}{t} =\sum_{u \in N[v]\setminus\P} \yp{u}{v}{t}) \tag{C2b}\label{const:po-npv-in} \\
        &\bigwedge_{t=0}^{T-1} \bigwedge_{v \in \V} (\xs{v}{t} = \sum_{u \in N[v]} \ys{v}{u}{t+1} ) \tag{C2c}\label{const:so-out} \\
        &\bigwedge_{t=1}^{T} \bigwedge_{v \notin\P} (\xs{v}{t} = \sum_{u \in N[v]} \ys{u}{v}{t} ) \tag{C2d}\label{const:so-npv-in} \\
        &\bigwedge_{t=1}^{T} \bigwedge_{v \in\P} (\xs{v}{t} = \sum_{u \in N[v]\setminus\P} \yp{u}{v}{t} + \sum_{u \in N[v]} \ys{u}{v}{t} ) \tag{C2e}\label{const:pv-in} \\
        &\makebox[\widestCleft][l]{$\displaystyle \bigwedge_{v \in \o(\A^u)} \xs{v}{0}$} \wedge \bigwedge_{v\in\o(\A^u)\setminus\P} \neg\xp{v}{0} \tag{C3a}\label{const:untasked-origin} \\
        &\makebox[\widestCleft][l]{$\displaystyle \bigwedge_{v \in \o(\A^t)\cap\P} \xs{v}{0}$} \wedge \bigwedge_{v\in\o(\A^t)\setminus\P} (\xp{v}{0} \wedge \neg\xs{v}{0}) \tag{C3b}\label{const:tasked-origin} \\
        &\makebox[\widestCleft][l]{$\displaystyle \bigwedge_{v \notin \o(\A)} \neg \xs{v}{0}$} \wedge \bigwedge_{v\notin\o(\A)\cup\P} \neg \xp{v}{0} \tag{C3c}\label{const:neither-origin} \\
        &\makebox[\widestCleft][l]{$\displaystyle \bigwedge_{v\notin\S}\neg\xs{v}{T}$} \wedge \bigwedge_{v\notin\P}\neg\xp{v}{T}\tag{C3d}\label{const:ending}
    \end{align*}
    \caption{Pseudo-Boolean formula $\phiT$.}
    \label{fig:formula}
\end{figure}

The anonymized encoding cannot represent two agents at the same (state, vertex, time), and the flow conservation constraints~\eqref{const:po-npv-out}--\eqref{const:pv-in} forbid agent paths from merges or splits. This prevents vertex conflicts within each state, and constraint~\eqref{const:vertex-conflict} is the relevant, cross-state/non-pivot case. Constraint~\eqref{const:swapping-conflict} excludes swapping conflicts by forbidding simultaneous, opposite traversals of an edge in any combination of states. Constraints~\eqref{const:po-npv-out}--\eqref{const:pv-in} tie vertex occupancy to agent movement (an agent is at a vertex at time $t$ iff exactly one incoming movement variable equals $1$ at $t$ and exactly one outgoing variable at $t+1$) across pivot-/station-oriented and pivot/non-pivot vertex cases. Notably,~\eqref{const:pv-in} models an agent's transition between states. Since~\eqref{const:so-npv-in} contains no $y^{\P}$ terms, the transition can occur only at a pivot which, combined with~\eqref{const:ending}, forces every pivot-oriented agent through a pivot by time $T$. Constraints~\eqref{const:untasked-origin}--\eqref{const:neither-origin} set the $t=0$ states of untasked origins, tasked origins, and unoccupied vertices. \eqref{const:ending} requires that, at time $T$, all non-station vertices are unoccupied and no agent is pivot-oriented. 

\begin{proposition}\label{prop:anon-forward}
    If $\phiT$ is satisfiable, then there exists $\Pi$ with $M_{\S}^{\I}(\Pi) \le T$.
\end{proposition}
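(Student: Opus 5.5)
Given a satisfying assignment of $\phiT$, I will decode the anonymized variables into labeled paths by flow decomposition, and then check conflict-freedom and success.

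\textbf{Step 1: the assignment as a flow.} Build a time-expanded network with nodes $(v,t,q)$. Each arc corresponds to a $y^{q}_{u,v,t}=1$, together with the state-switching arcs $y^{\P}_{u,v,t}$ into pivots $v$, which land in state $\S$ by \eqref{const:pv-in}. The conservation constraints \eqref{const:po-npv-out}--\eqref{const:pv-in} say two things. First, each occupied node $(v,t,q)$ with $x^q_{v,t}=1$ has exactly one incoming arc (for $t\ge1$) and exactly one outgoing arc (for $t<T$). Second, unoccupied nodes have none. A boundary check is needed here: \eqref{const:po-npv-out} and \eqref{const:po-npv-in} are only stated for $v\notin\P$, and $x^{\P}$ vanishes at pivots, so no $y^{\P}$ arc leaves a pivot. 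Every nonzero arc therefore joins occupied nodes, and the support is a disjoint union of node-disjoint chains running from time $0$ to time $T$.

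\textbf{Step 2: assigning agents and checking success.} By \eqref{const:untasked-origin}--\eqref{const:neither-origin}, the chains starting at $t=0$ are in bijection with $\o(\A)$. Each untasked agent starts in state $\S$; each tasked agent starts in state $\P$, or in state $\S$ if its origin is a pivot, in which case it has visited a pivot at time $0$. Assign each agent the chain starting at its origin, and project away the state to get $\pi_i:[0..T]\to\V$. Consecutive vertices are equal or adjacent, since $y$ is only defined on $\E$ and on self-loops.

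For a tasked agent starting in state $\P$, \eqref{const:ending} forbids state $\P$ at time $T$. The only arcs from state $\P$ to state $\S$ are the $y^{\P}$ terms in \eqref{const:pv-in}, which enter a pivot; \eqref{const:so-npv-in} has no $y^{\P}$ term. Hence the chain passes through a pivot, so $t_\P(\pi_i)<\infty$. Also by \eqref{const:ending}, $\pi_i(T)\in\S$, so $t_\S(\pi_i)\le T$. This holds trivially because $\pi_i(T)\in\S$, and it gives $M_\S^\I(\Pi)\le T$ at once.

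\textbf{Step 3: conflict-freedom (main obstacle).} Vertex conflicts within the same state are impossible, since the chains are node-disjoint in $(v,t,q)$. At non-pivot vertices, cross-state conflicts are excluded by \eqref{const:vertex-conflict}. At pivots, $x^{\P}_{v,t}$ is identically $0$, so only state $\S$ can occupy them. Swapping conflicts are excluded by \eqref{const:swapping-conflict}, which covers every combination of states.

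The delicate points are the two boundary cases above: the state change at a pivot, and the missing $\P$-conservation equations at pivots. I expect to spend most of the care verifying that the out-degree and in-degree bookkeeping at pivots still yields genuine chains, with no node having in-degree or out-degree $2$ through a mix of $y^{\P}$ and $y^{\S}$ arcs. This follows because \eqref{const:pv-in} equates the single variable $x^{\S}_{v,t}\in\{0,1\}$ to the sum of both kinds of incoming arcs.
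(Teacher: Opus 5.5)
Your proposal is correct and follows essentially the same route as the paper. You decode the satisfying assignment into one path per agent by following the unique active outgoing arc from each origin. You then get conflict-freedom from flow conservation together with \eqref{const:vertex-conflict} and \eqref{const:swapping-conflict}, termination at a station from \eqref{const:ending}, and the pivot visit from the fact that the only $\P$-to-$\S$ transition is through the $y^{\P}$ terms of \eqref{const:pv-in}. Carrying the state $q$ explicitly in the time-expanded network makes the decoding unambiguous from the start, whereas the paper's vertex-only tracing relies on its later single-occupant argument to justify that the outgoing arc is unique.
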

\begin{proof}
    Given a satisfying assignment for $\phiT$, we generate a path $\pi_a$ for each $a\in\A$ as follows. First, set $\pi_a(0)=\o(a)$. Since $\o(a)$ is injective, this gives us $|\A|$ distinct origin locations, i.e. paths. Next, for each $t\in[0..T-1]$, if $\pi_a(t)=u$, find the unique $v$ such that $\yp{u}{v}{t}$ or $\ys{u}{v}{t}$ equals $1$. Then set $\pi_a(t+1)=v$ and repeat until we have identified $\pi_a(T)$.

    To show that $\Pi=\{\pi_a\}_{a\in\A}$ is a solution for $\I$ with $M_{\S}^{\I}(\Pi) \le T$, we will first show that each $\pi_a$ is in fact a valid path from $\pi_a(0)$ to $\pi_a(T)$. Then, we will show that $\Pi$ meets all the criteria for a solution to $\I$ set out in Definition~\ref{def:solution}. As each path in $\Pi$ is of length at most $T$, this achieves the desired result.

    \begin{claim}
        Each $\pi_a$ is a valid path, $\Pi$ is conflict-free.
    \end{claim}
    \noindent
    We first demonstrate that $\Pi$ is free of vertex conflicts. Suppose two distinct paths occupy a vertex $v$ at time $t$. This would require either two agents in different states or two agents in the same state converging on $v$. The former directly violates constraint~\eqref{const:vertex-conflict}. The latter is impossible due to the flow conservation constraints (\ref{const:po-npv-in}, \ref{const:so-npv-in}, \ref{const:pv-in}). Because variables are restricted to $\{0,1\}$ and the vertex variable is constrained to the sum of its incoming edge variables, at most one incoming edge variable can equal $1$ for any given state. Thus paths cannot merge, implying that two paths of the same state cannot occupy the same vertex.
        
    Because we have established that exactly one agent occupies $v$ at $t$, at most one of the outgoing flow constraints (\ref{const:po-npv-out}, \ref{const:so-out}) can be activated. Consequently, constraints~\eqref{const:po-npv-out}-\eqref{const:pv-in} enforce that a vertex variable equals $1$ if and only if exactly one incoming and exactly one outgoing edge variable equal $1$. Thus paths do not branch or spontaneously appear/disappear, making each $\pi_a$ a valid, continuous path.
        
    Finally, suppose a swapping conflict occurs between $u$ and $v$ at time $t$. This requires simultaneous traversals in opposite directions, meaning $(\yp{u}{v}{t} \lor \ys{u}{v}{t}) = 1$ and $(\yp{v}{u}{t} \lor \ys{v}{u}{t}) = 1$, which directly violates constraint~\eqref{const:swapping-conflict}. Thus, $\Pi$ is entirely conflict-free.

    \begin{claim}
        For all $a\in\A$, $\pi_a(0)=\o(a)$ and $\pi_a(T)\in\S$. 
    \end{claim}
    \noindent
    By construction, $\pi_a(0)=\o(a)$ for all agents. Suppose there was $a\in\A$ such that $\pi_a(T)=v$ with $v\notin\S$. Given how we traced $\pi_a$ from the satisfying assignment, this would imply that either $\yp{u}{v}{T}$ or $\ys{u}{v}{T}$ equals $1$ for some $u\in N[v]$. Consequently, either $\xp{v}{T}$ or $\xs{v}{T}$ equals $1$, due to the incoming flow constraints (\ref{const:po-npv-in},\ref{const:so-npv-in},\ref{const:pv-in}). Either case would violate~\eqref{const:ending} since $v\notin\S$, contradicting the assumption that we have a satisfying assignment.

    \begin{claim}
        For all $a\in\A^t$, there exists $p\in\P$ and $t\in[0..T]$ such that $\pi_a(t)=p$. 
    \end{claim}
    \noindent
    Suppose there exists $a \in \A^t$ with $\pi_a(t)\notin\P$ for all $t\in[0..T]$. As $\pi_a(0)$ is set to $\o(a)\notin\P$, we know that $\xp{\o(a)}{0}=1$. Since $a$ never occupies a pivot,~\eqref{const:pv-in} is never ``activated''. That is to say, the flow and vertex variables associated with $a$ are always pivot-oriented, implying that $\xp{v}{T}=1$ where $v=\pi_a(T)$. However, this violates~\eqref{const:ending}, a contradiction.
\end{proof}

\begin{proposition}\label{prop:anon-backward}
    If there exists $\Pi$ with $M_{\S}^{\I}(\Pi) \le T$, then $\phiT$ is satisfiable.
\end{proposition}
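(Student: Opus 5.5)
We plan to prove the statement by building an explicit satisfying assignment of $\phiT$ from the solution $\Pi=\{\pi_a\}_{a\in\A}$. First we normalize $\Pi$ to length exactly $T$. Since $M_\S^\I(\Pi)\le T$, every agent has $t_\S(\pi_a)\le T$, so from time $t_\S(\pi_a)$ on it sits on a fixed station. Truncating all paths at $T$ (or padding with waits if $\Pi$ is shorter) therefore gives a conflict-free motion of length $T$ in which every agent is on a station at time $T$. Truncation keeps every pivot visit, because each visit happens no later than $t_\S(\pi_a)\le T$. The exception is a tasked agent that first reaches a pivot only while already parked at its final station; that station is then itself a pivot, visited at a time $\le T$, so this case is also covered.

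Next we define agent states. For $a\in\A^t$, let $\tau_a=t_\P(\pi_a)\le T$; for $a\in\A^u$, let $\tau_a=0$. Agent $a$ is in state $\P$ at times $t<\tau_a$ and in state $\S$ at times $t\ge\tau_a$. This matches the convention that a pivot-oriented agent never occupies a pivot, and that a tasked agent whose origin is a pivot starts in state $\S$. We then set $x^{q}_{v,t}=1$ iff some agent in state $q$ is at $v$ at time $t$. For movements, we set $y^{q}_{u,v,t}=1$ iff some agent with $\pi_a(t-1)=u$ and $\pi_a(t)=v$ is in state $q$ at time $t-1$. The one case needing care is the transition step $t=\tau_a$: the agent moves from a non-pivot $u$ to a pivot $v$ while still pivot-oriented at $t-1$, and we record it in $y^{\P}_{u,v,t}$. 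This is exactly the $\P$-term that (C2e) allows. All other variables are set to $0$. These choices respect the restricted domains, because at any time $t<\tau_a$ we have $\pi_a(t)\notin\P$.

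Then we verify the constraints group by group. (C1a) holds because $\Pi$ has no vertex conflicts, so at most one agent, in a single state, occupies $v$ at time $t$. (C1b) holds because there are no swapping conflicts; waits are $u=v$ and are not constrained. For (C2a)--(C2e) we use injectivity of $\Pi(t)$: each occupied (vertex, time) pair carries exactly one agent, and that agent has exactly one outgoing move and one incoming move.

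The main obstacle is matching the states on the incoming side. For (C2b) at a non-pivot $v$, an agent in state $\P$ at time $t$ was also in state $\P$ at $t-1$ and came from a non-pivot $u$. For (C2d), an agent in state $\S$ at a non-pivot $v$ at time $t$ was in state $\S$ at $t-1$, since state changes happen only on arrival at a pivot. For (C2e), an agent in state $\S$ at a pivot arrived either already in state $\S$ or as the transition move recorded in $y^{\P}$. We also check (C2c) at pivots, where outgoing moves are all $\S$-oriented.

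Finally, the boundary constraints. (C3a)--(C3c) follow from $\pi_a(0)=\o(a)$ and the state definition at $t=0$. (C3d) follows because at time $T$ every agent is on a station, and every tasked agent has $\tau_a\le T$ and is thus in state $\S$. Hence the assignment satisfies $\phiT$.
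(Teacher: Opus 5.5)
Your proposal is correct and follows the paper's proof. Your state-based assignment, which indexes $y^{q}$ by the agent's state at time $t-1$, coincides exactly with the paper's ($y^{\P}$ iff $t\le\tau_a$, $y^{\S}$ iff $t>\tau_a$), and your constraint-by-constraint verification is the same case analysis. Your explicit normalization of $\Pi$ to length exactly $T$, together with the observation that $t_\P\le t_\S\le T$, usefully fills in steps the paper leaves implicit.
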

\begin{proof}
    To construct a satisfying assignment for $\phiT$, we use each $\pi_a\in\Pi$ to determine which decision variables to set to $1$, with all other variables set to $0$. Using $\tau_a$ to abbreviate the time-at-pivot $t_{\P}(\pi_a)$, assign values as follows:
    \begin{align*}
        \xp{v}{t} = 1 \iff & \exists a \in \A^t \mid \pi_a(t) = v,\ t < \tau_a \\
        \xs{v}{t} = 1 \iff & \exists a \in \A^u \mid \pi_a(t) = v\ \vee \\
        & \exists a \in \A^t \mid \pi_a(t) = v,\ t \ge \tau_a\\
        \yp{u}{v}{t} = 1 \iff & \exists a \in \A^t \mid \pi_a(t-1) = u,\ \pi_a(t) = v,\ t \le \tau_a \\
        \ys{u}{v}{t} = 1 \iff & \exists a \in \A^u \mid \pi_a(t-1) = u,\ \pi_a(t) = v\ \vee \\
        & \exists a \in \A^t \mid \pi_a(t-1) = u,\ \pi_a(t) = v,\ t > \tau_a
    \end{align*}
    \noindent
    We now verify this assignment satisfies all constraints of $\phiT$.
    \begin{claim}
        \eqref{const:vertex-conflict},\eqref{const:swapping-conflict} are satisfied.
    \end{claim}
    \noindent
    Since $\Pi$ has no vertex conflicts, at most one agent occupies $v$ at time $t$ for all $v\in\V$ and $t\in[0..T]$. This single agent cannot be simultaneously tasked and untasked, nor before and after its time-at-pivot, thus our variable mapping will set at most one of $\xp{v}{t}$ and $\xs{v}{t}$ to $1$, satisfying~\eqref{const:vertex-conflict}. Similarly, no two agents simultaneously traverse the same edge $(u,v)$ in opposite directions for all $(u,v)\in\E$. Therefore, the conditions in our mapping for either $\yp{u}{v}{t}$ or $\ys{u}{v}{t}$ to be $1$, and the conditions for either $\yp{v}{u}{t}$ or $\ys{v}{u}{t}$ to be $1$, cannot both be satisfied. Thus at most one of ($\yp{u}{v}{t}$ or $\ys{u}{v}{t}$) and ($\yp{v}{u}{t}$ or $\ys{v}{u}{t}$) can be $1$, satisfying~\eqref{const:swapping-conflict}.

    \begin{claim}
        \eqref{const:po-npv-out}-\eqref{const:pv-in} are satisfied.
    \end{claim}
    \noindent
    As $\Pi$ is a solution, all $\pi_a \in \Pi$ are continuous, conflict-free paths without vertex conflicts. Therefore, a vertex $v$ is visited by an agent $a$ at time $t$ if and only if it is preceded by exactly one edge traversal (or wait action) from some $u \in N[v]$ at $t$ (for $t\ge 1$), and succeeded by exactly one edge traversal (or wait action) to some $w \in N[v]$ at $t+1$ (for $t\le T-1$). We show this fact satisfies~\eqref{const:po-npv-out}-\eqref{const:pv-in} by looking at the three cases for the state of vertex $v$ at time $t$.

    First, if no agent occupies $v$ at time $t$, then our mapping will assign $0$ to $\xp{v}{t},\xs{v}{t}$, as well as to any flow variable containing $v$ as an endpoint (incoming at time $t$, outgoing at time $t+1$). Thus both sides of each of \eqref{const:po-npv-out}-\eqref{const:pv-in} are $0$, and satisfied. Second, if a pivot-oriented agent occupies $v\notin\P$ at time $t$, then the mapping sets $\xp{v}{t}=1$ and $\xs{v}{t}=0$. The incoming flow must therefore be pivot-oriented, not station oriented, as $a\in\A^t$ and this traversal happens before $\tau_a$. Thus~\eqref{const:po-npv-in} evaluates to $1=1$ and~\eqref{const:so-npv-in} to $0=0$. The outgoing flow must similarly be pivot oriented, thus~\eqref{const:po-npv-out} and~\eqref{const:so-out} are satisfied at $1=1$,$0=0$ respectively. 
    
    Third, if a station-oriented agent $a$ occupies $v$ at $t$, we set $\xp{v}{t}=0$ and $\xs{v}{t}=1$. For incoming flow, if $v\notin\P$, then the incoming flow was already station-oriented. Thus~\eqref{const:po-npv-in} evaluates to $0=0$ and~\eqref{const:so-npv-in} to $1=1$. If $v\in\P$, either $a\in\A^t$ and~\eqref{const:pv-in} is $1=1+0$ or $a\in\A^u$ and it is $1=0+1$. We know that the pivot-oriented incoming flow ($a\in\A^t$) can only come from non-pivots because otherwise the flow would not be pivot-oriented. Outgoing flow will all be station-oriented (as $a$ is already station-oriented at time $t$), so~\eqref{const:so-out} and~\eqref{const:po-npv-out} are satisfied at $1=1$, $0=0$ respectively.

    \begin{claim}
        \eqref{const:untasked-origin}-\eqref{const:ending} are satisfied.
    \end{claim}
    \noindent
    For any $v \in \o(\A^u)$, an untasked agent begins at $v$, thus our construction assigns $\xs{v}{0} = 1$ and $\xp{v}{0} = 0$, satisfying~\eqref{const:untasked-origin}. For any $v \in \o(\A^t)$, if $v \in \P$, then the agent $a$ with $\o(a)=v$ immediately has $\tau_a=0$. Thus we assign $\xs{v}{0}=1$. If $v \notin \P$, then $\tau_a > 0$ and we set $\xp{v}{0}=1$ and $\xs{v}{0}=0$. In either case,~\eqref{const:tasked-origin} is satisfied. If $v\in\V$ contains no agent at $t=0$, we set $\xp{v}{0} = \xs{v}{0} = 0$, satisfying~\eqref{const:neither-origin}. Finally, for~\eqref{const:ending}, because $M_{\S}^{\I}(\Pi) \le T$, every agent terminates its path at a station $s \in \S$ by time $T$. Thus no agent occupies any $v \notin \S$ at time $T$, so we assign $\xs{v}{T} = 0$ for all of them. Furthermore, since every $a \in \A^t$ reached a pivot at some time $\tau_a \le T$, no $\xp{v}{T}$ will be set to $1$.
\end{proof}

This proves each direction, giving us the desired equivalence.
\begin{theorem}\label{thm:formula-equivalence}
    Let $\I = (\G, \P, \S, \A^t, \A^u, \o)$ be a PS-MAPF instance and $T \in \mathbb{N}$. The formula $\phiT$ is satisfiable if and only if $\I$ admits a solution $\Pi$ with $M_{\S}^{\I}(\Pi) \le T$.
\end{theorem}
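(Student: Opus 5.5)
The theorem is exactly the conjunction of Proposition~\ref{prop:anon-forward} and Proposition~\ref{prop:anon-backward}, so the plan is to combine the two directions and check that their hypotheses and conclusions match the statement.

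For the ``only if'' direction (satisfiable $\Rightarrow$ solution), I invoke Proposition~\ref{prop:anon-forward}. From a satisfying assignment it extracts $\Pi=\{\pi_a\}_{a\in\A}$ by following, from each origin $\o(a)$, the unique outgoing movement variable at each step. Its three claims give the required properties:
\begin{enumerate*}[label=(\roman*)]
\item the paths are conflict-free;
\item every path ends at a station at time $T$;
\item every tasked agent visits a pivot.
\end{enumerate*}
Hence $\Pi$ is successful in the sense of Definition~\ref{def:solution}, and $t_\S(\pi_a)\le T$ for every agent, so $M_\S^\I(\Pi)\le T$.

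For the ``if'' direction (solution $\Rightarrow$ satisfiable), I invoke Proposition~\ref{prop:anon-backward}. Given $\Pi$ with $M_\S^\I(\Pi)\le T$, it builds the assignment that labels each agent pivot-oriented before its time-at-pivot $\tau_a$ and station-oriented afterwards, and verifies all constraints \eqref{const:vertex-conflict}--\eqref{const:ending}.

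The one bookkeeping point, which I expect to be the main (minor) obstacle, is the length of $\Pi$. A solution may have length $T'\neq T$, while $\phiT$ has horizon exactly $T$.
\begin{enumerate*}[label=(\alph*)]
\item If $T'>T$: since every agent is at its final station from $t_\S(\pi_a)\le T$ onward, truncating $\Pi$ to $[0..T]$ keeps it a solution with the same cost.
\item If $T'<T$: pad every path with wait actions up to length $T$, which preserves conflict-freedom and success.
\end{enumerate*}
Applying Proposition~\ref{prop:anon-backward} to this length-$T$ solution gives satisfiability of $\phiT$. Combining the two directions yields the equivalence.
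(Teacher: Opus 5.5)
Your proposal is correct and follows the paper's route exactly: the paper proves the theorem simply as the conjunction of Propositions~\ref{prop:anon-forward} and~\ref{prop:anon-backward}. Your truncation/padding step makes explicit the length-$T$ assumption that the paper's proof of Proposition~\ref{prop:anon-backward} uses tacitly. The truncation case is sound because $t_\P(\pi_a)\le t_\S(\pi_a)\le T$ for every tasked agent $a$: a later first pivot visit would force the final station to be a pivot already occupied at time $t_\S(\pi_a)$, so no pivot visit is lost.
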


While~\eqref{const:po-npv-out}--\eqref{const:pv-in} could be reduced to propositional logic with cardinality constraints~\cite{cai2019exactlyone, frisch2010atmostone} for a traditional SAT solver, doing so may cause a significant increase in the number of clauses. We thus maintain the pseudo-Boolean structure of $\Phi(T)$ and rely on a solver with lazy clause generation, which dynamically generates Boolean conflict clauses from the sums only when necessary~\cite{ohrimenko2026lcg}.

\subsection{Lower Bounds}

The number of variables in the anonymized encoding does not depend on $|\A|$, so we expect it to outperform a direct encoding in dense instances. However, the algorithm's efficiency depends heavily on a tight lower bound for $T^{\ast}$, as every unsatisfiable query is a full SAT call. We derive one from the pivot bottleneck, assuming that $\A^t \neq \emptyset$, otherwise the instance reduces to Anonymous MAPF and can be solved optimally in polynomial time ~\cite{yu2013multiagent}. For each $a\in\A^t$, let $\dminPar{a}=\min_{p\in\P}d(\o(a),p)$, the distance from its origin to its closest pivot. Sort the $\dminPar{a}$ distances in ascending order to create the sequence $(\dmin{1}, \ldots, \dmin{|\A^t|})$.
\begin{lemma}\label{lem:time-at-pivot-lower-bound}
    Let $\I$ be a PS-MAPF instance with $\A^t\neq 0$. For any solution $\Pi$ of $\I$, let $(\tmin{1}, \tmin{2}, \ldots, \tmin{|\A^t|})$ be the time-at-pivots for tasked agents sorted in ascending order. Then, 
    \begin{align*}
        &\tmin{i} \ge \dmin{i} && \forall i\in\{1,2,\ldots|\A^t|\} \\
        &\tmin{i}\ge \tmin{i-|\P|}+1 && \forall i\in\{|\P|+1,\ldots |\A^t|\}.
    \end{align*}
\end{lemma}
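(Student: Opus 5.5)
The lemma has two parts, and I will prove them separately. Both are elementary counting arguments on the time-at-pivots $t_\P(\pi_a)$ of the tasked agents.

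For the first inequality, I use the fact that each agent moves at most one edge per time step. Hence, for every $a\in\A^t$, we have $t_\P(\pi_a)\ge \dminPar{a}$, since reaching any pivot takes at least $\min_{p\in\P} d(\o(a),p)$ steps.

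Sorting does not by itself give $\tmin{i}\ge\dmin{i}$, because the $i$-th smallest time need not belong to the agent with the $i$-th smallest distance. I therefore argue by counting:

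1. Fix $i$ and consider the $i$ agents whose time-at-pivots are $\tmin{1},\dots,\tmin{i}$.
2. Each of these agents $a$ satisfies $\dminPar{a}\le t_\P(\pi_a)\le \tmin{i}$.
3. So at least $i$ of the values $\dminPar{a}$ are at most $\tmin{i}$.
4. Consequently the $i$-th smallest of them, namely $\dmin{i}$, is at most $\tmin{i}$.

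For the second inequality, the key observation is that at any time $t$, at most $|\P|$ agents can have time-at-pivot exactly equal to $t$. Such an agent occupies a pivot at time $t$, and vertex conflicts are forbidden, so at time $t$ each pivot holds at most one agent.

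Now suppose, for contradiction, that $\tmin{i}\le \tmin{i-|\P|}$ for some $i>|\P|$. Since the sequence is sorted, this forces $\tmin{i-|\P|}=\dots=\tmin{i}$. That gives $|\P|+1$ distinct tasked agents all first reaching a pivot at the same time, which contradicts the observation. Hence $\tmin{i}\ge \tmin{i-|\P|}+1$.

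Nothing here is a real obstacle. The only points needing care are the ones handled above: in the first part, the sorted time order and the sorted distance order need not match, which the counting argument resolves; and in the second part, the distinct agents, not distinct times, must be what is counted against the $|\P|$ pivots, which vertex-conflict-freeness justifies. Ties in the sorting are harmless in both arguments.
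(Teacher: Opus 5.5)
Your proof is correct and takes essentially the same route as the paper. The first inequality comes from counting: the $i$ agents realizing $t_1,\dots,t_i$ give $i$ distance values $\delta(a)\le t_i$. The second comes from the pigeonhole observation that vertex-conflict-freeness lets at most $|\mathcal{P}|$ tasked agents first reach a pivot at the same time step. Your direct ``at most'' phrasing of the first part is slightly cleaner than the paper's contradiction version, which speaks of ``$i-1$ time-at-pivots smaller than $t_i$'' and so glosses over ties.
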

\begin{proof}
    For the first statement, consider a fixed $i\in\{1,2,\ldots|\A^t|\}$ and suppose that $\tmin{i}<\dmin{i}$. Certainly the time-at-pivot of any agent $a$ is at least $\dminPar{a}$, as an agent can only travel across one edge per time step. Furthermore, there are $i-1$ time-at-pivots smaller than $\tmin{i}$, by definition. Therefore, $i$ of the $\dminPar{a}$ values are smaller than $\tmin{i}$ (and $\dmin{i}$ as well, by extension). However, this contradicts the definition of $\dmin{i}$.

    For the second statement, observe that the set of pivots can ``process'' at most $|\P|$ agents per time step, else there would be a vertex conflict. Thus $|\P|+1$ agents cannot share the exact same time-at-pivot, i.e. the $i$-th agent to arrive at a pivot must arrive strictly later than the $(i-|\P|)$-th agent.
\end{proof}

These lower bounds depend recursively on the actual time-at-pivots, making them difficult to apply directly. To construct an explicit, independently computable lower bound, we define the sequence $\lb{i}$ for $i \in[1..|\A^t|]$ as $\lb{i} = \dmin{i}$ if $i \le |\P|$, and $\lb{i} = \max(\dmin{i}, \lb{i-|\P|} + 1)$ if $i >|\P|$.

\begin{lemma}\label{lem:lower-bound-sequence}
    Let $\I$ be a PS-MAPF instance with $\A^t\neq 0$. For any solution $\Pi$, let $(\tmin{1}, \tmin{2}, \ldots, \tmin{|\A^t|})$ be the time-at-pivots for tasked agents sorted in ascending order. Then $\tmin{i}\ge\lb{i}$ for all $i\in \{1,2,\ldots, |\A^t|\}$.
\end{lemma}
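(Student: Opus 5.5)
The plan is strong induction on $i$, using Lemma~\ref{lem:time-at-pivot-lower-bound} as the only ingredient. We compare the recursive definition of $\lb{i}$ with the two inequalities of that lemma, which hold for the sorted time-at-pivots $(\tmin{1},\dots,\tmin{|\A^t|})$ of any solution $\Pi$. Fix such a $\Pi$. All tasked agents have finite time-at-pivot because $\Pi$ is successful, so the sorted sequence is well defined.

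\emph{Base case} $i\le|\P|$. Here $\lb{i}=\dmin{i}$ by definition. The first inequality of Lemma~\ref{lem:time-at-pivot-lower-bound} gives $\tmin{i}\ge\dmin{i}=\lb{i}$ directly.

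\emph{Inductive step} $i>|\P|$. Assume $\tmin{k}\ge\lb{k}$ for all $k<i$. Since $i-|\P|\ge 1$ and $i-|\P|<i$, the hypothesis applies to $k=i-|\P|$. The second inequality of Lemma~\ref{lem:time-at-pivot-lower-bound} then gives
\[
\tmin{i}\ \ge\ \tmin{i-|\P|}+1\ \ge\ \lb{i-|\P|}+1 .
\]
The first inequality of the same lemma gives $\tmin{i}\ge\dmin{i}$. Combining the two,
\[
\tmin{i}\ \ge\ \max\bigl(\dmin{i},\ \lb{i-|\P|}+1\bigr)\ =\ \lb{i}.
\]

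There is no real obstacle; only two bookkeeping points need care. First, the recursion for $\lb{i}$ refers only to the smaller index $i-|\P|\ge 1$, so strong induction (or induction over the residue classes of $i$ modulo $|\P|$) is well founded. Second, both the $\dmin{\cdot}$ and $\tmin{\cdot}$ sequences are sorted ascending and indexed over the same range $[1..|\A^t|]$, which is exactly the setting of Lemma~\ref{lem:time-at-pivot-lower-bound}.
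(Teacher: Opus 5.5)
Your proof is correct and matches the paper's argument: strong induction on $i$. The base case $i\le|\P|$ follows from $\tmin{i}\ge\dmin{i}=\lb{i}$. The inductive step combines both inequalities of Lemma~\ref{lem:time-at-pivot-lower-bound} with the hypothesis at index $i-|\P|$ to obtain $\tmin{i}\ge\max(\dmin{i},\lb{i-|\P|}+1)=\lb{i}$.
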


\begin{proof}
    We proceed by induction on $i$. The base cases ($i \le |\P|$) follow from Lemma~\ref{lem:time-at-pivot-lower-bound} and the fact that $\lb{i}=\dmin{i}$ in this case. Now fix some $j>|\P|$, and assume that $\tmin{i}\ge\lb{i}$ for all $i<j$; we will show that $\tmin{j}\ge\lb{j}$. By Lemma~\ref{lem:time-at-pivot-lower-bound}, we have that $\tmin{j}\ge\dmin{j}$ and $\tmin{j}\ge\tmin{j-|\P|}+1$. As $\tmin{j-|\P|}\ge\lb{j-|\P|}$ by the inductive hypothesis,
    \[\tmin{j}\ge\max(\dmin{j},\lb{j-|\P|}+1)=\lb{j}.\]
\end{proof}

\begin{theorem}\label{thm:station-makespan-lb}
    For every solution $\Pi$ of a PS-MAPF instance $\I$ with $\A^t \neq \emptyset$,
    \[M_{\S}^{\I}(\Pi) \ge\lb{|\A^t|}+\min_{p\in\P,s\in\S}d(p,s).\]
\end{theorem}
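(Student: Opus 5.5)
The plan is to single out the tasked agent that reaches a pivot last and to bound its time-at-station from below. Fix a solution $\Pi=\{\pi_a\}_{a\in\A}$ of length $T$. Sort the times-at-pivot of the tasked agents in ascending order as $(\tmin{1},\ldots,\tmin{|\A^t|})$, exactly as in Lemma~\ref{lem:lower-bound-sequence}. Let $a^\ast\in\A^t$ be an agent realizing the largest value, so that $t_\P(\pi_{a^\ast})=\tmin{|\A^t|}$. Lemma~\ref{lem:lower-bound-sequence} applied with $i=|\A^t|$ then gives $t_\P(\pi_{a^\ast})\ge \lb{|\A^t|}$.

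Next I would relate $t_\S(\pi_{a^\ast})$ to $t_\P(\pi_{a^\ast})$. Let $p=\pi_{a^\ast}(t_\P(\pi_{a^\ast}))\in\P$ and $t'=t_\S(\pi_{a^\ast})$, and let $s=\pi_{a^\ast}(t')\in\S$ be the station at which $a^\ast$ remains from $t'$ onward.

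\emph{Case $t'\ge t_\P(\pi_{a^\ast})$.} The restriction of $\pi_{a^\ast}$ to $[t_\P(\pi_{a^\ast})..t']$ is a path from $p$ to $s$ in which each step either moves along an edge or waits. Hence $t'-t_\P(\pi_{a^\ast})\ge d(p,s)\ge \min_{p'\in\P,s'\in\S}d(p',s')$.

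\emph{Case $t'<t_\P(\pi_{a^\ast})$.} The agent would stay at $s$ from $t'$ to $T$, so $p=s$. This forces $\min d=0$, and also $t'<t_\P$ contradicts $t_\P$ being the first pivot visit after time $t'$ unless $s=p$ was already visited at time $t'$. In that case $t_\P\le t'$, which is a contradiction. So this case is impossible, or at worst harmless because the bound $t'\ge t_\P\ge \lb{|\A^t|}$ then holds trivially.

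Combining the two steps gives
\[
M_\S^{\I}(\Pi)\ \ge\ t_\S(\pi_{a^\ast})\ \ge\ \lb{|\A^t|}+\min_{p\in\P,s\in\S}d(p,s).
\]

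The only delicate point is this ordering between $t_\P$ and $t_\S$. I expect it to be handled by the observation that the agent occupies $s$ continuously from $t'$ to $T$, so if it were at station $s$ before first touching a pivot, it could never leave $s$. It would then have to reach a pivot by time $t'$ (with $s\in\P$), which contradicts $t'<t_\P$. The rest of the argument is direct from Lemma~\ref{lem:lower-bound-sequence} and the definition of distance.
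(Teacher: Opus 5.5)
Your proposal is correct and follows essentially the paper's argument: take the tasked agent with the largest time-at-pivot, bound that time below by $\lb{|\A^t|}$ via Lemma~\ref{lem:lower-bound-sequence}, and add at least $\min_{p\in\P,s\in\S}d(p,s)$ further steps to its terminal station. The paper also proves that $\lb{i}$ is non-decreasing, a step your direct application of the lemma at $i=|\A^t|$ shows is unnecessary. Your explicit check that $t_\S(\pi_{a^\ast})\ge t_\P(\pi_{a^\ast})$ fills in a step the paper glosses over. However, drop the hedge ``or at worst harmless'': your contradiction argument already rules that case out, and the inequality $t'\ge t_\P$ the hedge invokes is exactly what fails in that case.
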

\begin{proof}
    First, we show that $\lb{i}$ is an monotone non-decreasing sequence. This implies no element in the sequence is larger than $\lb{|\A^t|}$, thus some tasked agent has a time-at-pivot of at least $\lb{|\A^t|}$. So suppose instead that $i$ is the first index at which $\lb{i}>\lb{i+1}$. The $\dmin{i}$ sequence is sorted, meaning $\dmin{i}\le\dmin{i+1}$, leaving the case that $i>|\P|$. We know that $\lb{i}=\max(\dmin{i}, \lb{i-|\P|} + 1)$ must take the value $\lb{i-|\P|} + 1$, else $\lb{i+1}\ge\dmin{i+1}$ would be greater than or equal to it. That is,
    \[\lb{i-|\P|} + 1=\lb{i}>\lb{i+1}\ge \lb{i+1-|\P|} + 1.\]
    Subtracting one from both sides gives us that $\lb{i-|\P|}>\lb{i+1-|\P|}$, contradicting the assumption that $i$ is the first decreasing index, as desired.

    The tasked agent which first reaches a pivot no earlier than $\lb{|\A^t|}$ must still reach the station at which it terminates, requiring at least $\min_{p\in\P,s\in\S}d(p,s)$ further steps. 
\end{proof}

%% file: Sections/experimental.tex
\section{Experimental Study}
\label{sec:experimental_study}
Our experiments address two questions: on small grid instances where the optimal solver is viable, we map the empirical sources of difficulty of PS-MAPF, the tightness of the lower bound of Thm.~\ref{thm:station-makespan-lb}, and PPP's solution quality; on standard MAPF benchmarks probing those regimes at scale, we compare PPP against BA across five priority orderings and characterize PPP's failure modes. 

The optimal solver was implemented using the C++ interface of CP-SAT~\cite{cpsatlp}, while PPP and BA were programmed in Python 3.11. Experiments were run on a Linux cluster, with each Slurm job allocated 16 cores and 64 GB of RAM. Source code, preprocessing scripts, and generated instances will be made publicly available.

\begin{figure*}[!t]
    \centering
    \begin{subfigure}[t]{0.46\textwidth}
        \centering
        \includegraphics[width=\linewidth]{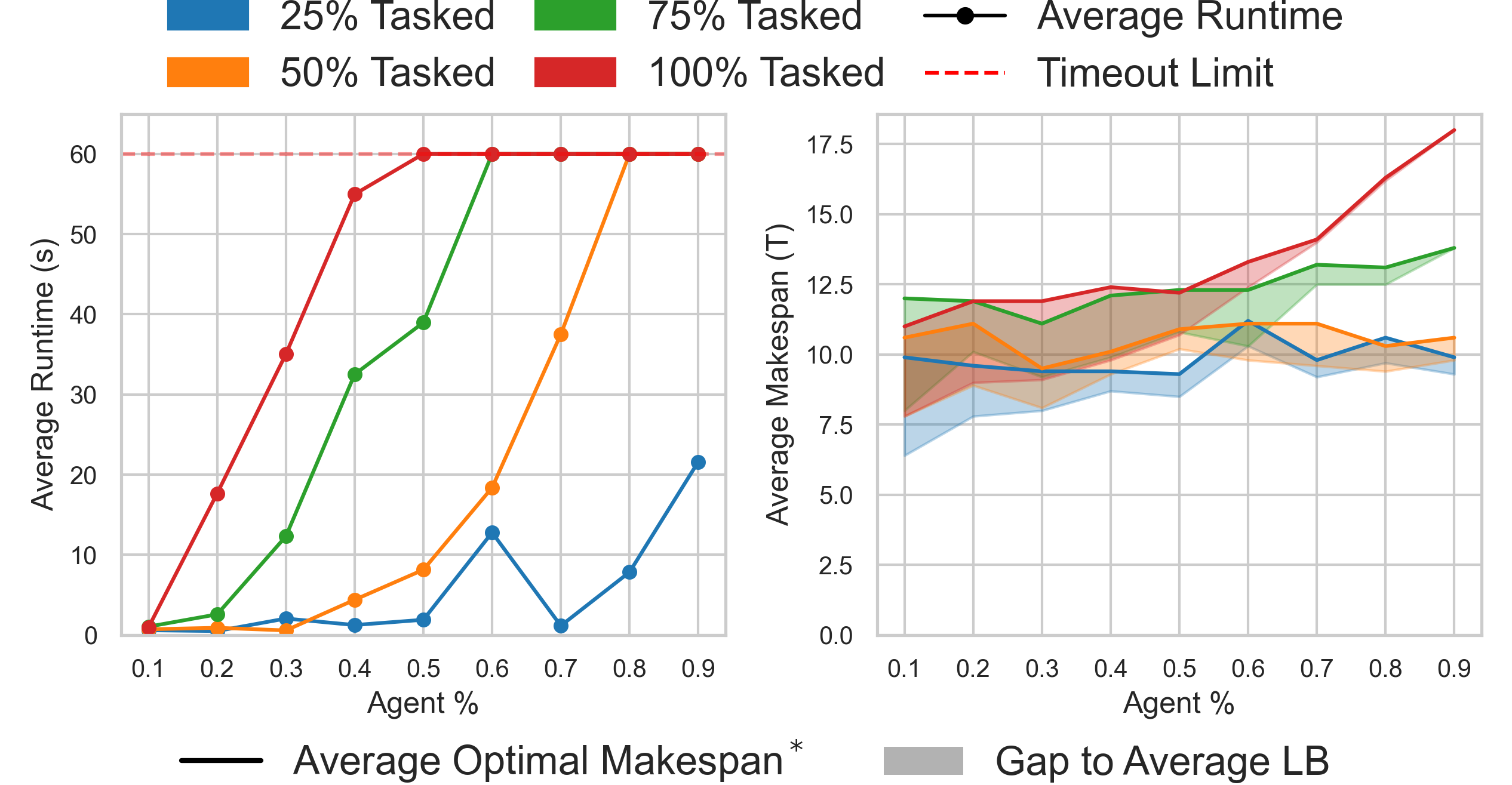}
        \caption{}
        \label{fig:agents_and_tasked}
    \end{subfigure}
    \hfill
    \begin{subfigure}[t]{0.46\textwidth}
        \centering
        \includegraphics[width=\linewidth]{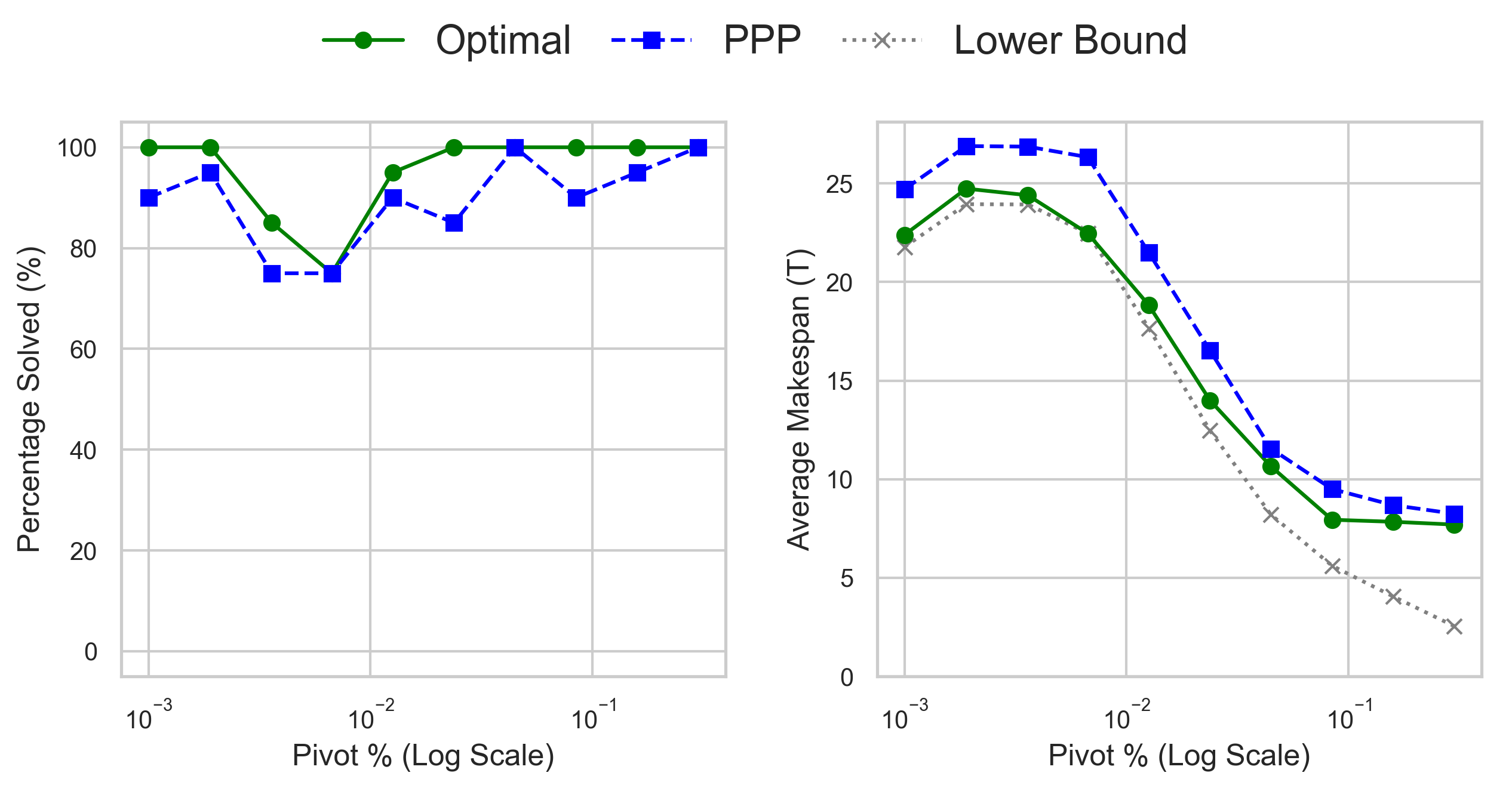}
        \caption{}
        \label{fig:pivots}
    \end{subfigure}
    \caption{(a) Optimal-solver performance over 10 feasible $16\times16$ instances ($\text{Obstacle}\%=0.2$) vs.\ agent density, by tasked percentage ($\text{Station}\%=\text{Agent}\%$, $\text{Pivot}\%=0.05$); (b) Optimal-solver and PPP performances over 20 well-formed $16\times16$ instances ($\text{Obstacle}\%=0.2$) vs.\ pivot density ($\text{Agent}\%=\text{Station}\%=0.15$, $\text{Tasked}\%=0.5$). 60-sec timeout.
    }
    \label{fig:solver-densities}
\end{figure*}

\begin{figure*}[t]
    \centering
    \begin{subfigure}[b]{\linewidth}
        \centering
        \includegraphics[width=0.85\linewidth]{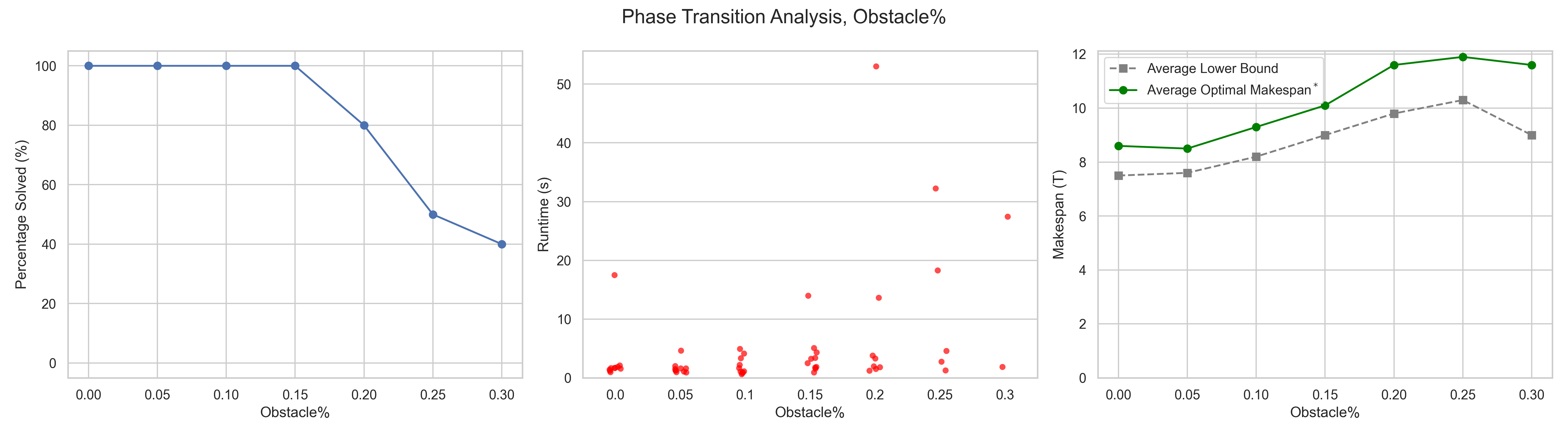}
        \caption{Performance vs Obstacle \%.}
        \label{fig:obstacles}
    \end{subfigure}
    \\[1ex]
    \begin{subfigure}[b]{\linewidth}
        \centering
        \includegraphics[width=0.85\linewidth]{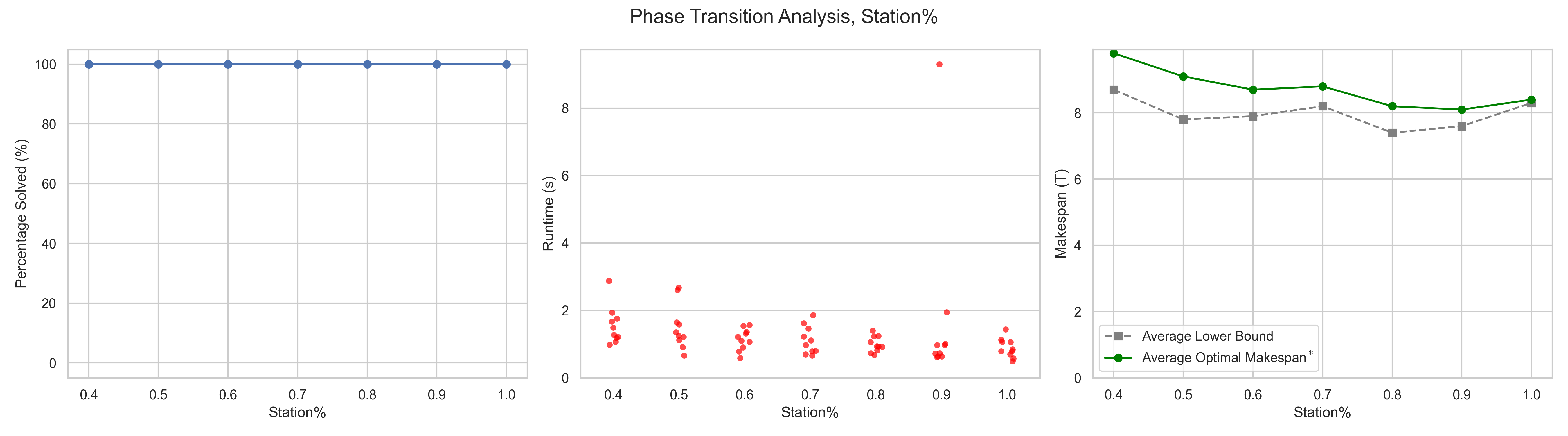}
        \caption{Performance vs Station \%.}
        \label{fig:stations}
    \end{subfigure}
    \\[1ex]
    \begin{subfigure}[b]{\linewidth}
        \centering
        \includegraphics[width=0.85\linewidth]{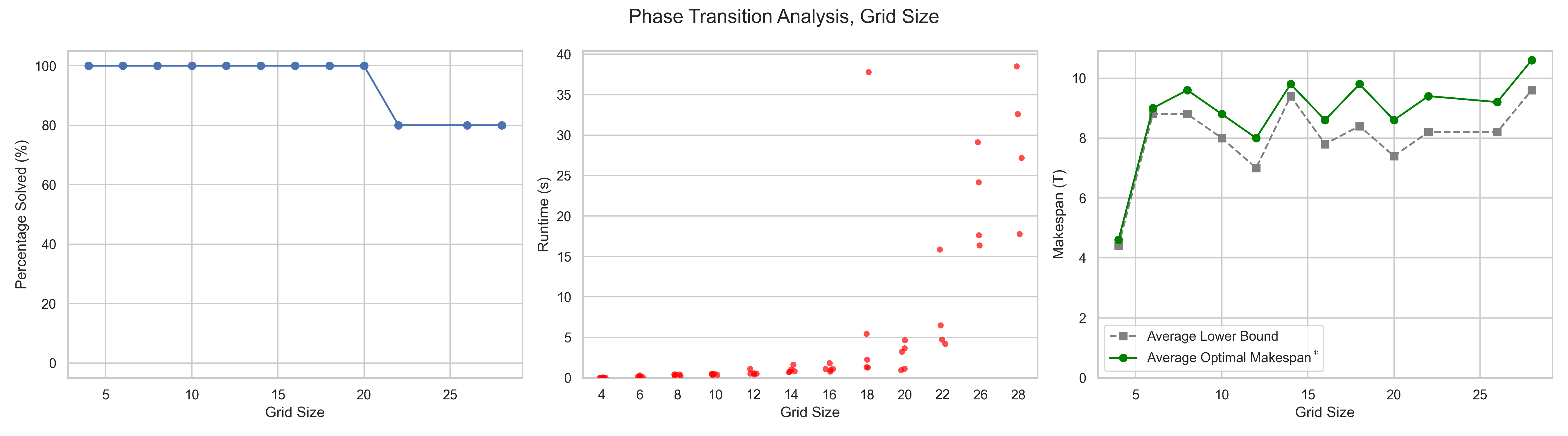}
        \caption{Performance vs Grid Size}
        \label{fig:grid-size}
    \end{subfigure}
    \label{fig:combined_performance}
\end{figure*}

\begin{table*}[!t]
  \centering
  \small
  \setlength{\tabcolsep}{1mm}
  \begin{tabular}{lrrrrrrrrrrrr}
    \toprule
    \multirow{2}{*}{\textbf{Solver}} & \multicolumn{4}{c}{\textbf{empty-48-48}} & \multicolumn{4}{c}{\textbf{random-64-64-10}} & \multicolumn{4}{c}{\textbf{warehouse-10-20-10-2-1}} \\
    \cmidrule(lr){2-5} \cmidrule(lr){6-9} \cmidrule(lr){10-13}
    & \textbf{Runtime} & $\mathbf{M_\S^\I}$ & $\mathbf{\Sigma_\S^\I}$ & \textbf{Success}
    & \textbf{Runtime} & $\mathbf{M_\S^\I}$ & $\mathbf{\Sigma_\S^\I}$ & \textbf{Success}
    & \textbf{Runtime} & $\mathbf{M_\S^\I}$ & $\mathbf{\Sigma_\S^\I}$ & \textbf{Success}\\
    \midrule
    BA    & 22.29 & 3859.72 & 391533.70 & 100.0 & 64.78 & 6326.42 & 750692.14 & 100.0 & 282.32 & 23933.24 & 5740934.78 & 100.0 \\
    \addlinespace
    LH    & 8.76   & 31.33     & 2428.33     & 87.0 & 25.33 & 42.54     & 3855.11     &  84.0 & 141.53   & 91.90       & 17000.67     & \textbf{81.0} \\
    SH    & 8.67   & \textbf{30.88}     & 2387.69      & 86.5   & 25.08 & 42.85     & 3844.63    & \textbf{85.0} & 151.12   & 92.17       & 16681.59     & 78.5\\
    RND & \textbf{8.42}   & 30.98     & \textbf{2365.56}     & 83.5   & 24.70 & 42.35     & 3811.54     & 82.0 & 148.66   & 91.22      & 16876.49     & 78.0 \\
    LC    & 8.91   & 31.17     & 2383.30     & 84.5   & 25.41 & 42.90     & \textbf{3773.30}     & 81.0 & 143.45   & \textbf{90.00}      & \textbf{15983.05}     & 74.5 \\
    MC   & 8.82   & 31.38     & 2452.07     &  \textbf{89.0}   & \textbf{23.44} & \textbf{42.25}    & 3842.34     & 83.5 & \textbf{137.76}   & 91.74      & 16915.78     & 80.5 \\
    \bottomrule
  \end{tabular}
  \caption{Average Runtime (s), Station-Makespan ($M_\S^\I$), Station-Flowtime ($\Sigma_\S^\I$) and Success Rate (\%); for fairness, averages are over instances solved by all six algorithms. Best results among the PPP orderings are in bold. No time limit was imposed.}
  \label{tab:metrics}
\end{table*}

\subsection{The SAT-Based Optimal Solver}
\label{subsec:optimal-experiments}
Instances are generated by randomly assigning vertices on a $16\times 16$ grid to satisfy five parameters: the percentage of grid vertices that are blocked ($\text{Obstacle\%=0.2}$), the percentage of unblocked vertices containing an agent origin (Agent\%), the percentage of tasked agents (Tasked\%), and the percentages of unblocked vertices designated as stations (Station\%, at least Agent\%) and as pivots (Pivot\%). To understand the impact of agent density, we generate 10 feasible instances per configuration, maintaining $\text{Station}\% \ge \text{Agent}\%$ by setting them equal. We evaluate only the optimal algorithm here, as generating the well-formed instances required by PPP becomes prohibitively difficult at high agent densities. For pivot density, we compare the performance of all three solvers and fix $\text{Agent}\%=\text{Station}\%=0.15$ so that it is easier to find well-formed instances for PPP. We sweep pivot density geometrically from 0.1\% to 30\%, generating 20 well-formed instances per data point and excluding instances unsolved by PPP from the makespan averages. We use a 60-sec timeout. 

Instances with a large gap between $T^{\ast}$ and the bound of Thm.~\ref{thm:station-makespan-lb} are likelier to time out; discarding them would inflate the bound's apparent tightness, so unsolved instances are plotted at the highest $T$ reached before timeout.

In the agent-density experiment, Figure~\ref{fig:solver-densities}(a) shows that performance depends on the combination of Agent\% and Tasked\%: runtimes spike at 50\% agent density when all agents are tasked, while instances with 25\% tasked agents remain tractable up to 90\% density. An instance with 90\% density and 25\% tasked agents has significantly higher average runtime than one with 30\% density and 75\% tasked agents, despite both having 22.5\% of unblocked vertices as tasked origins: the number of untasked agents (67.5\% vs.\ 7.5\%) does, in fact, affect difficulty.

We similarly analyze non-agent or pivot parameters. Unsurprisingly, higher obstacle densities increase difficulty. Generating feasible instances becomes prohibitively difficult beyond 30\% obstacles on $16\times16$ grids. Algorithm solve rates hold at 100\% for obstacle densities up to 15\%, but decline sharply to just 40\% at 30\% density. Conversely, increasing station density reduces problem difficulty. With $\text{Obstacle}\%=0.1$ and $\text{Agent}\%=0.4$, we solve all instances for station densities from 40\% to 100\%. Runtimes remain low throughout, and the average optimal makespan decreases linearly from 9.8 (at 40\% density) to 8.4 (at 100\% density). In terms of scalability, our algorithm efficiently solves 100\% of 5 feasible instances on grids up to $20\times20$, and maintains an 80\% success rate on $28\times28$ grids.

An increase in pivot density is most impactful for small $|\P|$ (1--10\% density), as the gains in success rate and the reductions in makespan level off past that point. BA failed to solve any instance within the 60-sec limit, likely bottlenecked by repeated maximum-flow resolutions during its constructive phases. However, PPP excelled across all jointly solved instances, averaging a runtime of just 0.359 sec compared to 14.185 sec for the SAT solver and maintaining an average makespan gap strictly between 0.55 and 2.73 steps.

\subsection{PPP vs. Baseline Algorithm}
\label{subsec:ppp-experiments}
PPP uses a given priority order for the tasked agents, and we evaluate several well-known ordering heuristics \cite{ma2019searching, bennewitz2001optimizing}: \emph{SH (Shortest path first)} prioritizes agents with shorter shortest paths to a pivot, \emph{LH (Longest path first)} the opposite, and \emph{RND (Random)} orders agents uniformly at random (fixed seed 42 for reproducibility). We also evaluate two conflict-based orderings: for each tasked agent, we generate a static, shortest path to a pivot and count the spatio-temporal (vertex and edge) collisions it shares with the others' static paths; \emph{LC (Least Conflicts first)} prioritizes agents with fewer initial conflicts, \emph{MC (Most Conflicts first)} those with more.

We perform experiments on three maps from the Moving AI benchmark \cite{stern2019mapf}: empty $48 \times 48$ grids, $64 \times 64$ grids with 10\% obstacle density, and warehouse maps, covering diverse topologies and congestion levels. For each map, we generated 200 well-formed instances with number of pivots in $\{1, 5, 10, 20, 30\}$, proportion of tasked agents in $\{10\%, 25\%, 50\%, 75\%, 100\%\}$, and total number of agents over eight evenly spaced values, from 1\% of the map's free cells to the maximum multiple of 10 yielding a well-formed instance: 150 (empty), 170 (random), 340 (warehouse).

Table \ref{tab:metrics} presents the evaluated cost metrics and success rates. BA is complete but incurs prohibitive computational and qualitative costs; PPP sacrifices completeness for massive efficiency gains, maintaining a 74--89\% success rate while drastically outperforming BA across all metrics and solving warehouse instances in half the time with orders-of-magnitude lower makespan and flowtime. Among the heuristics, no ordering dominates, but \emph{MC} is the strongest overall, achieving the lowest average runtimes on the two denser maps and the highest success rate on the empty map, while remaining competitive on every metric elsewhere.
PPP's failures concentrate in extreme bottleneck scenarios (e.g., 1--5 pivots, 75--100\% tasked agents): Phase 1 always completes on our instances but saturates $\mathcal{R}$, and this  prevents the Phase 2 flow from routing all agents within the allowed horizon, making all heuristics fail simultaneously at critical density thresholds, though \emph{MC} resolves borderline instances where the others get stuck.

%% file: Sections/conclusions.tex
\section{Conclusion and Future Work}
We introduced PS-MAPF, a MAPF variant in which tasked agents must reach an anonymous pivot before the entire fleet terminates at anonymous stations. We characterized solvability completely, proved that minimizing station-makespan and station-flowtime is \NP-hard already with a single pivot, and presented three algorithms: a complete, low-quality baseline, a SAT-based makespan-optimal solver, and the incomplete but fast Pivot-Prioritized Planning (PPP), with solutions orders of magnitude better than the baseline. Future work includes pivot-based objectives, domain-specific solvers, and adapting state-of-the-art MAPF techniques to PS-MAPF.